\documentclass{article}

\usepackage{arxiv_style,times}
\usepackage{amsmath,amssymb,amsthm,mathtools}
\usepackage{algorithm}
\usepackage[noend]{algpseudocode}
\usepackage{array,booktabs}
\usepackage{graphicx}
\usepackage{enumitem}
\usepackage{url}
\usepackage[colorlinks=true,citecolor=blue,linkcolor=blue,urlcolor=blue]{hyperref}
\usepackage[nameinlink,capitalise]{cleveref}

\title{Nonparametric Contextual Pricing and Inventory Learning under Censored Demand}
\author{%
Zean Han\textsuperscript{*} \quad
Jing Liang\textsuperscript{*} \quad
Ruihan Lin \quad
Zezhen Ding\textsuperscript{\textdagger} \quad
Jiheng Zhang\\[0.4em]
{\normalfont Department of Industrial Engineering and Decision Analytics}\\
{\normalfont The Hong Kong University of Science and Technology}\\[0.4em]
{\normalfont\small\texttt{zhanax@connect.ust.hk}} \quad
{\normalfont\small\texttt{jliangcd@connect.ust.hk}} \quad
{\normalfont\small\texttt{rlinah@connect.ust.hk}}\\
{\normalfont\small\texttt{zdingah@connect.ust.hk}} \quad
{\normalfont\small\texttt{jiheng@ust.hk}}
}
\date{}

\newtheorem{theorem}{Theorem}
\newtheorem{corollary}{Corollary}
\newtheorem{proposition}{Proposition}
\newtheorem{lemma}{Lemma}
\newtheorem{assumption}{Assumption}
\newtheorem{definition}{Definition}
\newtheorem{remark}{Remark}
\crefname{assumption}{Assumption}{Assumptions}
\Crefname{assumption}{Assumption}{Assumptions}
\makeatletter
\providecommand{\theHALG@line}{}
\renewcommand{\theHALG@line}{\thealgorithm.\arabic{ALG@line}}
\makeatother

\newcommand{\R}{\mathbb{R}}
\newcommand{\E}{\mathbb{E}}
\newcommand{\Pbb}{\mathbb{P}}
\newcommand{\cX}{\mathcal{X}}
\newcommand{\cA}{\mathcal{A}}
\newcommand{\cI}{\mathcal{I}}
\newcommand{\cP}{\mathcal{P}}
\newcommand{\Reg}{\mathsf{Reg}}
\newcommand{\argmax}{\operatorname*{arg\,max}}
\newcommand{\argmin}{\operatorname*{arg\,min}}
\newcommand{\polylog}{\operatorname{polylog}}
\newcommand{\one}{\mathbf{1}}
\newcommand{\norm}[1]{\left\lVert #1 \right\rVert}
\newcommand{\ProxyQ}{\overline Q}
\newcommand{\ProxyG}{\overline G}

\begin{document}
\maketitle
\begin{NoHyper}
\begingroup
\renewcommand{\thefootnote}{}
\footnotetext{\textsuperscript{*} Equal contribution. \quad
\textsuperscript{\textdagger} Corresponding author.}
\endgroup
\end{NoHyper}
\vspace{-0.35in}

\begin{abstract}
In online retailing, when a product sells out, a retailer often sees only the units sold, not how many customers
would have bought it had inventory been available.  However, the inventory level determines how
much demand is revealed, and this information can influence subsequent decisions and future profits.
We study an online selling problem in which, in each round, the
seller observes a market context and then makes pricing and stocking decisions based on censored
sales data from previous rounds.  The challenge is to learn a context-dependent pricing and
stocking policy without assuming a particular formula for demand or observing realized profit.
To overcome this difficulty, we propose a Mean-Calibrated Kernel UCB (MCK-UCB) algorithm that
turns each incomplete sales record into a reliable guide for both inventory and price decisions,
using data from past rounds with similar market conditions.  This design allows us to
learn while serving customers, without a separate exploration phase or the need to recover all
demand hidden by stockouts.  We prove the minimax optimality of the proposed algorithm, with
strictly faster rates when expected profit varies more smoothly with price.  Comprehensive numerical
experiments have been conducted to confirm the effectiveness of the proposed algorithm.
\end{abstract}

\section{Introduction}

Online retailers must repeatedly decide how much to charge and how much
inventory to make available, often while demand is still being learned.  When
a product sells out, the retailer typically sees only the units sold, not how
many customers would have bought the product had inventory been available.
The sales record is therefore incomplete precisely when demand is strongest.
Moreover, inventory affects not only current revenue but also what the seller
can learn for future decisions.  This missing-demand problem is a familiar
obstacle in lost-sales settings
\citep{besbes2013censoring,chen2021nonparametric,chen2024optimal}.

The problem becomes more realistic, and harder, when market conditions change.
The same price may be attractive in one season, product category, or local
market and unattractive in another.  These market contexts arrive over time,
and the seller cannot know in advance how customers in each context would
respond to every possible price and inventory level.  At the same time, the
sales records available for learning are shaped by the seller's own earlier
stocking decisions: whenever inventory was too low, part of the demand was
hidden.

Prior work has addressed important parts of this problem.  Censored
pricing--inventory models have been studied when the seller repeatedly faces
the same demand environment
\citep{chen2021nonparametric,chen2024optimal}.  Feature-based inventory
models show how censored sales can guide stocking decisions
\citep{ding2024feature}.  Recent contextual pricing--inventory studies also
use rich market information, but often start from historical data available
before the policy is deployed \citep{tang2025offline,hu2026conditional}.  The
online case considered here is different: the seller must learn from the
censored records produced by its own past price and inventory decisions.

We study this problem in a nonparametric contextual setting, where demand may
vary flexibly with both context and price.  The seller cannot pool all past
sales as if they came from the same market condition, but also cannot run a
separate experiment to reveal the full demand curve at every context.  This
leads to the following central question:

\emph{How can a seller learn a context-dependent pricing and stocking policy
when demand is censored by inventory decisions and no parametric demand model
is available?}

We answer this question through three contributions.

\begin{enumerate}[leftmargin=2em]

  \item \textbf{An observable learning signal under sales censoring.}
  Sales-only feedback creates an identification problem: raw sales cannot be
  directly interpreted as either true demand or realized profit.  We overcome
  this difficulty by constructing an observable learning signal from the sales
  record and end-of-period inventory.  The signal uses stockouts and leftover
  inventory to guide stocking decisions, and uses the consistently visible
  part of sales as a reference for comparing prices.  Intuitively, the seller
  need not recover the demand hidden by stockouts; it only needs the
  decision-relevant information that remains visible.

  \item \textbf{Context-aware learning with censored sales.}
  Building on this observable signal, we develop the Mean-Calibrated Kernel
  UCB (MCK-UCB) algorithm for joint pricing and stocking.  The algorithm pools
  observations only across nearby market contexts, profiles inventory where
  needed, and learns prices from the same censored sales collected while
  serving customers.  This lets the seller adapt to changing market conditions
  without running a separate demand-revealing experiment for every context.

  \item \textbf{Minimax-optimal regret rates.}
  We prove matching upper and lower bounds for the proposed method.  Under
  Lipschitz price values, the minimax regret is
  \(\widetilde\Theta\!\left(T^{\frac{d+2\alpha}{d+3\alpha}}\right)\), while
  under twice-smooth price values it becomes
  \(\widetilde\Theta\!\left(T^{\frac{2d+3\alpha}{2d+5\alpha}}\right)\).
  These rates quantify the cost of contextual variation: higher context
  dimension makes learning harder, while smoother variation across contexts
  allows faster information sharing.

\end{enumerate}

\section{Problem Setup}
\label{sec:setup}

We now describe exactly what the seller knows and what remains hidden.  Let
\(\cX=[0,1]^d\) be the context space, let
\(\cP=[p_{\min},p_{\max}]\) with \(p_{\min}>0\) be the price range, and let
\(\cA=\cP\times[y_{\min},y_{\max}]\) be the set of price--inventory
actions.  The constants \(b\ge0\) and \(h_c\ge0\) are the known per-unit
lost-sales and holding costs.  A context can represent a customer, product,
season, local market, or any other information available before the decision.

\paragraph{Operational timing.}
Inventory \(y_t\) is the quantity made available for the current selling
period after observing \(x_t\).  The formulation represents perishable stock
or replenishment between periods: there is no leftover-inventory state and no
constraint tying \(y_t\) to the preceding period.  The benchmark is therefore
a contextual action applied repeatedly, rather than the state-dependent
periodic-review benchmark in \citet{chen2021nonparametric}.

At period \(t\), a context \(x_t\) is drawn independently from a distribution
\(\mu\).  After observing \(x_t\), the seller chooses
\(A_t=(p_t,y_t)\in\cA\).  Potential demand is
\(D_t=\lambda(x_t,p_t)+\epsilon_t\), where
\(\lambda:\cX\times\cP\to\R\) is unknown and
conditional on \(x_t=x\), \(\epsilon_t\) has an unknown law \(F_x\) that does not
depend on the chosen price.  Given the context sequence, the noises are
independent across periods and are fresh relative to past actions and
observations.  The seller observes only \(O_t=\min\{D_t,y_t\}\).
The one-period profit is
\[
  r((p,y),D)=p\min\{D,y\}-b(D-y)^+-h_c(y-D)^+.
\]
The seller can compute the sales term and the stock level, but cannot compute
the shortage term when demand exceeds inventory.  This is the precise sense
in which the reward is censored.

It is useful to separate two kinds of information in the sales record.  When
\(O_t<y_t\), demand was low enough that the sale reveals the realized demand.
When \(O_t=y_t\), the seller only learns that demand reached the stock level.
The number of customers who would have bought the product is hidden, and so is
the lost-sales part of the profit.  Raising inventory can reveal more demand
but may also create holding cost; lowering inventory limits holding cost but
can hide exactly the observations needed for future pricing.  The learner
therefore has to use the inventory decision both as an operational choice and
as a way of controlling what future data will reveal.

Context makes this feedback problem more realistic but also more delicate.
Sales from one market condition cannot simply be pooled with sales from every
other condition, because demand levels and residual variation may change with
the context.  At the same time, treating every context as unrelated would waste
the repetition present in nearby customer, product, or seasonal states.  The
policy must borrow information locally while remembering that both the best
price and the right inventory level move with the context.

Define
\(Q_x(p,y)=\E[r((p,y),D)\mid x,p]\),
\(\mu_x(p)=\E[D\mid x,p]\), \(G_x(p)=\max_yQ_x(p,y)\), and
\(V_x=\max_pG_x(p)\).  We take a measurable maximizer
\(p_x^*\in\argmax_pG_x(p)\).
The policy regret is
\[
  \Reg_T(\pi)
  =
  \E_\pi\!\left[
    \sum_{t=1}^T
    \{V_{x_t}-Q_{x_t}(p_t,y_t)\}
  \right].
\]

For inventory profiling, define the stockout probability
\(S_x(y\mid p)=\Pbb(D\ge y\mid x,p)\) and the critical probability
\(\tau(p)=h_c/(p+b+h_c)\).
When the conditional demand law is continuous,
\[
  \partial_yQ_x(p,y)
  =
  (p+b+h_c)\{S_x(y\mid p)-\tau(p)\}.
\]
Thus the stockout indicator gives an observable direction for inventory
adjustment even though the full profit is censored.

The location model implies \(\mu_x(p)=\E[D\mid x,p]=\lambda(x,p)\) under the
conditional mean-zero normalization
\(\E[\epsilon_t\mid x_t=x]=0\).  This equality is used only
for analysis; the algorithm does not estimate \(\lambda\) from uncensored
means.

\paragraph{Structural assumptions.}

The assumptions support local information sharing, within-context price
calibration, and inventory profiling from stockout events.

\begin{assumption}[Context distribution]
\label{ass:context}
The contexts \(x_1,\ldots,x_T\) are i.i.d.{} draws from an arbitrary
distribution supported on \([0,1]^d\).  No density or lower-mass condition is
imposed.
\end{assumption}

\begin{assumption}[Contextual location surface]
\label{ass:holder}
For some \(0<\alpha\le1\) and finite \(L_x,L_p\),
\(|\lambda(x,p)-\lambda(x',p)|\le L_x\norm{x-x'}^\alpha\) and
\(|\lambda(x,p)-\lambda(x,q)|\le L_p|p-q|\).  The surface is uniformly
bounded and keeps demand nonnegative almost surely.
\end{assumption}

\begin{assumption}[Context-conditioned shared noise and observable quantile]
\label{ass:shared-noise}
For every \(x\in\cX\), the conditional noise law \(F_x\) is shared across all
prices.  It has mean zero, common compact support
\([\underline z,\overline z]\), continuous distribution function \(F_x\),
and density \(f_x\).  For \(u\in(0,1)\), write
\(z_u(x)=F_x^{-1}(u)\).  Define
\[
  \phi(p)=\frac{p+b}{p+b+h_c},
  \qquad
  z_p(x)=z_{\phi(p)}(x).
\]
There are a known \(\rho\in(0,\inf_{p\in\cP}\phi(p))\) and constants
\(\kappa,K,m_q,r_q,L_F>0\) such that the following hold uniformly in
\(x,x'\).  The density is bounded on its support and locally bounded away from
zero around the calibration and critical quantiles:
\[
  0\le f_x(u)\le K,
  \qquad
  f_x(u)\ge\kappa
  \quad\text{if}\quad
  |u-z_\rho(x)|\le r_q
  \ \text{or}\
  |u-z_p(x)|\le r_q
  \ \text{for some }p\in\cP .
\]
The visible calibration quantile is separated from the critical quantile:
\[
  z_p(x)-z_\rho(x)\ge m_q
  \qquad\text{for all }p\in\cP .
\]
Finally, the noise family is H\"older in context in Kolmogorov distance:
\[
  \sup_{u\in\R}|F_x(u)-F_{x'}(u)|
  \le L_F\norm{x-x'}^\alpha .
\]
\end{assumption}

\begin{remark}[Role and scope of context-conditioned sharing]
The seller is not required to face the same randomness in every context:
\(F_x\) may differ from \(F_{x'}\).  The key restriction is that, conditional
on a context, changing the price shifts the demand location without changing
the residual shape.  This creates a common calibration reference across
prices, but it does not cover an arbitrary family \(F_{x,p}\).
\end{remark}

\begin{remark}[No revealing cap]
\label{rem:no-cap}
Compact support makes profit and regret bounded, but the support endpoint need
not be known or feasible as inventory; in particular, we allow
\(y_{\max}<\sup_{x,p}\{\lambda(x,p)+\overline z\}\).  The algorithm uses
neither this endpoint nor a demand-revealing action.
\end{remark}

\begin{assumption}[Inventory regularity and quantile visibility]
\label{ass:inventory}
For every \(x,p\), the equation \(S_x(y\mid p)=\tau(p)\) has a unique
solution \(y_x^*(p)\) in the interior of the inventory
interval.  There are constants \(r_y,c_y,C_y>0\) such that
\[
  c_y|y-y_x^*(p)|
  \le
  |S_x(y\mid p)-\tau(p)|
  \le
  C_y|y-y_x^*(p)|
\]
whenever \(|y-y_x^*(p)|\le r_y\).  Moreover,
\(G_x(p)-Q_x(p,y)\le C_y|y-y_x^*(p)|^2\) on the same neighborhood.
The visible-quantile condition requires
\[
  y_{\min}+r_y
  \le
  q_\rho(x,p):=\lambda(x,p)+z_\rho(x)
  \le
  y_x^*(p)-m_q
  \le
  y_{\max}-r_y.
\]
\end{assumption}

\begin{assumption}[Price-value regularity]
\label{ass:price-stability}
The optimized value is uniformly Lipschitz:
\(|G_x(p)-G_x(q)|\le L_G|p-q|\).  For the smooth-price specialization, we
strengthen this condition to \(\sup_{x,p}|\partial_{pp}G_x(p)|\le L_{G,2}\).
\end{assumption}

\paragraph{How to read these restrictions.}
The assumptions are meant to isolate the censoring issue rather than to make
demand fully observable.  Contexts may arrive from an uneven distribution, and
some regions may be visited rarely.  The learner is not promised a revealing
inventory level that uncensors every demand realization.  What is available is
more modest: nearby contexts have related demand behavior, within a fixed
context the residual shape is shared across prices, and a low quantile remains
below the critical stocking level.  These are exactly the ingredients needed
to compare prices from sales records without reconstructing the whole demand
distribution.

This distinction matters for the interpretation of the model.  The shared-noise
condition is not saying that all contexts face the same demand shocks; the
noise law may change with the context.  It is saying that, once a context is
fixed, changing the price shifts the demand location while preserving a common
within-context reference.  The visible-quantile condition then gives the
algorithm a calibration mark that survives censoring.  The rest of the paper
uses these two facts to turn censored observations into a context-dependent
price-learning signal.

\paragraph{Knowledge boundary.}
The learner knows the horizon, context dimension, action ranges, cost
parameters, smoothness exponent, visible quantile level, and kernel.  It does
not know the demand surface, the context-specific noise laws, the critical
inventory levels, or the optimal prices.  The fixed tuning convention and the
technical consequences of the assumptions are stated in
\cref{app:tuning,app:auxiliary-statements}.

For \(\varepsilon>0\), let \(\Gamma(\varepsilon)\) denote the uniform price
grid over \(\cP\) with mesh at most \(\varepsilon\), including both endpoints.

\section{Mean-Calibrated Kernel UCB}
\label{sec:algorithm}

The construction has three moving pieces: an observable sales-based score, an
inventory profiler that makes a low quantile visible, and a contextual UCB
rule that compares calibrated prices.  The next paragraphs define these
pieces before \cref{alg:mck} records the full policy.

\paragraph{Sales proxy and contextual calibration.}

The seller needs a score that can be computed from the sales record, even
when the lost-sales penalty cannot be computed directly.  We construct such a
score in two steps.  First, we choose the coefficient on leftover stock so
that the observable score points toward the same inventory decision as true
profit.  For \(Z_c(p,y,O)=pO-c(p)(y-O)\), matching the stockout and
nonstockout inventory slopes gives \(c(p)=ph_c/(p+b)\).  After normalization,
the observable score is \(Y(p,y,O)=(p+b)O-h_c(y-O)\).

This score satisfies the sample-path identity
\(Y(p,y,O)=r((p,y),D)+bD\).  Thus, with
\(\ProxyQ_x(p,y)=\E[Y(p,y,O)\mid x,p,y]\), we have
\(\ProxyQ_x(p,y)=Q_x(p,y)+b\mu_x(p)\); the proxy preserves inventory
maximizers and inventory regret gaps.  Price comparison still needs one
correction: defining \(\ProxyG_x(p)=\max_y\ProxyQ_x(p,y)\) and
\(q_\rho(x,p)=\lambda(x,p)+z_\rho(x)\), the calibrated target satisfies
\(\ProxyG_x(p)-bq_\rho(x,p)=G_x(p)-bz_\rho(x)\).  At a fixed context, the
remaining offset is common to all prices.  Formal statements are collected in
\cref{app:auxiliary-statements}.

The intuition is an accounting one.  The observable score is not the profit;
it differs from profit by \(bD\), a demand term the seller does not observe in
stockout periods.  For inventory choice at a fixed price and context, this
extra term is harmless because it does not depend on the stock level.  The
same sales record can therefore tell the seller which way to move inventory
without revealing the missing tail of demand.

The difficulty is price comparison.  A price that produces larger mean demand
also produces a larger \(b\mu_x(p)\) shift in the observable score, so
optimizing the raw score can rank prices differently from optimizing profit.
Calibration removes precisely this price-dependent shift.  Once the profiler
makes a lower demand quantile visible, subtracting \(bq_\rho(x,p)\) leaves
only the common context term \(-bz_\rho(x)\).  That common term can vary from
one context to another, but within the same context it is shared by all prices,
so it does not change the price ranking.

\paragraph{Inventory profiling.}

Before comparing prices, the seller needs an inventory level that makes
\(q_\rho(x,p)\) visible.  The profiler searches for this level using only
stockout indicators.  Once the profiled inventory is safely above
\(q_\rho(x,p)\), the lower \(\rho\)-quantile of sales equals the corresponding
demand quantile.

Let \(K:\R^d\to[0,1]\) be a compactly supported Lipschitz kernel with
\(K(u)=0\) for \(\norm{u}>1\) and \(K(u)\ge k_0>0\) for
\(\norm{u}\le1/2\).  Set
\(h_{\rm prof}:=\varepsilon^{1/(2\alpha)}=\sqrt h\) and
\(\Gamma_{\rm prof}:=\Gamma(\sqrt\varepsilon)\).
Partition \([0,1]^d\) into a deterministic collection
\(\{C_z^{\rm prof}:z\in\mathcal Z_{\rm prof}\}\) of axis-aligned cells with
diameter at most \(h_{\rm prof}\).  We use \(z\) for a representative point
of \(C_z^{\rm prof}\).  Then
\(|\mathcal Z_{\rm prof}|\le C h_{\rm prof}^{-d}\).  These profiler cells are
deliberately coarser than the bandwidth-\(h\) cells used only in the kernel
counting proof.  The asynchronous policy and its profiling-action bound do
not require a lower bound on the probability of any cell.

\paragraph{Profiler implementation.}
For each profiler cell and anchor price
\(a\in\Gamma_{\rm prof}\), the profiler maintains a bisection interval for
the root of \(S_x(y\mid a)=\tau(a)\).  At a candidate stock level it asks
whether a stockout occurred, and then moves the interval up or down.  After a
batch of observations it stops when the remaining inventory uncertainty is
small enough to cause only \(O(\varepsilon)\) value loss.  Profiling is
asynchronous across cells: a context only advances the unfinished cell it
actually belongs to, and a completed cell can immediately start using UCB.
For a fine-grid decision price \(p\), the policy reuses the profile at the
nearest anchor instead of running a new inventory search.  This is why the
algorithm pays for inventory profiling on a coarse grid, while retaining a
fine grid for price learning.
The one-round profiler update is given in \cref{alg:profiler}.  The guarantee
in \cref{lem:profile} shows that, at \(\varepsilon=h^\alpha\), completed cells
return inventories within \(O(\sqrt\varepsilon)\) of the critical level, keep
the inventory value loss at \(O(\varepsilon)\), make \(q_\rho(x,p)\) visible,
and use only \(\widetilde O(h^{-d/2}\varepsilon^{-3/2})\) profiling rounds.

\paragraph{Contextual UCB layer.}

Once a context cell has a safe inventory profile, the seller can return to
price learning.  The UCB layer combines two kernel estimates from the same
sales history: one for the observable proxy and one for the visible quantile
used to calibrate it.  The raw proxy is not optimized directly because its
price-dependent shift could favor the wrong price.  Throughout the UCB
concentration and counting analysis, write
\(\Gamma:=\Gamma_{\rm dec}\); the profiler grid is always denoted separately
by \(\Gamma_{\rm prof}\).

Let \(\mathcal T_t(p)\) be the UCB rounds before \(t\) on which price \(p\)
was played.  The kernel effective sample size, proxy estimate, weighted sales
CDF, and empirical calibration quantile are
\[
\begin{aligned}
  W_t(x,p)&=\sum_{s\in\mathcal T_t(p)}K((x_s-x)/h),\\
  \widehat{\ProxyG}_t(x,p)
    &=\frac{\sum_{s\in\mathcal T_t(p)}K((x_s-x)/h)Y_s}{W_t(x,p)},\\
  \widehat F_t(u\mid x,p)
    &=\frac{\sum_{s\in\mathcal T_t(p)}K((x_s-x)/h)\one\{O_s\le u\}}
      {W_t(x,p)},\\
  \widehat q_{\rho,t}(x,p)
    &=\inf\{u\in[0,y_{\max}]:\widehat F_t(u\mid x,p)\ge\rho\}.
\end{aligned}
\]
The ratio-based estimates are used only when \(W_t(x,p)>0\).
For \(W_t(x,p)>0\), the weighted empirical CDF is a right-continuous step
function determined by finitely many observed triples.  The displayed infimum
is therefore a measurable statistic of the history.  At zero weight the
estimate is left undefined and the algorithm uses the infinite index directly.
Define the corrected estimate
\(\widehat G_t^{\rm cal}(x,p)=\widehat{\ProxyG}_t(x,p)-b\widehat q_{\rho,t}(x,p)\).

\paragraph{MCK-UCB implementation.}
For price mode \(s=\mathrm{Lip}\), the decision grid has spacing
\(h^\alpha\); for \(s=\mathrm{smooth}\), it has spacing \(h^{\alpha/2}\).
In both modes, the context bandwidth is \(h\), the target value accuracy is
\(\varepsilon=h^\alpha\), and inventory is profiled on the coarser resolution
\(h_{\rm prof}=\sqrt h\).  On every round, the policy first checks whether
the arriving context belongs to an unfinished profiler cell.  If so, that
round advances the cell's inventory search.  Otherwise the policy computes a
calibrated optimistic value for each decision price, chooses the largest one,
and reuses the profiled inventory at the nearest anchor.  It then stores the
observed sale and proxy in that price's history.

\begin{algorithm}[t]
\caption{Mean-Calibrated Kernel UCB (\(\mathsf{MCK}\)-UCB)}
\label{alg:mck}
\begin{algorithmic}[1]
\Require horizon \(T\), context smoothness \(\alpha\), visible quantile level
  \(\rho\), price mode \(s\in\{\mathrm{Lip},\mathrm{smooth}\}\), fixed tuning
  constants \(C_{\rm tune},h_{\rm vis}\)
\State Fix \(a_{\rm fb}=(p_{\min},y_{\min})\) and set \(\delta=T^{-2}\)
\If{\(s=\mathrm{Lip}\)}
  \State Set \(h=(\log(eT)/T)^{1/(d+3\alpha)}\),
  \(\varepsilon=h^\alpha\), and \(\Delta_{\rm dec}=\varepsilon\)
\Else
  \State Set \(h=(\log(eT)/T)^{2/(2d+5\alpha)}\),
  \(\varepsilon=h^\alpha\), and \(\Delta_{\rm dec}=\sqrt{\varepsilon}\)
\EndIf
\State Set \(\Gamma_{\rm dec}=\Gamma(\Delta_{\rm dec})\),
\(\Gamma_{\rm prof}=\Gamma(\sqrt\varepsilon)\), and
\(h_{\rm prof}=\varepsilon^{1/(2\alpha)}\); construct
\(\mathcal Z_{\rm prof}\) with cell diameter at most \(h_{\rm prof}\), and
initialize the persistent anchor-profiler states in \cref{alg:profiler}
\If{\(h>h_{\rm vis}\)}
  \State Play \(a_{\rm fb}\) on all \(T\) rounds and \textbf{stop}
\EndIf
\For{round \(t=1,\ldots,T\)}
  \State Observe context \(x_t\) and let \(z=z_{\rm prof}(x_t)\)
  \If{cell \(z\) is not profiled}
    \State Execute the one-round update in \cref{alg:profiler} and
    \textbf{continue}
  \EndIf
  \State Compute \(U_t(x_t,p)=+\infty\) if
  \(W_t(x_t,p)=0\), and otherwise use
  \[
    U_t(x_t,p)=\widehat G_t^{\rm cal}(x_t,p)
    +C_{\rm tune}\sqrt{\frac{\log(T|\Gamma_{\rm dec}|/h)}
    {\max\{1,W_t(x_t,p)\}}}
    +C_{\rm tune}(\varepsilon+h^\alpha)
  \]
  \State Choose \(p_t\in\argmax_{p\in\Gamma_{\rm dec}}U_t(x_t,p)\), let
  \(a_t=\pi(p_t)\) and \(y_t=\widetilde y_z(a_t)\), play
  \((p_t,y_t)\), and observe \(O_t\)
  \State Append \((x_t,O_t,Y_t)\), where
  \(Y_t=(p_t+b)O_t-h_c(y_t-O_t)\),
  to the history of \(p_t\)
\EndFor
\end{algorithmic}
\end{algorithm}

\paragraph{Design choices.}
The policy is not a separate explore-then-exploit scheme.  Each context cell
keeps its own profiling state, and a cell switches to price learning as soon
as its anchor inventories are certified.  This matters when the context
distribution is uneven: frequently visited cells should not wait for rare
cells, while rare cells should not be forced into price learning before the
sales quantile is visible.

The two resolutions also serve different roles.  Inventory is profiled on a
coarser price grid because the profiler only needs to make the calibration
quantile safely observable and keep inventory loss small.  Price learning uses
a finer decision grid because price comparison is the final optimization
problem.  Sharing inventory profiles from nearby anchors reduces the number of
rounds spent on profiling without changing the calibrated target used by UCB.

The startup fallback makes the policy well defined before the certified
visibility resolution is available.  Otherwise the policy never waits for
global profiler completion: sparse cells remain in profiling mode, while
completed cells accumulate UCB observations.  The deterministic number of
profiling actions is of order
\(O(|\Gamma_{\rm prof}|h_{\rm prof}^{-d}
\varepsilon^{-1}\log^2T)
=\widetilde O(h^{-d/2}\varepsilon^{-3/2})\); all other rounds are UCB
rounds.
For all sufficiently large \(T\), \(h\le h_{\rm vis}\).  The finitely many
earlier horizons are covered by bounded regret and a larger theorem constant.

\begin{remark}[Concentration and counting]
Once a context cell is profiled, the UCB rule behaves like a local price
learning problem.  The effective sample size is \(W_t(x,p)\), so the
statistical uncertainty decreases at the usual \(O(W_t(x,p)^{-1/2})\) scale.
The remaining \(O(\varepsilon+h^\alpha)\) terms come from price resolution and
contextual smoothing.  Summed along the realized trajectory, the local
uncertainties contribute \(O(\sqrt{T|\Gamma|h^{-d}})\).  The formal
concentration and counting statements are collected in
\cref{app:kernel-concentration}.
\end{remark}

\section{Main Results}
\label{sec:results}

Let \(\mathfrak I_{\rm Lip}\) denote the instance class satisfying
\cref{ass:context,ass:holder,ass:shared-noise,ass:inventory,ass:price-stability}
with Lipschitz price values, and let \(\mathfrak I_{\rm sm}\) be its
twice-smooth specialization.  The formal upper and lower statements, including
the fixed numerical envelopes and tuning constants, are given in
\cref{app:formal-results}.

\begin{theorem}[Minimax rates for contextual censored sales]
\label{thm:main-rates}
For fixed \(b,h_c>0\), MCK-UCB achieves the minimax regret rate, up to
logarithmic factors, in both price classes:
\[
  \inf_\pi\sup_{\cI\in\mathfrak I_{\rm Lip}}\Reg_T(\pi;\cI)
  =
  \widetilde\Theta\!\left(T^{\frac{d+2\alpha}{d+3\alpha}}\right),
  \qquad
  \inf_\pi\sup_{\cI\in\mathfrak I_{\rm sm}}\Reg_T(\pi;\cI)
  =
  \widetilde\Theta\!\left(T^{\frac{2d+3\alpha}{2d+5\alpha}}\right).
\]
\end{theorem}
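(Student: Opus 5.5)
The proof splits into a matching upper bound, achieved by MCK-UCB with the tuning in \cref{alg:mck}, and a minimax lower bound over each instance class.

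\textbf{Upper bound.} We decompose the regret of MCK-UCB into three parts.

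The first part comes from the startup fallback. It is active only for the finitely many horizons with \(h>h_{\rm vis}\). Since regret is bounded, this part is absorbed into the theorem constant.

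The second part comes from profiling rounds. By \cref{lem:profile} there are deterministically \(\widetilde O(h^{-d/2}\varepsilon^{-3/2})\) such rounds, each costing \(O(1)\). With \(\varepsilon=h^\alpha\) this gives \(\widetilde O(h^{-(d+3\alpha)/2})\).

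The third part comes from UCB rounds. On the good event of the kernel concentration statements in \cref{app:kernel-concentration}, which holds with probability at least \(1-\delta\), \(\delta=T^{-2}\), four facts hold.
\begin{enumerate}[label=(\roman*)]
\item The calibrated estimate \(\widehat G^{\rm cal}_t(x,p)\) is within \(C\sqrt{\log/W_t}+C h^\alpha\) of \(\ProxyG_{\bar x}(p)-bq_\rho(\bar x,p)\), averaged over nearby contexts. This combines Hölder smoothing of \(\lambda\) and of \(F_x\) in Kolmogorov distance, and a density lower bound \(\kappa\) that converts the CDF error into a quantile error.
\item The profiled inventory at the nearest anchor keeps \(q_\rho\) visible, so the sales quantile equals the demand quantile.
\item The inventory value loss is \(O(\varepsilon)\), using the quadratic bound in \cref{ass:inventory} together with the \(O(\sqrt\varepsilon)\) inventory accuracy.
\item The identity \(\ProxyG_x(p)-bq_\rho(x,p)=G_x(p)-bz_\rho(x)\) shows that the calibrated target differs from \(G_x(p)\) only by a price-independent offset. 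Hence optimism bounds the per-round regret by \(2C\sqrt{\log/W_t(x_t,p_t)}+O(\varepsilon+h^\alpha)\) plus the grid discretization error.
\end{enumerate}
The discretization error is \(L_G\Delta_{\rm dec}=O(\varepsilon)\) in the Lipschitz mode. In the smooth mode it is \(L_{G,2}\Delta_{\rm dec}^2=O(\varepsilon)\), because \(p_x^*\) is interior or at an endpoint included in the grid. Summing the confidence widths with the counting lemma gives \(O(\sqrt{T|\Gamma|h^{-d}\log T})\).

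\textbf{Balancing.} The Lipschitz mode has \(|\Gamma|\asymp h^{-\alpha}\), so the bound is \(\sqrt{Th^{-d-\alpha}}+Th^\alpha+h^{-(d+3\alpha)/2}\). With \(h=T^{-1/(d+3\alpha)}\) the first two terms are \(T^{(d+2\alpha)/(d+3\alpha)}\) and the third is \(T^{1/2}\), which is smaller. The smooth mode has \(|\Gamma|\asymp h^{-\alpha/2}\), giving \(\sqrt{Th^{-d-\alpha/2}}+Th^\alpha\). With \(h=T^{-2/(2d+5\alpha)}\) this is \(T^{(2d+3\alpha)/(2d+5\alpha)}\). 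The profiling term is then \(T^{(d+3\alpha)/(2d+5\alpha)}\), which is lower order. The failure event adds at most \(T\delta\cdot O(1)=O(1/T)\).

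\textbf{Lower bound.} We embed a standard contextual Lipschitz/smooth bandit hard instance. Take \(\mu\) uniform, a grid of \(h^{-d}\) context bumps, and in each bump \(|\Gamma|\) candidate price bumps of height \(\varepsilon\). The perturbation lives in \(\lambda(x,p)\), with a fixed noise law, for example a truncated smooth density. The profit curve \(G_x(p)\) is perturbed by an \(\varepsilon\)-height bump of width \(\varepsilon\) in the Lipschitz class, or width \(\sqrt\varepsilon\) in the smooth class, multiplied by a Hölder-\(\alpha\) context bump of radius \(h\) with \(\varepsilon=h^\alpha\).

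We must check that the perturbed instances satisfy \cref{ass:shared-noise,ass:inventory,ass:price-stability}: the quantile separation, a unique interior critical level, and bounds on \(L_G\) and \(L_{G,2}\). A convenient route is to choose a base \(\lambda_0(p)\) whose profit \(G\) has a strictly concave flat-ish top and to add small perturbations. Censoring only reduces information. Each observation's KL divergence between neighboring instances is \(O(\varepsilon^2)\), since the noise density is bounded below and the perturbation is a location shift of size \(O(\varepsilon)\). The standard Assouad or Fano argument then gives regret \(\gtrsim T\varepsilon\) whenever \(T h^{d}\varepsilon^2/|\Gamma|\lesssim1\). This recovers the same exponents.

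\textbf{Main obstacle.} I expect the hardest step to be the lower-bound construction. It must perturb \(\lambda\) so that \(G_x\) acquires exactly an \(\varepsilon\)-sized bump of the right width, while keeping \(y^*_x(p)\) interior, the visible-quantile inequalities valid, and \(b,h_c>0\) fixed. My first attempt is to make the perturbation of \(G_x(p)\) linear in the \(\lambda\)-perturbation, via \(\partial_\lambda G_x(p)=p\,\E[\one\{D<y^*\}]-b\,\Pbb(D\ge y^*)+\dots\), which is nonzero at generic prices, and to invert it.
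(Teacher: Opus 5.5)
\textbf{Assessment.} Your upper bound follows the paper's proof essentially step for step: fallback horizons, the deterministic profiling count from \cref{lem:profile}, optimism on the calibrated target $G_x(p)-bz_\rho(x)$, grid error, \cref{lem:counting}, and the same balancing. Your lower-bound architecture also matches the paper's: uniform contexts, $\asymp h^{-d}$ cells, $\asymp w^{-1}$ disjoint price bumps per cell, a location perturbation under a fixed noise law, and the balance $Th^d\Delta^2/K_\Delta\lesssim 1$.

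\textbf{The gap: the information step.} \cref{ass:shared-noise} requires a compactly supported noise law. So $f_0(\cdot-u)$ and $f_0(\cdot-v)$ have different supports whenever $u\neq v$. Their KL divergence is therefore infinite, and no lower bound on the density can repair this. Censoring by $\min\{D,y\}$ hides only the upper tail. The lower endpoint $\lambda(x,p)+\underline z<q_\rho(x,p)$ is observed whenever $y$ exceeds it, in particular at every near-optimal inventory. Your suggested truncated smooth density jumps at its endpoints. For it, even the squared Hellinger distance between two shifts is $\Theta(|u-v|)$, since the non-overlapping strip alone contributes that much. Each informative round then carries $\Theta(\Delta)$ rather than $O(\Delta^2)$ information. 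The balance becomes $Th^d\Delta/K_\Delta\lesssim 1$, and in the Lipschitz class the construction certifies only $T^{(d+\alpha)/(d+2\alpha)}$, strictly smaller than $T^{(d+2\alpha)/(d+3\alpha)}$.

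\textbf{The missing idea.} Choose $f_0$ smooth and vanishing to all orders at both endpoints; the paper rescales $\exp\{-1/(1-u^2)\}\one\{|u|<1\}$. Then $\sqrt{f_0}$ has a square-integrable derivative and $I_H=4\int|(\sqrt{f_0})'|^2<\infty$. Measure information in squared Hellinger distance, which tolerates the support mismatch. By the fundamental theorem of calculus, $H^2(f_0(\cdot-u),f_0(\cdot-v))$ is at most of order $I_H|u-v|^2$ (\cref{lem:location-information}). Adaptive Hellinger tensorization, data processing for the censoring map, and $\operatorname{TV}\le\sqrt{2H^2}$ then give the balance you wrote.

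\textbf{Two smaller corrections.} First, the baseline must be exactly flat, $G_0\equiv C_0$ on a fixed interval $\cP_0$, not strictly concave. With curvature bounded below, a bump of height $\Delta$ centered farther than a constant multiple of $\sqrt\Delta$ from the peak never becomes optimal. You would then lose the $K_\Delta\asymp w^{-1}$ count that drives both exponents.

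Second, the obstacle you flag disappears. By \cref{prop:location-separation}, $G_x(p)=p\lambda(x,p)+C_x(p)$ exactly, so $\partial_\lambda G_x(p)=p$ at every price. Your expression, once the $h_c\Pbb(D<y^*)$ term is included, reduces to $p$ by the critical-fractile condition. Hence the location shift $g_\omega/p$ adds exactly $g_\omega$ to the value. The paper obtains the flat baseline by setting $\lambda_0=(G_0-C_{F_0})/p$.
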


The rates quantify the cost of contextual variation.  Larger \(d\) spreads
observations across more local neighborhoods, while larger \(\alpha\) allows
stronger sharing across nearby contexts.  The twice-smooth price class improves
the exponent because a coarser price grid is enough to achieve the same
approximation accuracy.

\paragraph{Where the rates come from.}
The upper bound has three visible sources.  First, the policy spends rounds
profiling inventory in coarse context cells and at coarse price anchors.  This
is the cost of making the calibration quantile observable.  Second, after a
cell is profiled, the policy runs a contextual optimistic search over prices;
nearby contexts share observations through the kernel, but the effective sample
size is still local.  Third, the continuous price interval is replaced by a
finite grid, which creates approximation error.

Suppressing logarithms and constants, the regret accounting has the form
\[
  h^{-d/2}\varepsilon^{-3/2}
  \;+\; T\varepsilon
  \;+\; \sqrt{T|\Gamma|h^{-d}} .
\]
Here \(h\) is the context bandwidth, \(\varepsilon=h^\alpha\) is the target
value accuracy, and \(|\Gamma|\) is the number of decision prices.  The first
term is profiling, the second is local approximation and inventory error, and
the third is the accumulated statistical uncertainty from contextual price
learning.  For Lipschitz price values, \(|\Gamma|\asymp\varepsilon^{-1}\), and
balancing the last two terms gives the exponent
\((d+2\alpha)/(d+3\alpha)\).  When the optimized value is twice smooth in
price, the grid can be coarser, \(|\Gamma|\asymp\varepsilon^{-1/2}\), giving
\((2d+3\alpha)/(2d+5\alpha)\).  The profiling term is designed to remain lower
order at these choices.

The matching lower bound says that these costs cannot all be avoided.  Censored
sales do not contain enough information to reveal the hidden tail for free, and
context variation prevents all observations from being pooled into one global
pricing problem.  The theorem therefore reflects the intrinsic cost of learning
from censored contextual demand, not only the design of MCK-UCB.

\section{Experiments}
\label{sec:experiments}

The experiments ask whether the main pieces of the theory behave as expected
in finite samples.  We instantiate the model with a controlled homoskedastic
specification \(F_x\equiv\operatorname{Unif}[-a,a]\), which isolates the
effects of context dimension, H\"older smoothness, inventory profiling, and
contextual pooling while preserving the within-context sharing structure used
by the algorithm.  Complete environment definitions, tuning parameters,
baseline specifications, and additional diagnostics are given in
\cref{app:experiment-details}.

We vary the horizon, context dimension, and contextual smoothness.  For
\(T\in\{1000,3000,10000,30000,100000\}\), we run five independent repetitions
for each pair
\((d,\alpha)\in
\{(1,0.5),(1,0.75),(1,1),(2,0.5),(2,0.75),(2,1)\}\).
\Cref{fig:scaling-d-alpha} reports mean cumulative regret with 95\% confidence
bands.  The dashed reference in each panel is
\(T^{\beta(d,\alpha)}\log T\), where
\(\beta(d,\alpha)=(d+2\alpha)/(d+3\alpha)\).

\begin{figure}[t]
  \centering
  \includegraphics[width=\textwidth]{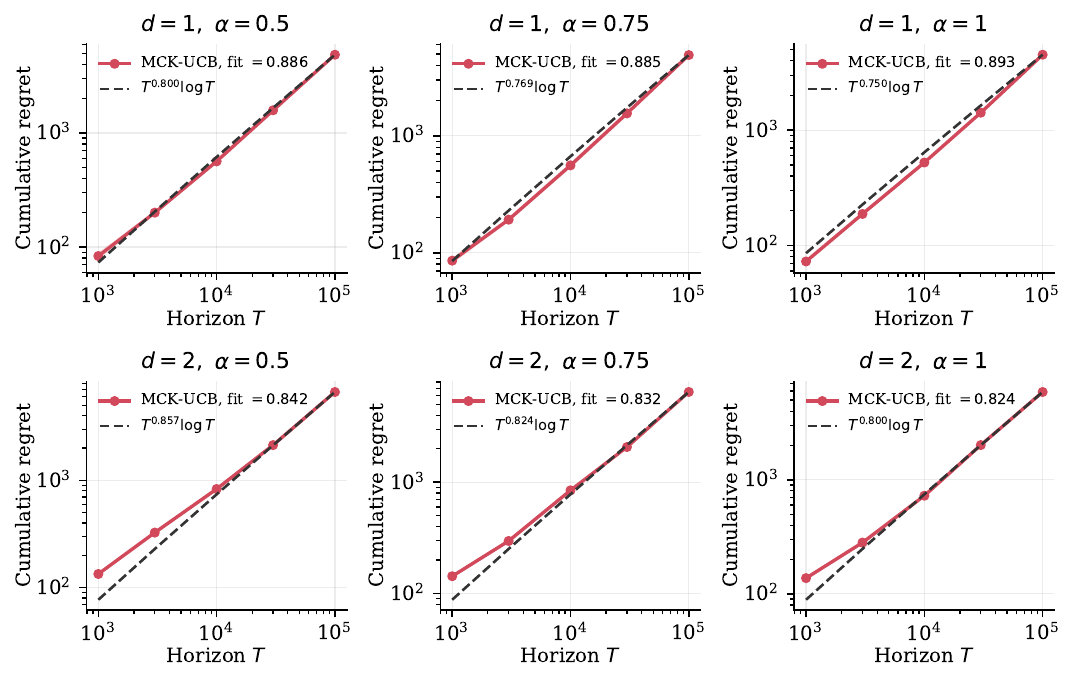}
  \caption{Cumulative-regret scaling of the paper-aligned MCK-UCB policy for
  different context dimensions \(d\) and H\"older exponents \(\alpha\).
  Solid curves are means over five exogenous trajectories, shaded regions are
  95\% confidence intervals, and dashed curves give representative
  \(T^{\beta(d,\alpha)}\log T\) guides.}
  \label{fig:scaling-d-alpha}
\end{figure}

The fitted log--log slopes, ordered by the panels, are
\(0.886,0.885,0.893,0.842,0.832,\) and \(0.824\).  The corresponding pure-power
exponents are \(0.800,0.769,0.750,0.857,0.824,\) and \(0.800\).  All six
finite-sample fits are sublinear.  The dashed guides should be read as a
scaling diagnostic rather than an estimate of the asymptotic minimax exponent:
the experiment is deliberately narrower than the theory, but it shows that the
policy can learn from sales-only feedback in a context-dependent environment.
Appendix~\ref{app:experiment-details} reports the context geometry, baseline
comparisons, smoothness-mismatch diagnostic, and implementation details.

\section{Related Work}
\label{sec:related}

This paper sits at the intersection of four literatures: censored inventory
learning, online pricing and inventory control, contextual bandits, and
partial monitoring.

\begin{table}[t]
\centering
\footnotesize
\setlength{\tabcolsep}{3pt}
\caption{Setting comparison with closely related work.}
\label{tab:setting-comparison}
\begin{tabular}{@{}>{\raggedright\arraybackslash}p{0.25\textwidth}
>{\raggedright\arraybackslash}p{0.14\textwidth}
>{\raggedright\arraybackslash}p{0.17\textwidth}
>{\raggedright\arraybackslash}p{0.36\textwidth}@{}}
\toprule
Work & Context & Decisions & Feedback and deployment \\
\midrule
\citet{besbes2013censoring}
& No context
& Inventory only
& Online lost-sales feedback; no price decision. \\
\citet{chen2021nonparametric,chen2024optimal}
& No context
& Price and inventory
& Online sales-only feedback; nonparametric but context-free. \\
\citet{ding2024feature}
& Features or context
& Inventory only
& Online censored-demand inventory control; no price calibration. \\
\citet{tang2025offline,hu2026conditional}
& Features or context
& Pricing or pricing--inventory
& Historical data are available before deployment, rather than collected only
online. \\
\citet{abernethy2016threshold,verma2019censored}
& Model dependent
& Threshold or resource actions
& Abstract censored-feedback models, not lost-sales profit with an endogenous
inventory threshold. \\
\textbf{This paper}
& Nonparametric context
& Price and inventory
& Online sales-only feedback; learns by locally calibrating censored sales. \\
\bottomrule
\end{tabular}
\end{table}

\Cref{tab:setting-comparison} highlights the main distinction.  Prior work
typically has context without online joint pricing--inventory learning, or
joint pricing--inventory learning without context.  Our setting requires both:
the same censored sale must guide inventory, price comparison, and local
information sharing across market conditions.

\paragraph{Censored pricing and inventory learning.}
\citet{besbes2013censoring} show that sales-only feedback changes the
exploration problem even when inventory is the only decision.  The closest
pricing--inventory papers are
\citet{chen2021nonparametric} and \citet{chen2024optimal}.  They study
nonparametric learning with lost sales and censored demand, but in a
context-free setting.  The former uses high-inventory exploration and spline
estimation, while the latter estimates reward differences from censored data
and uses structured search.  We retain their useful within-context
location-modeling convention, but add contextual pooling and a
decision-equivalent calibration target.

\citet{ding2024feature} study feature-based inventory control with censored
demand and construct an inventory-oriented stochastic subgradient.  Our
setting also needs the same censored sale to support price comparison, which
creates the price-calibration step.

\paragraph{Contextual and offline pricing--inventory learning.}
\citet{tang2025offline} study feature-based pricing with finite inventory and
censored demand using an offline data set of historical features, prices,
inventory, and sales.  \citet{hu2026conditional} study contextual joint
pricing and inventory control from historical data using a conditional
generative model.  These works use rich context information but study
historical data available before policy deployment.  \citet{gundem2025offline}
also studies offline sequential pricing and inventory with censored,
dependent demand.

\paragraph{Partial monitoring and contextual bandits.}
\citet{abernethy2016threshold} analyze threshold bandits in which feedback is
revealed only on one side of a threshold, and
\citet{verma2019censored} study censored semi-bandits for resource allocation.
The nonparametric contextual bandit literature studies how nearby contexts
can share information \citep{rigollet2010nonparametric,perchet2013multiarmed,
qian2016kernel,guan2018nonparametric,reeve2018knn}, while
\citet{krishnamurthy2020continuous,gur2022smoothness} consider continuous
actions and smoothness.  These papers generally assume that an unbiased
reward or loss is available after the action.  Here the inventory threshold
is endogenous, the reward is a continuous lost-sales profit, and neither the
reward nor its gradient is observed.

\paragraph{Contextual dynamic pricing.}
\citet{qiang2016dynamic} and \citet{wang2025covariates} study pricing with
observed covariates under parametric or partially specified demand models.
\citet{wang2021multimodal} study nonparametric continuous-action pricing when
the observed-reward function may be multimodal.  These papers clarify the
value of contextual information and of avoiding global concavity, but they do
not jointly model an endogenous inventory level that censors the reward.
Our contribution is the interaction of these ingredients, together with
matching upper and lower bounds under the stated shared-noise subclass.

\section{Conclusion}

We studied a seller who must make two decisions at once: how much to charge
and how much to stock.  The seller sees the context and the resulting sales,
but not the demand that was turned away.  We showed that, under a
nonparametric contextual location model, the visible part of censored sales
can still support joint pricing and stocking.  MCK-UCB combines inventory
profiling, local contextual pooling, and calibrated price learning, and
achieves matching minimax rates for the stated Lipschitz and twice-smooth
price classes.

The result remains a per-period model without inventory carryover or lead
times, and the shared-noise location structure does not cover arbitrary
price-dependent noise distributions.  Extensions to dynamic inventory states,
weaker feedback, or a more general conditional demand law are natural
directions for future work.

\clearpage
\bibliographystyle{iclr2027_conference}
\bibliography{references}

\appendix

\clearpage

\section{Algorithm Listings}
\label{app:algorithms}

\begin{algorithm}[H]
\caption{One-Round Two-Resolution Inventory Profiler Update}
\label{alg:profiler}
\begin{algorithmic}[1]
\Require horizon \(T\), coarse profiler partition
\(\mathcal Z_{\rm prof}\), current context \(x_t\), its cell
\(z=z_{\rm prof}(x_t)\), persistent profiler state for \(z\), value accuracy
\(\varepsilon\), anchor grid \(\Gamma_{\rm prof}\), confidence level \(\delta\),
  fixed tuning constant \(C_{\rm tune}\)
\State At initialization, set \(\eta=\sqrt{\varepsilon}\) and
\[
  B_y=1+\left\lceil
    \log_2\!\left(1+\frac{y_{\max}-y_{\min}}
    {C_{\rm tune}\eta}\right)
  \right\rceil,
  \qquad
  m_y=\left\lceil
    C_{\rm tune}\eta^{-2}
    \log(T|\Gamma_{\rm prof}||\mathcal Z_{\rm prof}|B_y/\delta)
  \right\rceil
\]
\State At initialization, for every
\((z,a)\in\mathcal Z_{\rm prof}\times\Gamma_{\rm prof}\), initialize
\([L_{z,a},U_{z,a}]=[y_{\min},y_{\max}]\)
\State Select the next unfinished anchor \(a\) in the round-robin schedule of
cell \(z\)
\State Set \(p_t=a\) and \(y_t=(L_{z,a}+U_{z,a})/2\), play
\((p_t,y_t)\), observe \(O_t\), and append
\(I_t=\one\{O_t\ge y_t\}\) to the current batch of \((z,a)\)
\If{the current batch contains \(m_y\) indicators}
  \State Let \(\widehat S\) be their average
  \If{\(\widehat S>\tau(a)+2\eta\)}
    \State Set \(L_{z,a}=y_t\)
  \ElsIf{\(\widehat S<\tau(a)-2\eta\)}
    \State Set \(U_{z,a}=y_t\)
  \Else
    \State Mark \((z,a)\) finished and set \(\widetilde y_z(a)=y_t\)
  \EndIf
  \If{\((z,a)\) is unfinished and
  \(U_{z,a}-L_{z,a}\le C_{\rm tune}\eta\)}
    \State Mark \((z,a)\) finished and set
    \(\widetilde y_z(a)=(L_{z,a}+U_{z,a})/2\)
  \EndIf
  \State Clear the completed batch
\EndIf
\If{all \((z,a')\), \(a'\in\Gamma_{\rm prof}\), are finished}
  \State Mark cell \(z\) profiled and define
  \(\widetilde y_x(p)=\widetilde y_z(\pi(p))\) for
  \(x\in C_z^{\rm prof}\), \(p\in\Gamma_{\rm dec}\), where
  \(\pi(p)\in\argmin_{a\in\Gamma_{\rm prof}}|p-a|\)
\EndIf
\end{algorithmic}
\end{algorithm}

\section{Formal Statements and Tuning}
\label{app:formal-results}

\subsection{Tuning Convention}
\label{app:tuning}

The learner knows the horizon, context dimension, action ranges, cost
parameters, smoothness exponent, visible quantile level, and kernel.  It does
not know the demand surface, the context-specific noise laws, the critical
inventory levels, or the optimal prices.  Fix numerical constants
\(C_{\rm tune},h_{\rm vis}>0\).  The analysis is stated for fixed choices
that dominate the constants needed in batch sizes and confidence radii,
including
\[
  L_x,L_p,L_F,L_{\rm loc},K,C_y,Y_{\max},R_{\max},
  c_y^{-1},\kappa^{-1},r_y^{-1},r_q^{-1},m_q^{-1}
\]
and fixed action/kernel constants.  The visibility condition requires, for
every \(h\le h_{\rm vis}\),
\[
\begin{aligned}
  C_{\rm tune}(\sqrt{h^\alpha}+h^\alpha)
  &\le \min\{r_y,r_q/2,m_q/2\},\\
  (L_x+L_F/\kappa)h^\alpha&\le\min\{m_q/8,r_q/4\},\\
  C_{\rm tune}h^\alpha&\le\tfrac14\sqrt{h^\alpha}.
\end{aligned}
\]
where \(Y_{\max}\) and \(R_{\max}\) are certified bounds on proxy magnitude
and one-period regret.  Any fixed admissible choices change only
multiplicative constants and finite-horizon transients, not the regret
exponent.  The latent functions, quantiles, and optimizers remain unknown and
are never policy inputs.

\subsection{Auxiliary Statements}
\label{app:auxiliary-statements}

\begin{proposition}[Location-model separation]
\label{prop:location-separation}
Under \cref{ass:holder,ass:shared-noise,ass:inventory},
\[
  y_x^*(p)=\lambda(x,p)+z_p(x)
\]
and
\[
  G_x(p)=p\lambda(x,p)+C_x(p),
\]
where
\[
  C_x(p)
  =
  p\E_{F_x}[\min\{\epsilon,z_p(x)\}]
  -b\E_{F_x}[(\epsilon-z_p(x))^+]
  -h_c\E_{F_x}[(z_p(x)-\epsilon)^+].
\]
Moreover, there is a fixed \(L_{\rm loc}<\infty\) such that, uniformly in
\(p\), each of \(y_x^*(p)\), \(q_\rho(x,p)\), \(G_x(p)\), and
\(\ProxyG_x(p)\) changes by at most
\(L_{\rm loc}\norm{x-x'}^\alpha\) between contexts \(x,x'\).
\end{proposition}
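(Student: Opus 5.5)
We argue directly from the location structure \(D=\lambda(x,p)+\epsilon\) with \(\epsilon\sim F_x\) independent of price. First, the stockout probability is \(S_x(y\mid p)=\Pbb(\epsilon\ge y-\lambda(x,p))=1-F_x(y-\lambda(x,p))\), using continuity of \(F_x\). The critical equation \(S_x(y\mid p)=\tau(p)\) therefore reads \(F_x(y-\lambda(x,p))=1-\tau(p)=\phi(p)\). Since \(f_x\ge\kappa\) near \(z_p(x)\), the quantile \(z_p(x)\) is a root of this equation. By the uniqueness in \cref{ass:inventory}, \(y_x^*(p)=\lambda(x,p)+z_p(x)\). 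Moreover, \(Q_x(p,\cdot)\) is concave: its derivative \((p+b+h_c)\{S_x(y\mid p)-\tau(p)\}\) is nonincreasing in \(y\). Hence the maximum \(G_x(p)\) is attained at \(y_x^*(p)\), which lies in the interior of the inventory interval.

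Next, substitute \(y=\lambda+z_p\) and \(D=\lambda+\epsilon\) into \(r\). This gives \(\min\{D,y\}=\lambda+\min\{\epsilon,z_p\}\), \((D-y)^+=(\epsilon-z_p)^+\) and \((y-D)^+=(z_p-\epsilon)^+\). Taking expectations under \(F_x\) yields \(G_x(p)=p\lambda(x,p)+C_x(p)\) with exactly the displayed \(C_x\).

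For the Hölder statements we treat each quantity in turn, fixing \(p\).

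\emph{Quantiles.} We first show \(|z_u(x)-z_u(x')|\le (L_F/\kappa)\norm{x-x'}^\alpha\) for \(u\in\{\rho,\phi(p)\}\), at least when \(\norm{x-x'}^\alpha\) is small. Set \(\delta=L_F\norm{x-x'}^\alpha\). Kolmogorov closeness gives \(F_{x'}(z_u(x))\in[u-\delta,u+\delta]\). Because \(f_{x'}\ge\kappa\) on the \(r_q\)-window around \(z_u(x')\), moving \(F_{x'}\) by \(\delta\) moves the argument by at most \(\delta/\kappa\). We need a localization step to ensure \(z_u(x)\) falls inside that window. For this we use monotonicity: if \(z_u(x)\) were outside the window, the \(F_{x'}\)-mass between the two points would exceed \(\kappa r_q>\delta\) for small distances, a contradiction.

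For large \(\norm{x-x'}\), all quantities are uniformly bounded. This holds because the support is compact, \(\lambda\) is bounded and prices lie in a compact set. So the Hölder bound holds trivially with a larger constant, since \(\norm{x-x'}\) is then bounded below.

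\emph{Inventory level and visible quantile.} Combining the quantile bound with \cref{ass:holder} gives Hölder bounds for \(y_x^*(p)=\lambda+z_p\) and \(q_\rho=\lambda+z_\rho\).

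\emph{Optimized value.} For \(G_x(p)\), write \(|G_x-G_{x'}|\le p_{\max}L_x\norm{x-x'}^\alpha+|C_x(p)-C_{x'}(p)|\). We split \(C_x(p)-C_{x'}(p)\) into two parts. The first is the change in the quantile argument, at most \((p+b+h_c)|z_p(x)-z_p(x')|\), because each piecewise-linear integrand is Lipschitz in \(z\). The second is the change of measure at a fixed argument. For this we use \(\E_F[g(\epsilon)]=\int g'(u)(\cdot)\,du\) on the compact support: integration by parts writes each expectation as \(\int_{\underline z}^{\overline z}(\text{const})\,(F_x-F_{x'})\,du\). This is bounded by \((\overline z-\underline z)(p+b+h_c)L_F\norm{x-x'}^\alpha\).

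\emph{Proxy value.} Finally, \(\ProxyG_x(p)=G_x(p)+b\lambda(x,p)\), since the proxy differs from \(Q_x\) by \(b\mu_x(p)\), which does not depend on \(y\). The bound for \(\ProxyG\) therefore follows.

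I expect the main obstacle to be the localization step in the quantile perturbation bound. The density lower bound holds only near the quantiles, so we must check that the perturbed quantile stays inside the \(r_q\)-window before the local inverse Lipschitz bound applies.
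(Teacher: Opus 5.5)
Your proof is correct. It follows the paper's proof essentially step by step on the following parts:
\begin{itemize}
\item the identification \(y_x^*(p)=\lambda(x,p)+z_p(x)\);
\item the closed form \(G_x(p)=p\lambda(x,p)+C_x(p)\);
\item the quantile H\"older bound, with localization when \(\delta<\kappa r_q\), the local inverse-Lipschitz bound inside the \(r_q\)-window, and compact support for far pairs;
\item the transfer to \(y_x^*\), \(q_\rho\) and \(\ProxyG_x=G_x+b\lambda\).
\end{itemize}
Your concavity remark is a small improvement, since the paper simply asserts that the optimizer solves the first-order condition. As a nitpick, continuity of \(F_x\), not the density lower bound, is what makes \(z_p(x)\) a root.

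The genuine difference is the H\"older bound for \(G_x(p)\).

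\emph{The paper's route.} It never uses the closed form there. It bounds \(W_1\) between the demand laws at \(x\) and \(x'\) by \(\{L_x+(\overline z-\underline z)L_F\}\norm{x-x'}^\alpha\), using that profit is Lipschitz in demand. Kantorovich--Rubinstein then gives \(|Q_x(p,y)-Q_{x'}(p,y)|\le C\norm{x-x'}^\alpha\) uniformly in \(y\). Finally it takes maxima over the common inventory interval.

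\emph{Your route.} You work through the optimizer. You bound a quantile-shift term using your quantile H\"older bound, and a change-of-measure term at a fixed argument. Your integration by parts is exactly the one-dimensional identity \(W_1(F_x,F_{x'})=\int|F_x-F_{x'}|\,du\) that the paper uses.

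Both are valid. Yours reuses what you already proved and stays explicit. The cost is that your constant for \(G_x\) inherits \(\kappa^{-1}\) and \((\overline z-\underline z)/(\kappa r_q)\) through \(|z_p(x)-z_p(x')|\).

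The paper's max-of-uniformly-close-functions step is an envelope argument that makes your quantile-shift term unnecessary. Concretely, \(G_x(p)-G_{x'}(p)\le Q_x(p,y_x^*(p))-Q_{x'}(p,y_x^*(p))\), and symmetrically. So for \(G_x\) it needs neither the density lower bound nor the closed form, and it yields H\"older regularity of \(Q_x(p,y)\) uniformly in \(y\) as a byproduct.
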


\begin{proposition}[Uniform price discretization]
\label{prop:uniform-price-grid}
Under \cref{ass:price-stability},
\[
  |\Gamma(\varepsilon)|
  \le
  2+\frac{p_{\max}-p_{\min}}{\varepsilon}
\]
and
\[
  \sup_x
  \left\{
    \max_{p\in\cP}G_x(p)
    -
    \max_{q\in\Gamma(\varepsilon)}G_x(q)
  \right\}
  \le
  L_G\varepsilon.
\]
Under the smooth-price specialization, the grid
\(\Gamma(\sqrt{\varepsilon})\) has cardinality \(O(\varepsilon^{-1/2})\)
and approximation error \(O(\varepsilon)\).
\end{proposition}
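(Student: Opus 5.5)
The plan is to prove the three claims of Proposition~\ref{prop:uniform-price-grid} in order: the cardinality bound, the Lipschitz approximation bound, and then the smooth-price specialization.

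\textbf{Cardinality.} The grid \(\Gamma(\varepsilon)\) is uniform over \([p_{\min},p_{\max}]\), contains both endpoints, and has mesh at most \(\varepsilon\). With \(n=\lceil (p_{\max}-p_{\min})/\varepsilon\rceil\) intervals it has \(n+1\) points, and
\[
n+1\le 2+\frac{p_{\max}-p_{\min}}{\varepsilon}.
\]
Replacing \(\varepsilon\) by \(\sqrt\varepsilon\) gives \(|\Gamma(\sqrt\varepsilon)|=O(\varepsilon^{-1/2})\) immediately.

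\textbf{Lipschitz approximation.} Fix \(x\) and let \(p_x^*\) be the measurable maximizer of \(G_x\) from Section~\ref{sec:setup}. Since the mesh is at most \(\varepsilon\) and both endpoints are included, some \(q\in\Gamma(\varepsilon)\) satisfies \(|q-p_x^*|\le\varepsilon\). Assumption~\ref{ass:price-stability} then gives
\[
\max_{\Gamma(\varepsilon)}G_x\ge G_x(q)\ge G_x(p_x^*)-L_G\varepsilon.
\]
The constant \(L_G\) does not depend on \(x\), so taking the supremum over \(x\) finishes this part. (If one wants to avoid assuming the maximum is attained, use any \(p\) with \(G_x(p)\) within \(\eta\) of the supremum and let \(\eta\to0\).)

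\textbf{Smooth specialization.} This is the only step with real content, because the \(O(\varepsilon)\) error for a grid of mesh \(\sqrt\varepsilon\) needs a second-order argument. Take the nearest grid point \(q\), so \(|q-p_x^*|\le\sqrt\varepsilon\).

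\emph{Interior case.} If \(p_x^*\) is interior to \(\cP\), then \(G_x'(p_x^*)=0\). Taylor's theorem with the bound \(|\partial_{pp}G_x|\le L_{G,2}\) gives
\[
G_x(p_x^*)-G_x(q)\le \tfrac12 L_{G,2}\,\varepsilon.
\]

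\emph{Endpoint case.} I expect this case to be the main obstacle. If \(p_x^*\) is an endpoint, then \(p_x^*\) itself belongs to the grid because both endpoints are included, so the error is zero.

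\emph{Differentiability.} The argument needs \(G_x\) to be \(C^2\), or at least to have an absolutely continuous derivative, so that the first-order condition and the integral form of Taylor's theorem apply. The assumption \(\sup|\partial_{pp}G_x|\le L_{G,2}\) implicitly asserts that the second derivative exists. If only one-sided or a.e.\ derivatives are available, I would instead use the bound \(|G_x'(p)-G_x'(p_x^*)|\le L_{G,2}|p-p_x^*|\) and integrate it, which yields the same bound \(\tfrac12L_{G,2}\varepsilon\).

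Combining the two cases, the approximation error of \(\Gamma(\sqrt\varepsilon)\) is at most \(\tfrac12L_{G,2}\varepsilon\) uniformly in \(x\), which is \(O(\varepsilon)\).
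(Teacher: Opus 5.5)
Your proposal is correct and matches the paper's proof essentially step for step. Both use the same cardinality count from the grid construction and the same nearest-grid-point Lipschitz bound. In the smooth case, both use a second-order Taylor bound at an interior maximizer, where the first derivative vanishes, and note that an endpoint maximizer lies on the grid and so incurs zero loss. Your explicit count $n+1\le 2+(p_{\max}-p_{\min})/\varepsilon$ and your remarks on differentiability and attainment of the maximum only spell out what the paper leaves implicit.
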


\begin{proposition}[Exact proxy identity]
\label{prop:proxy-identity}
For every demand realization,
\[
  Y(p,y,O)=r((p,y),D)+bD.
\]
Consequently, with
\(\ProxyQ_x(p,y)=\E[Y(p,y,O)\mid x,p,y]\), one has
\(\ProxyQ_x(p,y)=Q_x(p,y)+b\mu_x(p)\).  In particular,
\(\argmax_y\ProxyQ_x(p,y)=\argmax_yQ_x(p,y)\),
\(\ProxyG_x(p):=\max_y\ProxyQ_x(p,y)=G_x(p)+b\mu_x(p)\), and
\(\ProxyG_x(p)-\ProxyQ_x(p,y)=G_x(p)-Q_x(p,y)\).
\end{proposition}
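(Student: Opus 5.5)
The plan is to verify the sample-path identity by splitting on whether a stockout occurs, and then read off every remaining claim by taking conditional expectations.

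\emph{Sample-path identity.} With \(O=\min\{D,y\}\), we have \((y-O)=(y-D)^+\) in both regimes.

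\emph{Case \(D\le y\).} Here \(O=D\) and \((D-y)^+=0\). Hence
\[
Y=(p+b)D-h_c(y-D) = pD - h_c(y-D)^+ + bD,
\]
and the right-hand side equals \(r((p,y),D)+bD\).

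\emph{Case \(D>y\).} Here \(O=y\) and \((y-D)^+=0\). Hence
\[
Y=(p+b)y = py - b(D-y) + bD,
\]
and since \(b(D-y)=b(D-y)^+\), this again equals \(r((p,y),D)+bD\).

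The identity therefore holds for every realization. No distributional assumption enters here; the only point to watch is that the holding term \(h_c(y-O)\) coincides with \(h_c(y-D)^+\) on both branches.

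\emph{Expectation step.} Take conditional expectations given \((x,p,y)\). The law of \(D\) given \((x,p)\) does not depend on \(y\), because \(D=\lambda(x,p)+\epsilon\) and \(\epsilon\) is fresh with law \(F_x\). Therefore \(\E[bD\mid x,p,y]=b\mu_x(p)\) and \(\E[r\mid x,p,y]=Q_x(p,y)\). Integrability is automatic, since \(D\) is bounded by \cref{ass:holder,ass:shared-noise}. This gives
\[
\ProxyQ_x(p,y)=Q_x(p,y)+b\mu_x(p).
\]

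\emph{Consequences.} Since \(b\mu_x(p)\) is constant in \(y\), the two functions \(\ProxyQ_x(p,\cdot)\) and \(Q_x(p,\cdot)\) differ by a constant. This yields three facts:
\begin{itemize}
\item they have identical argmax sets over \([y_{\min},y_{\max}]\);
\item taking the max over \(y\) gives \(\ProxyG_x(p)=G_x(p)+b\mu_x(p)\);
\item subtracting \(\ProxyQ_x(p,y)=Q_x(p,y)+b\mu_x(p)\) from this cancels the shift, so \(\ProxyG_x(p)-\ProxyQ_x(p,y)=G_x(p)-Q_x(p,y)\).
\end{itemize}

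For the maxima to be attained, I would note that \(Q_x(p,\cdot)\) is continuous on the compact inventory interval, since \(r\) is Lipschitz in \(y\) and \(D\) is bounded. Alternatively, I would interpret \(\max\) as \(\sup\); the identities hold either way. No step is a genuine obstacle.
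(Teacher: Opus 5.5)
Your proposal is correct and takes essentially the same route as the paper: verify the sample-path identity, take conditional expectations, and note that \(b\mu_x(p)\) does not depend on \(y\). The only cosmetic difference is that the paper replaces your two-case split with the identities \(D=\min\{D,y\}+(D-y)^+\) and \(y-O=(y-D)^+\), which give the sample-path identity in one line.
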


\begin{lemma}[Mean calibration up to a within-context common shift]
\label{lem:calibrated-target}
Under \cref{ass:holder,ass:shared-noise}, define
\(q_\rho(x,p)=\lambda(x,p)+z_\rho(x)\).  Then
\[
  \ProxyG_x(p)-bq_\rho(x,p)
  =
  G_x(p)-bz_\rho(x).
\]
For a fixed \(x\), the rightmost offset is independent of \(p\) and \(y\).
Hence an upper confidence bound for the left-hand side orders prices exactly
as an upper confidence bound for \(G_x(p)\).
\end{lemma}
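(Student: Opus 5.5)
The plan is to derive the identity directly from \cref{prop:proxy-identity}, which already gives \(\ProxyG_x(p)=G_x(p)+b\mu_x(p)\). That identity holds for every fixed \(x\) and \(p\). What remains is to show \(b\mu_x(p)-bq_\rho(x,p)=-bz_\rho(x)\), and we will get this from the location structure.

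\emph{Computing \(\mu_x(p)\).} By the location model in \cref{sec:setup}, \(D=\lambda(x,p)+\epsilon\), where, conditional on \(x\), \(\epsilon\) has law \(F_x\) independent of the price. \Cref{ass:shared-noise} gives mean zero, and compact support gives integrability. Hence \(\mu_x(p)=\E[D\mid x,p]=\lambda(x,p)\). Substituting this and the definition \(q_\rho(x,p)=\lambda(x,p)+z_\rho(x)\), we get
\[
  \ProxyG_x(p)-bq_\rho(x,p)
  =G_x(p)+b\lambda(x,p)-b\lambda(x,p)-bz_\rho(x)
  =G_x(p)-bz_\rho(x).
\]
Here \(z_\rho(x)=F_x^{-1}(\rho)\) is well defined because \(F_x\) is continuous and \(\rho\in(0,1)\). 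The boundedness in \cref{ass:holder} ensures all expectations are finite, so the maximum in \(\ProxyG_x\) is attained wherever that of \(G_x\) is; the proxy identity gives this directly, since the two objectives differ by a \(y\)-independent term.

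\emph{Independence from \(p\) and \(y\).} The offset \(-bz_\rho(x)\) depends only on \(F_x\). By the shared-noise assumption, \(F_x\) is the same for all prices. It also involves no inventory variable, because \(G_x\) has already been maximized over \(y\).

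\emph{The ordering claim.} Fix \(x\). The map \(p\mapsto \ProxyG_x(p)-bq_\rho(x,p)\) is the map \(p\mapsto G_x(p)\) shifted by a constant. Its argmax and all pairwise gaps \(G_x(p)-G_x(p')\) are therefore preserved. Consequently, any valid upper confidence bound \(U\) for the left-hand side yields \(U+bz_\rho(x)\), a valid upper confidence bound for \(G_x(p)\), with the same ordering over prices. The argmax chosen in \cref{alg:mck} is thus unaffected, even though \(z_\rho(x)\) is unknown.

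The only point needing care, and the main potential obstacle, is the step \(\mu_x(p)=\lambda(x,p)\). This step relies on the noise law being conditionally price-independent with mean zero, and that is exactly where \cref{ass:shared-noise} enters. Without it, the offset would become \(p\)-dependent and the reduction to \(G_x\) would fail.
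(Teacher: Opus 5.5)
Your proposal is correct and is essentially the paper's own proof: both substitute \(\mu_x(p)=\lambda(x,p)\), which follows from mean-zero, price-independent noise, and \(q_\rho(x,p)=\lambda(x,p)+z_\rho(x)\) into \(\ProxyG_x(p)=G_x(p)+b\mu_x(p)\) from \cref{prop:proxy-identity}. One minor nitpick: \(z_\rho(x)=F_x^{-1}(\rho)\) is well defined as a generalized inverse for any \(\rho\in(0,1)\), so continuity of \(F_x\) is not what secures this, though the argument does not depend on it.
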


\begin{lemma}[Two-resolution asynchronous profiling guarantee]
\label{lem:profile}
Under
\cref{ass:context,ass:holder,ass:shared-noise,ass:inventory}, take
\(\varepsilon=h^\alpha\).  Let \(N_{\rm prof}(T)\) be the number of rounds up
to horizon \(T\) on which the asynchronous policy performs a profiling
update.  Deterministically,
\[
  N_{\rm prof}(T)
  \le
  C|\Gamma_{\rm prof}|h_{\rm prof}^{-d}\varepsilon^{-1}
  \log^2\!\left(
    \frac{CT|\Gamma_{\rm prof}|h_{\rm prof}^{-d}}{\delta}
  \right)
\]
and, with probability at least \(1-\delta\), every completed cell has returned
estimates satisfying, uniformly in \(p\in\Gamma_{\rm dec}\) and \(x\) in that
cell,
\[
  |\widetilde y_x(p)-y_x^*(p)|
  \le C\sqrt{\varepsilon}
\]
and
\[
  G_x(p)-Q_x(p,\widetilde y_x(p))
  =
  \ProxyG_x(p)-\ProxyQ_x(p,\widetilde y_x(p))
  \le C\varepsilon.
\]
Under the certified resolution \(h\le h_{\rm vis}\),
\[
  \widetilde y_x(p)\ge q_\rho(x,p)+m_q/2,
\]
so the \(\rho\)-quantile of
\(\min\{D,\widetilde y_x(p)\}\) equals \(q_\rho(x,p)\).  No completion
guarantee for cells that have not received enough arrivals is needed.  Since
\(|\Gamma_{\rm prof}|=O(\varepsilon^{-1/2})\) and
\(h_{\rm prof}^{-d}=h^{-d/2}\),
\[
  N_{\rm prof}(T)
  \le \widetilde O\!\left(h^{-d/2}\varepsilon^{-3/2}\right).
\]
\end{lemma}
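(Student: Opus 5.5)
The proof has two parts: a deterministic count of profiling rounds, and a high-probability correctness argument for each bisection.

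\textbf{Counting.} Each pair \((z,a)\in\mathcal Z_{\rm prof}\times\Gamma_{\rm prof}\) runs at most \(B_y\) batches of \(m_y\) indicators, because every non-terminating batch halves \([L_{z,a},U_{z,a}]\), and the width test stops the pair once the width is at most \(C_{\rm tune}\eta\). Each profiling round advances exactly one unfinished pair, so
\[
N_{\rm prof}(T)\le|\mathcal Z_{\rm prof}||\Gamma_{\rm prof}|B_ym_y .
\]
With \(\eta^{-2}=\varepsilon^{-1}\), \(B_y=O(\log(1/\varepsilon))\), \(|\mathcal Z_{\rm prof}|\le Ch_{\rm prof}^{-d}\) and \(\log(1/\varepsilon)\lesssim\log T\), this gives the displayed \(\log^2\) bound. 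Substituting \(|\Gamma_{\rm prof}|=O(\varepsilon^{-1/2})\) and \(h_{\rm prof}^{-d}=h^{-d/2}\) yields \(\widetilde O(h^{-d/2}\varepsilon^{-3/2})\). This step uses no probability.

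\textbf{Concentration within a cell.} Fix a pair \((z,a)\) and one of its batches at level \(y\). Given the contexts, the indicators are independent Bernoulli variables with means \(S_{x_s}(y\mid a)\). These are not identical, because the contexts vary within the cell. Hoeffding's inequality together with a union bound over all \(T|\Gamma_{\rm prof}||\mathcal Z_{\rm prof}|B_y\) batches gives \(|\widehat S-\bar S|\le\eta/2\) with probability \(1-\delta\), where \(\bar S\) is the average of the \(S_{x_s}(y\mid a)\). Since \(\lambda\) is \(\alpha\)-Hölder in context and \(F_x\) is \(\alpha\)-Hölder in Kolmogorov distance, for any \(x\) in the cell
\[
|S_{x_s}(y\mid a)-S_x(y\mid a)|\le (L_xK+L_F)h_{\rm prof}^\alpha=O(\sqrt\varepsilon)=O(\eta).
\]
This is exactly why the profiler resolution is chosen as \(h_{\rm prof}=\varepsilon^{1/(2\alpha)}\). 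With \(C_{\rm tune}\) large, the bias is below \(\eta/2\). Hence on the good event \(|\widehat S-S_x(y\mid a)|\le\eta\) for every \(x\) in the cell.

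\textbf{Correctness of the bisection.} I would argue by induction that \(y_x^*(a)\) stays within \(O(\eta)\) of \([L,U]\) for all \(x\) in the cell, using the monotonicity of \(S_x(\cdot\mid a)\). If \(\widehat S>\tau+2\eta\), then \(S_x(y)>\tau+\eta\), so \(y<y_x^*(a)\) and moving \(L\) up is correct; the case \(\widehat S<\tau-2\eta\) is symmetric. If the algorithm stops in the middle branch, then \(|S_x(y)-\tau|\le3\eta\), and the two-sided inverse bound of \cref{ass:inventory} gives \(|y-y_x^*(a)|\le3\eta/c_y\). One technicality is that the lower bound \(c_y|y-y_x^*|\) only holds within \(r_y\). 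Outside that neighbourhood, monotonicity and the bound at distance \(r_y\) give \(|S_x-\tau|\ge c_yr_y\), which exceeds \(3\eta\) once \(\sqrt{h^\alpha}\) is small; this is guaranteed by the visibility tuning condition. If instead the algorithm stops on width, the midpoint is within \(O(\eta)\) of the root. Since \(y_x^*(a)\) moves by only \(O(h_{\rm prof}^\alpha)=O(\eta)\) across the cell by \cref{prop:location-separation}, all cases give \(|\widetilde y_z(a)-y_x^*(a)|\le C\eta\) uniformly over the cell.

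\textbf{Transfer to decision prices.} For \(p\in\Gamma_{\rm dec}\), the profiler uses the nearest anchor \(\pi(p)\), with \(|p-\pi(p)|\le\sqrt\varepsilon\). By \cref{prop:location-separation}, \(y_x^*(p)=\lambda(x,p)+z_p(x)\). Here \(\lambda\) is \(L_p\)-Lipschitz in price, and \(z_p=F_x^{-1}(\phi(p))\) is Lipschitz in \(p\) because \(\phi\) is smooth and \(f_x\ge\kappa\) near the critical quantiles. This gives \(|\widetilde y_x(p)-y_x^*(p)|\le C\sqrt\varepsilon\). The quadratic-loss part of \cref{ass:inventory} then gives
\[
G_x(p)-Q_x(p,\widetilde y_x(p))\le C_y C^2\varepsilon,
\]
and \cref{prop:proxy-identity} converts this to the proxy gap.

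\textbf{Visibility.} From the visible-quantile condition, \(\widetilde y_x(p)\ge y_x^*(p)-C\sqrt\varepsilon\ge q_\rho(x,p)+m_q-C\sqrt\varepsilon\ge q_\rho+m_q/2\) under the tuning inequality. Since \(\min\{D,y\}\) coincides with \(D\) below \(y\), and \(F_x\) is continuous and strictly increasing near \(z_\rho\), the \(\rho\)-quantile of the sales equals \(q_\rho(x,p)\).

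\textbf{Main obstacle.} The step needing the most care is the heterogeneous-context concentration combined with the bisection invariant. The batch mean estimates an average of stockout curves across the cell rather than one curve, so the \(O(\eta)\) cell bias must be absorbed consistently into both the \(2\eta\) thresholds and the root-drift bound. This must hold uniformly over the cell without any lower bound on the probability mass of the cell.
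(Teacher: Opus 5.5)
Your overall structure matches the paper's: a deterministic count of \(|\mathcal Z_{\rm prof}||\Gamma_{\rm prof}|B_ym_y\) rounds, Hoeffding per batch with a union bound, bisection, transfer from anchor to decision price, quadratic loss, and visibility. \textbf{One step fails as written}: the claim that ``with \(C_{\rm tune}\) large, the bias is below \(\eta/2\).''

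\textbf{Why the bias claim fails.} Since \(h_{\rm prof}^\alpha=\eta\) exactly, the within-cell bias you computed equals \(B\eta\) with \(B:=L_xK+L_F\), a fixed model constant. Neither \(h_{\rm prof}=\varepsilon^{1/(2\alpha)}\) nor the hard-coded thresholds \(\tau(a)\pm2\eta\) involve \(C_{\rm tune}\). That constant enters only \(m_y\), \(B_y\) and the width stop, so it shrinks the stochastic error but never the bias. When \(B>3/2\), the branch \(\widehat S>\tau(a)+2\eta\) only gives \(S_x(y\mid a)>\tau(a)-(B-\tfrac32)\eta\). The level \(y\) can then lie above some individual roots \(y_x^*(a)\), so your exact conclusion ``\(y<y_x^*(a)\), hence moving \(L\) up is correct'' is false. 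Likewise, the middle branch only yields \(|S_x(y\mid a)-\tau(a)|\le(\tfrac52+B)\eta\), not \(3\eta\).

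\textbf{How to repair it.} Actually prove the relaxed invariant you announce. For every \(x\) in the cell, show \(L_{z,a}\le y_x^*(a)+C'\eta\) and \(U_{z,a}\ge y_x^*(a)-C'\eta\), with \(C'=(B-\tfrac32)^+/c_y\). Each endpoint move is compared directly with the fixed roots, using the slope bound of \cref{ass:inventory} inside the \(r_y\)-neighbourhood and monotonicity outside it once \((B+\tfrac52)\eta<c_yr_y\). Because the comparison is always against the fixed roots, the slack does not accumulate over bisection steps. The middle branch then gives \(|y-y_x^*(a)|\le(\tfrac52+B)\eta/c_y\). The width stop gives a midpoint within \((C'+C_{\rm tune}/2)\eta\) of every root in the cell, and the rest of your argument goes through unchanged.

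\textbf{How the paper avoids the issue.} It compares \(\widehat S\) not with individual curves but with the cell-averaged curve \(\overline S_z(y\mid a)=\E[S_{x_t}(y\mid a)\mid x_t\in C_z^{\rm prof}]\). This is unbiased because contexts are i.i.d.\ and the round-robin scheduler ignores the position inside the cell. Its root \(\overline y_z^*(a)\) lies between the smallest and largest individual roots. So the exact invariant \(\overline y_z^*(a)\in[L_{z,a},U_{z,a}]\) holds with the \(2\eta\) thresholds, and heterogeneity enters only once, through \(|\overline y_z^*(a)-y_x^*(a)|\le L_{\rm loc}\eta\). Your pathwise conditioning on contexts, once repaired, has the mild advantage of not using the i.i.d.\ context law in the concentration step.
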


\subsection{Upper and Lower Bounds}

\begin{theorem}[Lipschitz-price contextual upper bound]
\label{thm:upper}
Under
\cref{ass:context,ass:holder,ass:shared-noise,ass:inventory,ass:price-stability},
with \(s=\mathrm{Lip}\), and with the certified tuning inputs in
\cref{app:tuning}, \(\mathsf{MCK}\)-UCB satisfies
\[
  \Reg_T
  \le
  C
  T^{\frac{d+2\alpha}
          {d+3\alpha}}
  \polylog(T).
\]
\end{theorem}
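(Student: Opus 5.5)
We decompose the regret along the realized trajectory into three parts: profiling rounds, UCB rounds, and a failure event. The failure event is controlled by \(\delta=T^{-2}\) and costs at most \(R_{\max}T\delta=O(1/T)\). For horizons with \(h>h_{\rm vis}\) the fallback action is played, which adds only a bounded constant absorbed into \(C\). On profiling rounds the per-round regret is at most \(R_{\max}\). By \cref{lem:profile}, \(N_{\rm prof}(T)=\widetilde O(h^{-d/2}\varepsilon^{-3/2})=\widetilde O(h^{-d/2-3\alpha/2})\). With \(h=(\log T/T)^{1/(d+3\alpha)}\), this is \(\widetilde O(T^{(d/2+3\alpha/2)/(d+3\alpha)})=\widetilde O(T^{1/2})\), which is below the target exponent \((d+2\alpha)/(d+3\alpha)\ge1/2\).

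On a UCB round at \(x_t\) in a profiled cell we split the instantaneous regret as
\[
  V_{x_t}-Q_{x_t}(p_t,y_t)=\{V_{x_t}-\max_{q\in\Gamma}G_{x_t}(q)\}+\{G_{x_t}(p^\Gamma_t)-G_{x_t}(p_t)\}+\{G_{x_t}(p_t)-Q_{x_t}(p_t,y_t)\}.
\]
The first term is at most \(L_G\varepsilon\) by \cref{prop:uniform-price-grid}, and the third is at most \(C\varepsilon\) by \cref{lem:profile}. Both contribute \(O(T\varepsilon)\) in total.

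For the middle term we use \cref{lem:calibrated-target}: \(G_x(p)-bz_\rho(x)=\ProxyG_x(p)-bq_\rho(x,p)\), so it suffices to show that \(U_t(x_t,p)\) is a valid upper confidence bound for the calibrated target \(\ProxyG_{x_t}(p)-bq_\rho(x_t,p)\) with width \(O(\sqrt{\log/W_t}+\varepsilon+h^\alpha)\). Two facts feed this.

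First, the kernel-weighted proxy mean concentrates around a weighted average of \(\ProxyQ_{x_s}(p,\widetilde y_{x_s}(p))\) over \(x_s\) within distance \(h\). A self-normalized martingale bound, with a union bound over \(p\in\Gamma\) and a cover of \(x\) at scale \(h\), gives this. Then \cref{prop:location-separation} and the profiling guarantee move the target to \(\ProxyG_{x_t}(p)\) at a cost of \(O(h^\alpha+\varepsilon)\). Here it matters that the inventory loss enters quadratically.

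Second, the weighted sales quantile estimates \(q_\rho\) at the same rate. Because \(\widetilde y\ge q_\rho+m_q/2\), the censored CDF agrees with the demand CDF near \(u=q_\rho\). The DKW-type weighted bound gives a CDF error of \(\sqrt{\log/W}\), and the context drift in \(F_x\) and \(\lambda\) adds \((L_F+L_x\kappa)h^\alpha\). The density lower bound \(\kappa\) then inverts these into a quantile error of the same order. The tuning conditions keep everything inside the radius \(r_q\).

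On the good event, optimism then gives
\[
  G_{x_t}(p^\Gamma_t)-G_{x_t}(p_t)\le 2C_{\rm tune}\sqrt{\log(T|\Gamma|/h)/\max\{1,W_t(x_t,p_t)\}}+O(\varepsilon+h^\alpha).
\]

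The main obstacle is the counting step: bounding \(\sum_t W_t(x_t,p_t)^{-1/2}\) with no lower density on \(\mu\). I would partition \([0,1]^d\) into cells of diameter \(h/2\), of which there are \(O(h^{-d})\). For \(x_t,x_s\) in the same cell, \(K((x_s-x_t)/h)\ge k_0\), so \(W_t(x_t,p)\ge k_0 n_t(c,p)\), where \(n_t(c,p)\) is the number of earlier UCB plays of \(p\) in cell \(c\). The infinite index handles the first visit; after that no density is needed, because we only count visits. This gives
\[
  \sum_{c,p}\sum_{n\le N_{c,p}}n^{-1/2}\lesssim\sum_{c,p}\sqrt{N_{c,p}}\le\sqrt{T|\Gamma|h^{-d}}
\]
by Cauchy--Schwarz, plus \(|\Gamma|h^{-d}\) bounded-regret first plays. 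The latter equals \(h^{-d-\alpha}\), which is \(\lesssim T\varepsilon\) at the chosen \(h\).

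Collecting terms, the regret is
\[
  \widetilde O\big(h^{-d/2}\varepsilon^{-3/2}+T h^\alpha+\sqrt{T h^{-\alpha}h^{-d}}\big).
\]
Substituting \(h=(\log T/T)^{1/(d+3\alpha)}\) makes the last two terms equal to \(T^{(d+2\alpha)/(d+3\alpha)}\polylog T\), and the first was shown above to be lower order.
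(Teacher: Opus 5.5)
Your proposal is correct and follows essentially the same route as the paper. It uses the same profiling/UCB/failure-event split, the same three-term decomposition of each UCB round with optimism through the calibrated target \(G_x(p)-bz_\rho(x)\), and the same diameter-\(h/2\) cell counting with Cauchy--Schwarz; your separate charge for first plays is equivalent to the paper's \(\max\{1,W_t\}\) truncation.

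Two small fixes are needed. First, a cover of the context space at scale \(h\) is too coarse to transfer the kernel-weighted concentration, because \(K((x_s-x)/h)\) changes by \(O(1)\) over an \(h\)-ball. Either cover at scale \(h/T\) using the kernel's Lipschitz property, or, as the paper does, condition on the fresh context \(x_t\) (which is independent of the past) and union-bound only over time--price queries. Second, the location-drift term in the CDF error should be \(KL_xh^\alpha\), with \(K\) the density upper bound, not \(L_x\kappa h^\alpha\).
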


\begin{corollary}[Twice-smooth price upper bound]
\label{cor:smooth-upper}
Under the smooth-price specialization in
\cref{ass:price-stability}, invoke \(\mathsf{MCK}\)-UCB with
\(s=\mathrm{smooth}\), using the same fixed tuning constants and fallback
rules.  Its price mesh is \(\Delta_p=\sqrt{\varepsilon}\), with
\[
  h=(\log(eT)/T)^{2/(2d+5\alpha)},
  \qquad
  \varepsilon=h^\alpha.
\]
Then
\[
  \Reg_T
  \le
  C
  T^{\frac{2d+3\alpha}{2d+5\alpha}}
  \polylog(T).
\]
\end{corollary}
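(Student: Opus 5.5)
The proof will reuse the argument for \cref{thm:upper} and change only the price-grid step. I will decompose the regret on the event of probability at least \(1-O(\delta)\) where the profiler guarantee of \cref{lem:profile} and the kernel concentration bounds of \cref{app:kernel-concentration} hold simultaneously. The complement contributes at most \(R_{\max}T\delta=O(1/T)\). Horizons with \(h>h_{\rm vis}\) are finitely many and contribute bounded regret, which is absorbed into \(C\).

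On the good event I split rounds into profiling and UCB rounds. By \cref{lem:profile}, profiling rounds contribute at most \(R_{\max}N_{\rm prof}(T)=\widetilde O(h^{-d/2}\varepsilon^{-3/2})\). On each UCB round I write
\[
V_{x_t}-Q_{x_t}(p_t,y_t)=\{V_{x_t}-\max_{q\in\Gamma_{\rm dec}}G_{x_t}(q)\}+\{\max_{q}G_{x_t}(q)-G_{x_t}(p_t)\}+\{G_{x_t}(p_t)-Q_{x_t}(p_t,y_t)\}.
\]
\emph{First term (grid approximation).} Under the smooth specialization I use \cref{prop:uniform-price-grid} with the grid \(\Gamma(\sqrt\varepsilon)\). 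At an interior maximizer \(\partial_pG_x(p^*_x)=0\), so a second-order Taylor bound gives loss at most \(L_{G,2}\varepsilon/2\). At a boundary maximizer the grid contains the endpoint, so the loss there is zero. This step is the only place where smoothness is used.

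\emph{Third term (inventory).} The profiled inventory \(\widetilde y_x(p)\) is reused at the nearest anchor \(\pi(p)\), and \(|p-\pi(p)|\le\sqrt\varepsilon\). \cref{lem:profile} already states the bound \(C\varepsilon\) uniformly over \(p\in\Gamma_{\rm dec}\), so this term is \(O(\varepsilon)\) per round.

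\emph{Middle term (optimism).} \cref{lem:calibrated-target} shows that \(\widehat G^{\rm cal}_t\) estimates \(G_x(p)-bz_\rho(x)\). The offset \(-bz_\rho(x)\) is common to all prices, so price gaps are preserved. \cref{lem:profile} makes \(q_\rho\) visible, and \cref{prop:location-separation} controls the smoothing bias at \(O(h^\alpha)\). The standard UCB argument then bounds the gap by twice the confidence width plus \(O(\varepsilon+h^\alpha)\).

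Summing the three terms, the counting lemma of \cref{app:kernel-concentration} gives
\[
\Reg_T\lesssim h^{-d/2}\varepsilon^{-3/2}\log^2T+T(\varepsilon+h^\alpha)+\sqrt{T|\Gamma_{\rm dec}|h^{-d}\log T}+T\cdot\#\{\text{rounds with }W_t=0\}\text{-term},
\]
where the last term is lower order and is covered by the same counting (\(O(|\Gamma|h^{-d})\) rounds).

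It remains to substitute the tuning. With \(\varepsilon=h^\alpha\) and \(|\Gamma_{\rm dec}|\asymp h^{-\alpha/2}\), the statistical term is \(\sqrt{T h^{-d-\alpha/2}}\). Setting \(h=(\log(eT)/T)^{2/(2d+5\alpha)}\) gives \(Th^\alpha=T^{1-2\alpha/(2d+5\alpha)}=T^{(2d+3\alpha)/(2d+5\alpha)}\). Likewise \(Th^{-d-\alpha/2}=T\cdot T^{(2d+\alpha)/(2d+5\alpha)}\), whose square root is the same power up to logarithms.

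\textbf{Main obstacle.} The step needing the most care is showing that the profiling term stays lower order. It equals \(h^{-d/2-3\alpha/2}=T^{(d+3\alpha)/(2d+5\alpha)}\), and the comparison \(d+3\alpha\le 2d+3\alpha\) holds trivially. The \(|\Gamma_{\rm dec}|h^{-d}\) count of unvisited neighborhoods satisfies \(h^{-d-\alpha/2}\le Th^\alpha\) by the same balance. Everything else carries over verbatim from the proof of \cref{thm:upper}, with \(|\Gamma_{\rm dec}|\) changed from \(\varepsilon^{-1}\) to \(\varepsilon^{-1/2}\). I will also check that the confidence width \(\log(T|\Gamma_{\rm dec}|/h)\) remains \(O(\log T)\), which it does.
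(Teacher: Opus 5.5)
Your proposal is correct and follows essentially the same route as the paper: you rerun the good-event, profiling-count, optimism, and counting argument from \cref{thm:upper} unchanged, replace the decision-grid size by \(O(\varepsilon^{-1/2})\) with \(O(\varepsilon)\) discretization loss via the Taylor/endpoint argument of \cref{prop:uniform-price-grid}, substitute \(\varepsilon=h^\alpha\) and the stated bandwidth, and check that the profiling exponent \((d+3\alpha)/(2d+5\alpha)\) is lower order. The only cosmetic difference is your separate zero-weight term, which should read \(R_{\max}\) (not \(T\)) times the \(O(|\Gamma_{\rm dec}|h^{-d})\) count; the paper instead absorbs those rounds through the \(\max\{1,W_t\}\) truncation in the per-round bound, so the extra term is unnecessary.
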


\begin{theorem}[Context-conditioned shared-noise lower bounds]
\label{thm:lower}
Assume \(b>0\) and \(h_c>0\), and let \(0<\alpha\le1\).
There exist fixed nondegenerate compact price and inventory ranges and a
context-conditioned shared-noise location subclass satisfying
\cref{ass:context,ass:holder,ass:shared-noise,ass:inventory,ass:price-stability}
such that every policy obeys, under Lipschitz price regularity,
\[
  \sup_{\cI}\Reg_T(\pi;\cI)
  \ge
  c
  T^{\frac{d+2\alpha}
          {d+3\alpha}}.
\]
Under the twice-smooth price specialization, the lower bound is
\[
  cT^{\frac{2d+3\alpha}{2d+5\alpha}}.
\]
\end{theorem}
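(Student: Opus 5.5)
We prove \cref{thm:lower} by a standard Le Cam/Assouad-type construction: embed a family of hard contextual pricing instances inside the shared-noise location subclass, then argue that sales-only feedback carries no more information than fully observed demand, so a lower bound for the uncensored problem transfers. We fix a noise law \(F_x\equiv F\) with smooth, compactly supported density bounded below on the relevant quantile windows; for instance, a rescaled raised-cosine on \([\underline z,\overline z]\). We also fix a baseline demand surface \(\lambda_0(x,p)=\lambda_0(p)\), linear in \(p\), chosen so that \(G^0(p)=p\lambda_0(p)+C(p)\) from \cref{prop:location-separation} is strictly concave with a unique interior maximizer \(p_0\). We choose inventory ranges large enough that \(y^*(p)=\lambda_0(p)+z_p\) and \(q_\rho(p)\) satisfy \cref{ass:inventory} with slack. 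With \(b,h_c>0\), \(G^0\) has strictly negative curvature, so \(G^0(p_0)-G^0(p)\asymp(p-p_0)^2\) near \(p_0\).

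\textbf{Perturbations.} Let \(h\) be a bandwidth to be tuned and partition \([0,1]^d\) into \(M\asymp h^{-d}\) cubes \(B_j\) with centers \(c_j\). Take \(\mu\) uniform. For a sign vector \(\sigma\in\{\pm1\}^M\), set
\[
\lambda_\sigma(x,p)=\lambda_0(p)+\sum_j \sigma_j\,h^\alpha\,\psi\big((x-c_j)/h\big)\,\chi_j(p),
\]
where \(\psi\) is a smooth bump. The price profile \(\chi_j\) is chosen so that the induced change in \(G\) shifts the optimal price within \(B_j\) to \(p_0+\sigma_j\Delta\). In the Lipschitz class we take a triangular bump of height \(\varepsilon=h^\alpha\) over a price window of width \(\Delta\asymp\varepsilon\). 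In the twice-smooth class we take a smooth bump of height \(\varepsilon\) and width \(\Delta\asymp\sqrt\varepsilon\); this keeps \(\partial_{pp}G\) bounded, which is exactly why the grid, and hence the rate, changes. Since the perturbation is a pure location shift with a fixed noise law, \cref{ass:shared-noise} holds. Hölder continuity in \(x\) holds with exponent \(\alpha\) because the amplitude is \(h^\alpha\), and small perturbations preserve \cref{ass:inventory,ass:price-stability}. By \cref{prop:location-separation}, \(G_{x,\sigma}(p)=G^0(p)+p\,\delta\lambda_\sigma(x,p)\). In each cube, any price not within \(\Delta/2\) of the correct side therefore incurs regret \(\gtrsim\varepsilon\), by Lipschitzness or by curvature respectively, for contexts in the core of \(B_j\). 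The inventory decision can only add regret.

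\textbf{Information bound.} The censored observation \(O_t=\min\{D_t,y_t\}\) is a measurable function of \(D_t\) and the action, so by data processing the KL divergence between observation laws under \(\sigma\) and \(\sigma^{(j)}\) (flip coordinate \(j\)) is at most the KL divergence of uncensored demand. Because \(F\) has a smooth density, shifting the location by \(\eta\) gives KL \(\lesssim\eta^2\). The shift is \(\lesssim\varepsilon\), and it is nonzero only when \(x_t\in B_j\) and \(p_t\) lies in the informative window. Summing over rounds gives \(\mathrm{KL}\lesssim\varepsilon^2\,\E[N_j]\), where \(N_j\) counts rounds in which \(x_t\in B_j\) and the played price lies in the perturbation window. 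A standard Assouad argument then follows. Either \(\E N_j\gtrsim\varepsilon^{-2}\) for most \(j\), or the learner cannot identify \(\sigma_j\) and pays \(\varepsilon\) on a constant fraction of the \(\asymp T h^d\) visits to \(B_j\).

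\textbf{Tuning.} Exploration is costly. Playing in the informative window while avoiding the wrong side still costs order \(\varepsilon\) per round, since the window is \(\Delta\)-wide on each side: the two hypotheses' optima lie in disjoint halves. To make this sharp we use \(K\asymp\Delta^{-1}\) candidate optimal locations per cube instead of the two from the sign vector, a multi-armed embedding within each cube. Identification then requires \(\gtrsim K\varepsilon^{-2}\) samples per cube, each costing \(\varepsilon\). The regret is
\[
\gtrsim \min\{T\varepsilon,\ h^{-d}K\varepsilon^{-1}\}.
\]
For the Lipschitz class, \(K\asymp\varepsilon^{-1}\) and \(\varepsilon=h^\alpha\). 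Setting \(Th^\alpha=h^{-d-2\alpha}\) gives \(h=T^{-1/(d+3\alpha)}\) and regret \(T^{(d+2\alpha)/(d+3\alpha)}\). For the smooth class, \(K\asymp\varepsilon^{-1/2}\). Setting \(Th^\alpha=h^{-d-3\alpha/2}\) gives \(h=T^{-2/(2d+5\alpha)}\) and regret \(T^{(2d+3\alpha)/(2d+5\alpha)}\).

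\textbf{Main obstacle.} The delicate step is the multi-arm embedding inside a single context cube. The \(K\) alternative bumps must have disjoint price supports and each must preserve all of \cref{ass:inventory}: unique interior critical root, quantile separation \(m_q\), and the quadratic inventory-gap bound, uniformly. We handle this by keeping perturbations \(O(\varepsilon)\) in sup-norm and relying on the slack built into the baseline. A Fano/Assouad bound over \(\prod_j[K]\) with a null baseline then finishes the proof.
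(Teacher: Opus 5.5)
Your architecture matches the paper's: location-shift bumps of amplitude $\asymp h^\alpha$ on $h$-cubes, $K\asymp(\text{bump width})^{-1}$ disjoint price bumps per cube, data processing through censoring, a per-cube comparison with a null experiment, a product prior over cube labels, and the balance $Th^d\varepsilon^2/K\asymp1$. The gap is the baseline, which cannot carry the multi-arm embedding. You take $G^0$ strictly concave with $G^0(p_0)-G^0(p)\asymp(p-p_0)^2$. Your $K$ bumps have disjoint supports of width $\Delta$, so they fill a price interval of length $\asymp K\Delta\asymp1$ in both regimes. The context-H\"older condition caps each bump's effect on $G$ at $O(h^\alpha)=O(\varepsilon)$. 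Hence every perturbed maximizer stays within $O(\sqrt\varepsilon)$ of $p_0$, and bumps farther out do not move the optimum, so those hypotheses cannot force regret. Only about $\sqrt\varepsilon/\Delta$ genuine candidate optima fit: $\varepsilon^{-1/2}$ for Lipschitz prices and $O(1)$ for twice-smooth prices. Your own formula $\min\{T\varepsilon,h^{-d}K\varepsilon^{-1}\}$ then gives only $T^{(2d+3\alpha)/(2d+5\alpha)}$ and $T^{(d+\alpha)/(d+2\alpha)}$, strictly below the claimed exponents. The missing idea is a baseline whose optimized value is exactly flat on a fixed price interval. \cref{lem:lower-baseline} takes $\lambda_0(p)=(C_0-C_{F_0}(p))/p$ there, so $\lambda_0$ is not linear. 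It uses a cutoff to make $G_0$ strictly smaller outside that interval and reserves slack for \cref{ass:inventory}. Under label $k$, every price off the $k$-th bump then has gap exactly the bump height, so all $K$ arms are genuine candidates.

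The information step also fails as written. \cref{ass:shared-noise} requires compact support, and two shifts of a compactly supported density are not mutually absolutely continuous. So the per-round KL is $+\infty$ however smooth $F$ is: after an upward shift by $\eta$, the unshifted law charges $[\lambda+\underline z,\lambda+\underline z+\eta)$, where the shifted law has no mass. Censoring does not remove this. Any inventory above $\lambda+\underline z+\eta$ leaves that region uncensored, and this includes every near-optimal inventory, since $y_x^*(p)\ge q_\rho(x,p)+m_q$. The paper uses squared Hellinger distance instead. Finite translation energy $I_H=4\int|(\sqrt{f_0})'|^2<\infty$ gives a per-round bound $O(\eta^2)$ (\cref{lem:location-information}); your raised cosine qualifies, since its square root is Lipschitz. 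Data processing still covers censoring. A sequential Hellinger bound under the null gives $H^2\le C\varepsilon^2\,\E^{(j,0)}[N_{j,k}]$, where $N_{j,k}$ counts rounds with $x_t\in B_j$ and $p_t$ in the $k$-th window. Averaging $\operatorname{TV}\le\sqrt{2H^2}$ over $k$ then replaces your KL-based step.
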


\begin{corollary}[Context-conditioned shared-noise minimax rates]
\label{cor:minimax}
Fix common numerical envelopes for all constants in
\cref{ass:context,ass:holder,ass:shared-noise,ass:inventory,ass:price-stability}
and the fixed tuning constants, as well as fixed action ranges containing the
subclass in \cref{thm:lower}.  Let \(\mathfrak I_{\rm Lip}\) be the resulting
context-conditioned shared-noise class with Lipschitz price values, and let
\(\mathfrak I_{\rm sm}\) be its twice-smooth specialization.  For fixed
\(b,h_c>0\), the upper and lower bounds yield
\[
  \inf_\pi\sup_{\cI\in\mathfrak I_{\rm Lip}}\Reg_T(\pi;\cI)
  =
  \widetilde\Theta\!\left(
    T^{\frac{d+2\alpha}
            {d+3\alpha}}
  \right).
\]
Under twice-smooth price values the corresponding rate is
\[
  \inf_\pi\sup_{\cI\in\mathfrak I_{\rm sm}}\Reg_T(\pi;\cI)
  =
  \widetilde\Theta\!\left(
      T^{\frac{2d+3\alpha}{2d+5\alpha}}
    \right).
\]
\end{corollary}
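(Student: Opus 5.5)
The corollary is a direct combination of the upper bounds (\cref{thm:upper} and \cref{cor:smooth-upper}) with the lower bound (\cref{thm:lower}). Before combining them, I will check that all three results refer to the same instance class. Concretely, I will define \(\mathfrak I_{\rm Lip}\) by fixing numerical envelopes for \(L_x,L_p,L_F,K,\kappa,m_q,r_q,r_y,c_y,C_y,L_G\), the support \([\underline z,\overline z]\), the level \(\rho\), and action ranges that contain the ranges of the hard subclass in \cref{thm:lower}. I then verify two things. First, the hard subclass constructed in \cref{thm:lower} has constants lying within these envelopes, so it is contained in \(\mathfrak I_{\rm Lip}\), and its twice-smooth version is contained in \(\mathfrak I_{\rm sm}\). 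Second, the constant \(C\) in \cref{thm:upper} depends only on these envelopes, \(d\), \(\alpha\), \(b\), \(h_c\), and the fixed tuning constants \(C_{\rm tune},h_{\rm vis}\) chosen in \cref{app:tuning} to dominate the envelope constants. Under these two checks, the upper bound holds uniformly over the class.

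For the upper side, I bound the minimax value by the worst case of the single policy MCK-UCB. This gives \(\inf_\pi\sup_{\cI}\Reg_T\le\sup_{\cI}\Reg_T(\mathsf{MCK})\le C T^{(d+2\alpha)/(d+3\alpha)}\polylog(T)\), with the analogous bound from \cref{cor:smooth-upper} for \(\mathfrak I_{\rm sm}\). For horizons short enough that \(h>h_{\rm vis}\), the fallback action gives regret at most \(R_{\max}T\). Only finitely many such \(T\) occur, so this is absorbed by enlarging \(C\).

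For the lower side, every policy \(\pi\) satisfies \(\sup_{\cI\in\mathfrak I_{\rm Lip}}\Reg_T(\pi;\cI)\ge\sup_{\cI\in\text{subclass}}\Reg_T(\pi;\cI)\ge cT^{(d+2\alpha)/(d+3\alpha)}\), because the supremum over a larger class can only be larger. Taking the infimum over \(\pi\) gives the matching lower bound, and the smooth case is identical. Combining the two sides yields \(\widetilde\Theta\) in each case.

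The main obstacle is the compatibility bookkeeping rather than any estimate. The hard instances must satisfy the visible-quantile separation, the density lower bounds near both quantiles, and the inventory curvature with the same fixed constants used to certify the tuning. They must also do so uniformly in \(T\), since the lower-bound construction perturbs \(\lambda\) at scale \(h^\alpha\). I would handle this by noting that these perturbations are bounded by \(L_x h^\alpha\le L_x\), so the subclass stays inside fixed envelopes for all \(T\). Any remaining mismatch can be removed by enlarging the envelopes, which changes only the constant \(C\).
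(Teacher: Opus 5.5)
Your proposal is correct and matches the paper: the corollary is obtained by bounding the minimax value above by MCK-UCB's uniform worst-case bound (\cref{thm:upper,cor:smooth-upper}) and below by the hard subclass of \cref{thm:lower}. One correction to your bookkeeping is needed. The $T$-uniform envelopes for the hard instances do not follow from the sup-norm bound on the perturbation. That bound alone does not control the context-H\"older, $L_p$/$L_G$, or $L_{G,2}$ constants, and ``enlarging the envelopes'' only works if the enlargement is $T$-independent. The uniformity instead comes from the scalings $\Delta=c_\Delta h^\alpha$ and $w\in\{\Delta,\sqrt\Delta\}$ verified in \cref{lem:lower-perturbation-stability}, which you can simply cite.
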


\section{Experimental Details}
\label{app:experiment-details}

\paragraph{Experimental environment and feedback.}
All experiments use $p\in[0.1,1]$, $y\in[0,2.2]$, shortage cost $b=0.4$,
holding cost $h_c=1$, and visible quantile level $\rho=0.2$.
Contexts are i.i.d. uniform on $[0,1]^d$ and

\[
  D_t=\lambda(x_t,p_t)+\epsilon_t,
  \qquad
  O_t=\min\{D_t,y_t\},
\]

where we specify the context-conditioned residual family as
$F_x\equiv\operatorname{Unif}[-a,a]$ for every $x$.  Thus the residual law is
the same across prices at each context and, in this controlled design, also
does not vary with context.  The learner receives only $O_t$ and updates the
observable proxy

\[
  Y_t=(p_t+b)O_t-h_c(y_t-O_t).
\]

For the scaling and mismatch experiments, $a=0.5$.  The constant residual
family has H\"older modulus zero, so the environmental exponent $\alpha_0$ is
introduced through the location surface.  To make this smoothness explicit,
define the centered cusp

\[
 \psi_{c,\alpha_0}(u)
 =|u-c|^{\alpha_0}
  -\frac{c^{\alpha_0+1}+(1-c)^{\alpha_0+1}}{\alpha_0+1}.
\]

For $d=1$, the location surface is

\[
\lambda_{\alpha_0}(x,p)
=1.55-0.58\psi_{0.35,\alpha_0}(x)
 +0.22\psi_{0.78,\alpha_0}(x)-0.42p
 +0.16p\psi_{0.62,\alpha_0}(x).
\]

For $d=2$, it is

\begin{align*}
\lambda_{\alpha_0}(x,p)={}&1.55
-0.42\psi_{0.32,\alpha_0}(x_1)
+0.28\psi_{0.79,\alpha_0}(x_1)\\
&-0.36\psi_{0.41,\alpha_0}(x_2)
+0.24\psi_{0.73,\alpha_0}(x_2)-0.42p\\
&+0.10p\psi_{0.61,\alpha_0}(x_1)
-0.08p\psi_{0.27,\alpha_0}(x_2).
\end{align*}

Each surface is $\alpha_0$-H\"older and contains an $\alpha_0$-order cusp, so
changing $\alpha_0$ changes the environment rather than only the algorithmic
bandwidth.

The two-dimensional baseline comparison uses $a=0.1$ and the instance

\begin{align*}
w(x)&=0.58+0.40\sin(2\pi x_1)+0.07\sin(2\pi x_2),\\
m(x)&=1.42+0.14\cos(2\pi x_2)
       +0.08\sin(2\pi(x_1+x_2)),\\
\lambda(x,p)&=0.10+\frac{m(x)}{1+\exp(10(p-w(x)))}.
\end{align*}

\paragraph{Paper-aligned MCK-UCB implementation.}
We use the Lipschitz-price mode of \cref{alg:mck}.  At declared horizon $T$,

\[
  h=\left(\frac{\log(eT)}{T}\right)^{1/(d+3\alpha)},
  \qquad
  \varepsilon=h^\alpha,
  \qquad
  h_{\rm prof}=\varepsilon^{1/(2\alpha)}.
\]

The decision-price mesh is $\varepsilon$, the profiler-anchor mesh is
$\sqrt\varepsilon$, and a decision price uses the completed inventory estimate
of its nearest anchor.  Profiler cells have diameter at most $h_{\rm prof}$.
Within every unfinished cell, anchors are visited round-robin; each midpoint
threshold is held fixed for a batch of

\[
  m_y=\left\lceil C_{\rm tune}\varepsilon^{-1}
  \log(T|\Gamma_{\rm prof}||\mathcal Z_{\rm prof}|B_y/T^{-2})
  \right\rceil
\]

observations and is compared with $\tau(p)\pm2\sqrt\varepsilon$.
Completed cells switch immediately to calibrated kernel UCB, while unfinished
cells continue profiling on their own arrivals.  Only UCB rounds enter the
price-specific proxy and sales-quantile histories.  We set
$C_{\rm tune}=0.1$ and $h_{\rm vis}=1$ before running any simulation and do
not select either value by seed, horizon, environment, or observed reward.
$C_{\rm tune}$ is a fixed scale factor in batch lengths and confidence radii;
fixed admissible choices affect multiplicative constants and finite-sample
transients but not the regret exponent.  The value $h_{\rm vis}$ is only a
technical startup cutoff.  Every experimental bandwidth satisfies $h\le1$,
so this cutoff never changes an action in the reported experiments.

The scaling grid contains
$T\in\{1000,3000,10000,30000,100000\}$ and five repetitions for each of

\[
 (d,\alpha)\in
 \{(1,0.5),(1,0.75),(1,1),(2,0.5),(2,0.75),(2,1)\}.
\]

The top row of \cref{fig:scaling-d-alpha} fixes $d=1$ and the bottom row fixes
$d=2$.  In this experiment the algorithm input equals the environmental
smoothness, $\widehat\alpha=\alpha_0=\alpha$.

\paragraph{Smoothness-mismatch diagnostic.}
We separately fix $T=8000$ and cross
$\alpha_0\in\{0.5,0.75,1\}$ in the cusp environment with the algorithm input
$\widehat\alpha\in\{0.5,0.75,1\}$.  Each cell uses six repetitions.  For fixed
$(d,\alpha_0,r)$, all three values of $\widehat\alpha$ receive the same
pre-generated context and residual-shock arrays.  \Cref{fig:alpha-mismatch}
shows that $\widehat\alpha=1$ has the lowest mean regret for every tested
$(d,\alpha_0)$, including the two rough $\alpha_0=0.5$ environments.  Thus
matching $\widehat\alpha$ to $\alpha_0$ is not empirically necessary on this
benchmark.  This is a finite-sample robustness diagnostic, not a general
smoothness-adaptation guarantee.

\paragraph{Common random numbers and baselines.}
Our benchmark comparison is designed to isolate the online feedback
contribution.  The closest available online methods handle joint inventory
and pricing with censored demand in the noncontextual setting: the CWZ
trisection method of
\citet{chen2024optimal} and a SALA-style implementation based on
\citet{chen2021nonparametric}.  They provide relevant pricing--inventory
benchmarks rather than model-identical competitors; both implementations pool
away the context, whereas MCK-UCB uses contextual kernel estimates.
Recent feature-based censored-pricing work is offline and is therefore not a
drop-in online baseline for the present experiment.

Before any policy starts, each seed generates a complete context array and a
complete residual-shock array from two independent child random streams.
MCK-UCB, CWZ, and SALA receive the same pair $(x_t,\epsilon_t)$ on every calendar
round, so their actions cannot change later contexts or shocks.  The
two-dimensional comparison uses $T=8000$ and eight repetitions.  The CWZ
implementation uses minimum epoch size $8$, sample scale $0.006$, and final
exploitation fraction $0.35$.  The SALA-style implementation explores seven
prices, uses exploration scale $1.1$, estimates a pooled demand curve and
residual distribution, and then exploits one price-inventory pair.  Both
baselines deliberately ignore the context.

We use a two-dimensional instance in which the context changes both
willingness to pay and market size.  At $T=8000$, over eight repetitions, the
mean cumulative regrets of MCK-UCB, CWZ, and SALA are $2570.1$, $4957.7$,
and $5482.1$, respectively.

\begin{figure}[t]
  \centering
  \includegraphics[width=\textwidth]{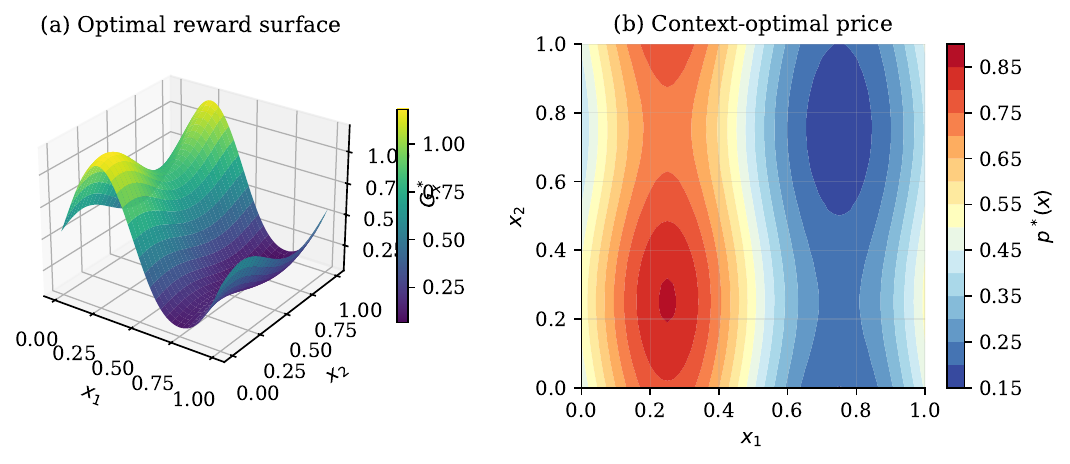}
  \caption{Geometry of the two-dimensional instance used in the
  baseline comparison.  Left: optimized reward
  $G_x^*=\max_pG_x(p)$.  Right: context-optimal price $p^*(x)$.  The
  substantial price variation makes context information decision relevant.}
  \label{fig:surface-2d}
\end{figure}

\begin{figure}[t]
  \centering
  \includegraphics[width=\textwidth]{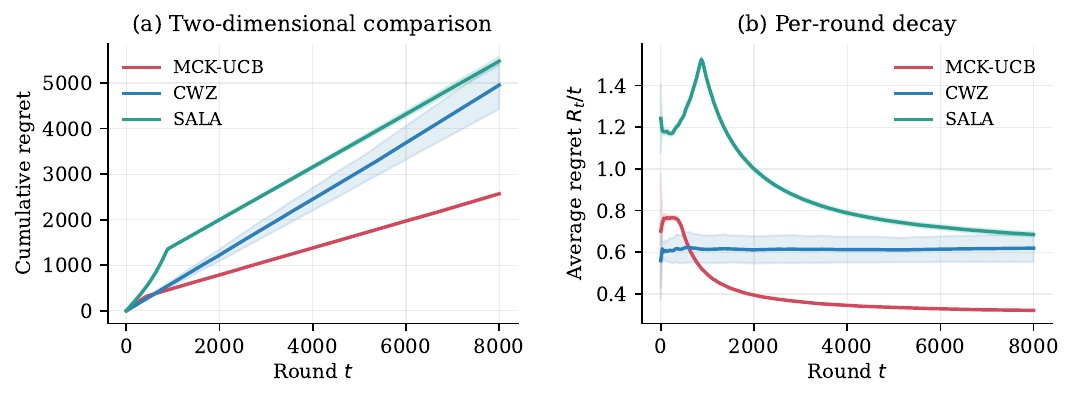}
  \caption{Two-dimensional comparison with neighboring online joint
  pricing--inventory methods under common exogenous context and residual-shock
  trajectories.  Curves are means over eight repetitions and shaded regions
  are 95\% confidence intervals.  CWZ and SALA are noncontextual prior-work
  baselines and pool away the context, whereas MCK-UCB uses contextual kernel
  estimates.}
  \label{fig:comparison-2d}
\end{figure}

For reproducibility, the scaling seed $(d,\alpha_0,T,r)$ and the mismatch seed
$(d,\alpha_0,T,r)$ are both

\[
  20260824+7919d+1000\alpha_0+9176T+1009r,
\]

independently of $\widehat\alpha$ in the mismatch experiment.  The comparison
seed $(T,r)$ is $20260824+9176T+1009r$.  The regret benchmark
maximizes $G_x(p)$ on an 801-point price grid over $[0.1,1]$.  One-round regret
is the expected-value gap

\[
  G_{x_t}^*-Q_{x_t}(p_t,y_t),
\]

not realized-profit noise or price distance.  Reported curves are means across
repetitions; shaded bands are mean plus or minus $1.96$ standard errors.

\begin{figure}[t]
  \centering
  \includegraphics[width=\textwidth]{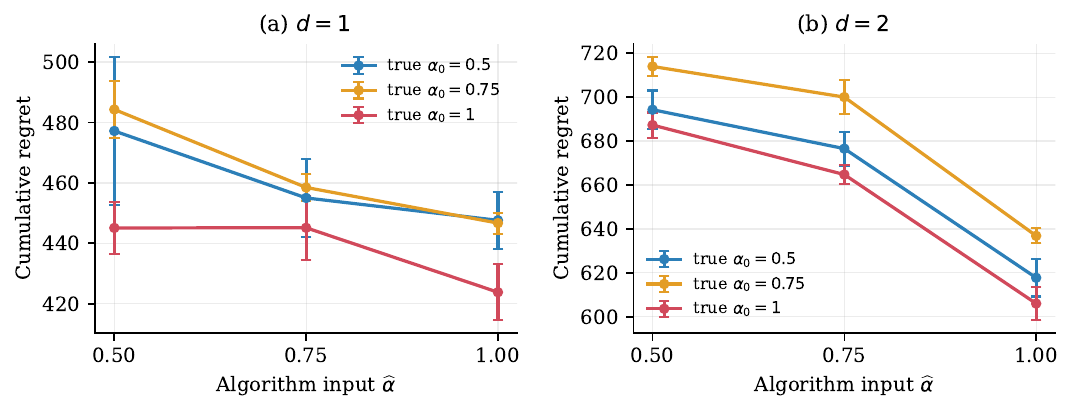}
  \caption{Sensitivity to smoothness mismatch at $T=8000$.  Each curve fixes
  the true cusp exponent $\alpha_0$ and varies the algorithm input
  $\widehat\alpha$; points are means over six common-random-number repetitions
  and error bars are 95\% confidence intervals.  Left: $d=1$.  Right: $d=2$.}
  \label{fig:alpha-mismatch}
\end{figure}

\clearpage

\section{Proof Roadmap and Common Notation}
\label{app:roadmap}

The main text carries the intuition; this appendix records the technical
reasons the intuition is correct.  The proofs separate identification,
estimation, regret accounting, and hardness.  This separation is useful
because the learner never observes the true reward.  Define the calibrated
value
\[
  H_x(p):=\ProxyG_x(p)-bq_\rho(x,p)=G_x(p)-bz_\rho(x).
\]
The first equality identifies a target that can be estimated from sales, and
the second shows that its maximizers are exactly those of \(G_x\).  The
upper-bound proof then follows the same story as the main text: first make
inventory safe, then calibrate the proxy, then use contextual optimism.  The
formal dependency chain is:
\begin{enumerate}[leftmargin=2em]
  \item \Cref{prop:proxy-identity,lem:calibrated-target} establish that a
  censored observation supplies a bounded proxy sample and a visible
  quantile correction for \(H_x(p)\).
  \item \Cref{lem:profile} profiles only coarse context cells and
  \(\sqrt\varepsilon\)-spaced price anchors, then transfers the nearest anchor
  estimate to each decision price.  The resulting inventory is within
  \(O(\sqrt\varepsilon)\) of the critical inventory.  Its value loss is
  \(O(\varepsilon)\), and its positive separation above \(q_\rho(x,p)\) makes
  the calibration quantile uncensored.
  \item \Cref{lem:kernel-quantile,lem:confidence} give a simultaneous
  confidence interval for \(H_x(p)\) under adaptive price choices.  The
  radius contains a stochastic term \(W_t(x,p)^{-1/2}\) and the two bias
  terms \(\varepsilon\) and \(h^\alpha\).
  \item \Cref{lem:counting} sums the stochastic radii.  Price
  discretization and profiling supply the remaining deterministic regret
  terms, producing the three-term decomposition used in
  \cref{thm:upper,cor:smooth-upper}.
\end{enumerate}

For the lower bound, we restrict to the admissible subfamily
\(F_x\equiv F_0\), and that noise law is even revealed to the learner.  A
value bump \(g(x,p)\) is implemented by the location shift
\(g(x,p)/p\); \cref{lem:lower-perturbation-stability} verifies that these
shifts remain inside the model class.  The square-root density of the fixed
noise law has finite translation energy, so a location shift of size
\(O(\Delta)\) contributes only \(O(\Delta^2)\) squared Hellinger information.
The complete proof in \cref{app:lower-complete} averages this information over
disjoint price supports inside every context cell and then applies a product
prior over cell labels.  This is the direct-sum step responsible for the
factor \(h^{-d}\).

The common compact support of \(\epsilon\), boundedness of \(\lambda\), and compact action
sets imply constants \(Y_{\max},R_{\max}<\infty\) such that
\(|Y_t|\le Y_{\max}\) and one-round regret is at most \(R_{\max}\) on every
instance considered below.  Constants denoted by \(C,c\) may change from
line to line and depend only on the fixed model constants, never on
\(T,h,\varepsilon\), or the packing labels.

\section{Proofs for the Proxy and Profiling Results}

\begin{proof}[Proof of \cref{prop:location-separation}]
Fix \(x,p\), abbreviate \(\lambda=\lambda(x,p)\), and write
\(y=\lambda+z\).  By \cref{ass:inventory}, the optimizer is the unique
interior solution of the inventory first-order condition.  Using
\cref{sec:setup}, this condition is
\[
  0=\partial_yQ_x(p,y)
  =(p+b+h_c)
  \left\{\Pbb(\epsilon\ge z)-\frac{h_c}{p+b+h_c}\right\}.
\]
Since \(F_x\) is continuous and
\(1-h_c/(p+b+h_c)=\phi(p)\), its unique solution is
\(z=F_x^{-1}(\phi(p))=z_p(x)\).  Hence
\(y_x^*(p)=\lambda(x,p)+z_p(x)\).

At this inventory, the following identities hold sample by sample:
\[
\begin{aligned}
  \min\{\lambda(x,p)+\epsilon,\lambda(x,p)+z_p(x)\}
  &=\lambda(x,p)+\min\{\epsilon,z_p(x)\},\\
  (\lambda(x,p)+\epsilon-\lambda(x,p)-z_p(x))^+
  &=(\epsilon-z_p(x))^+,\\
  (\lambda(x,p)+z_p(x)-\lambda(x,p)-\epsilon)^+
  &=(z_p(x)-\epsilon)^+.
\end{aligned}
\]
Substitution into the definition of \(Q_x\) gives
\[
\begin{aligned}
  G_x(p)
  &=p\lambda(x,p)
    +p\E_{F_x}[\min\{\epsilon,z_p(x)\}]
    -b\E_{F_x}[(\epsilon-z_p(x))^+]
    -h_c\E_{F_x}[(z_p(x)-\epsilon)^+]\\
  &=p\lambda(x,p)+C_x(p).
\end{aligned}
\]

It remains to prove the asserted context regularity.  Put
\(d_{xx'}=\norm{x-x'}\).  For
\(u\in\{\rho\}\cup\{\phi(p):p\in\cP\}\), put
\(\delta_{xx'}=L_Fd_{xx'}^\alpha\).  When
\(\delta_{xx'}<\kappa r_q\), the two quantiles must be within \(r_q\): were,
say, \(z_u(x')>z_u(x)+r_q\), integrating \(f_x\ge\kappa\) from
\(z_u(x)\) to \(z_u(x)+r_q\) would give
\(F_{x'}(z_u(x)+r_q)\ge
F_x(z_u(x)+r_q)-\delta_{xx'}>u\), contradicting the definition of
\(z_u(x')\).
Inside that neighborhood, the same integration argument gives
\[
  \kappa|z_u(x)-z_u(x')|\le\delta_{xx'}.
\]
When \(\delta_{xx'}\ge\kappa r_q\), compact support gives
\[
  |z_u(x)-z_u(x')|
  \le \overline z-\underline z
  \le \frac{(\overline z-\underline z)L_F}{\kappa r_q}
       d_{xx'}^\alpha.
\]
Thus
both \(z_\rho(x)\) and \(z_p(x)\) are uniformly H\"older in \(x\), and the
same is true of \(q_\rho(x,p)\) and \(y_x^*(p)\) after adding the location
bound from \cref{ass:holder}.

For completeness, let \(P_{x,p}\) denote the law of
\(\lambda(x,p)+\epsilon\).  In one dimension,
\[
\begin{aligned}
  W_1(P_{x,p},P_{x',p})
  &\le |\lambda(x,p)-\lambda(x',p)|+W_1(F_x,F_{x'})\\
  &\le \{L_x+(\overline z-\underline z)L_F\}d_{xx'}^\alpha,
\end{aligned}
\]
where the last step uses
\(W_1(F_x,F_{x'})=\int_{\R}|F_x(u)-F_{x'}(u)|\,du\).
For fixed \((p,y)\), the profit is Lipschitz in demand with constant at most
\(L_r:=\max\{p_{\max}+h_c,b\}\).  The Kantorovich--Rubinstein inequality
therefore bounds
\(|Q_x(p,y)-Q_{x'}(p,y)|\) by a constant times
\(d_{xx'}^\alpha\), uniformly in \(y\).  Taking maxima over the common
inventory interval proves the same bound for \(G_x(p)\).  Finally,
\(\ProxyG_x(p)=G_x(p)+b\lambda(x,p)\), so \(\ProxyG_x(p)\) is H\"older as
well.  Choosing one constant that dominates these four bounds proves the last
claim.
\end{proof}

\begin{proof}[Proof of \cref{prop:uniform-price-grid}]
The displayed cardinality bound follows directly from the construction of
\(\Gamma(\varepsilon)\).  A maximizer \(p_x^*\) has a grid point within
distance \(\varepsilon\); the Lipschitz bound then gives
\[
  G_x(p_x^*)-\max_{q\in\Gamma(\varepsilon)}G_x(q)
  \le
  L_G\varepsilon.
\]
Under the bounded-second-derivative condition, Taylor's theorem around an
interior maximizer gives a loss at most
\(\tfrac12L_{G,2}|p-p_x^*|^2\).  The same order follows at a boundary
maximizer because both endpoints themselves belong to the grid, and hence
the discretization loss is then zero.  A grid of mesh \(\sqrt{\varepsilon}\)
therefore incurs \(O(\varepsilon)\) loss and contains
\(O(\varepsilon^{-1/2})\) points.
\end{proof}

\begin{proof}[Proof of \cref{prop:proxy-identity}]
The elementary identity
\[
  D=\min\{D,y\}+(D-y)^+
\]
gives
\begin{align*}
  r((p,y),D)+bD
  &=
  pO-b(D-y)^+-h_c(y-D)^+
  +b\{O+(D-y)^+\}\\
  &=
  (p+b)O-h_c(y-O),
\end{align*}
because \(y-O=(y-D)^+\).  This is exactly \(Y(p,y,O)\).
Taking conditional expectations proves
\(\ProxyQ_x(p,y)=Q_x(p,y)+b\mu_x(p)\).
The added term is independent of \(y\), so maximizers and inventory gaps
agree.
\end{proof}

\begin{proof}[Proof of \cref{lem:calibrated-target}]
Mean-zero noise gives \(\mu_x(p)=\lambda(x,p)\), while the location family
gives \(q_\rho(x,p)=\lambda(x,p)+z_\rho(x)\).  Substitute both identities into
\(\ProxyG_x(p)=G_x(p)+b\mu_x(p)\).
\end{proof}

\begin{proof}[Proof of \cref{lem:profile}]
Put \(\eta=\sqrt\varepsilon\) and
\(h_{\rm prof}=\varepsilon^{1/(2\alpha)}\).  We first construct a single event
on which every anchor-price bisection decision is correct.  Fix a profiler
cell \(C_z^{\rm prof}\) with positive context probability and an anchor
\(a\in\Gamma_{\rm prof}\).  Define its conditional-average stockout curve
\[
  \overline S_z(y\mid a)
  :=\E[S_{x_t}(y\mid a)\mid x_t\in C_z^{\rm prof}].
\]
Cells of zero probability receive no arrivals and contribute no profiling
rounds.  Every individual root in the cell equals
\(y_x^*(a)=\lambda(x,a)+z_a\), and therefore
\[
  \sup_{x,x'\in C_z^{\rm prof}}
  |y_x^*(a)-y_{x'}^*(a)|
  \le L_{\rm loc} h_{\rm prof}^\alpha
  =L_{\rm loc}\eta.
\]
Continuity and monotonicity place a root
\(\overline y_z^*(a)\) of
\(\overline S_z(\cdot\mid a)=\tau(a)\) between the smallest and largest
individual roots.  Consequently
\[
  \sup_{x\in C_z^{\rm prof}}
  |\overline y_z^*(a)-y_x^*(a)|
  \le L_{\rm loc}\eta.
\label{eq:profile-context-bias}
\]
The visibility condition keeps this entire root interval inside the region
where every conditional density is bounded below.  Hence the averaged curve is
strictly decreasing there with slope magnitude at least a fixed multiple of
\(\kappa\), so the displayed root is unique in the relevant neighborhood.

The within-cell scheduler depends on the cell index and the past, but not on
the exact location of the current context inside the cell.  Conditional on an
arrival to the cell, the context therefore retains the fixed conditional law
used in \(\overline S_z\).  During a fixed threshold batch, each stockout
indicator is a bounded martingale difference after subtracting
\(\overline S_z(y\mid a)\).  Hoeffding's inequality gives stochastic error at
most \(\eta\) from
\[
  m_y\ge C\eta^{-2}
  \log\!\left(
    \frac{|\Gamma_{\rm prof}||\mathcal Z_{\rm prof}|B_y}{\delta}
  \right)
\]
samples, where
\[
  B_y
  :=1+\left\lceil
    \log_2\max\left\{
      1,\frac{y_{\max}-y_{\min}}{C_{\rm tune}\eta}
    \right\}
  \right\rceil
\]
bounds the number of thresholds queried by one bisection.  Taking the union
bound over the at most
\(|\Gamma_{\rm prof}||\mathcal Z_{\rm prof}|B_y\) adaptive batches yields
an event \(\mathcal E_{\rm bis}\) of probability at least \(1-\delta/2\) on
which
\[
  |\widehat S-\overline S_z(y\mid a)|\le\eta
\label{eq:profile-uniform-event}
\]
for every queried batch.  The union bound is valid for adaptive thresholds
because the conditional tail bound holds at every batch given its starting
history.

We next verify the bisection invariant.  Initially
\(\overline y_z^*(a)\in[L_{z,a},U_{z,a}]\).  The averaged stockout curve is
nonincreasing.  On
\(\mathcal E_{\rm bis}\),
\(\widehat S>\tau(a)+2\eta\) implies
\(\overline S_z(y\mid a)>\tau(a)+\eta\), hence
\(y<\overline y_z^*(a)\) and replacing the lower endpoint by \(y\) preserves
the invariant.  The symmetric statement holds when
\(\widehat S<\tau(a)-2\eta\).  In the remaining case,
\[
  |\overline S_z(y\mid a)-\tau(a)|\le3\eta.
\]
The startup check \(h\le h_{\rm vis}\) and the definition of
\(C_{\rm tune}\) keep the relevant neighborhood inside the density-lower-bound
region.  The averaged-curve slope bound therefore gives
\[
  |y-\overline y_z^*(a)|\le C\eta.
\label{eq:profile-root-error}
\]
Outside that neighborhood monotonicity gives a fixed stockout gap, so the
ambiguous branch cannot terminate there once \(h\le h_{\rm vis}\).  A
width-based termination also has error at most half the final bracket width.
Consequently every cell--anchor output satisfies
\[
  |\widetilde y_z(a)-\overline y_z^*(a)|\le C\eta.
\label{eq:profile-center-error}
\]
We now transfer the anchor estimate to a decision price
\(p\in\Gamma_{\rm dec}\).  Since \(\phi\) has bounded derivative and
\(f_x\ge\kappa\) along the quantile curve \(\{z_q(x):q\in\cP\}\),
\[
  \kappa|z_p(x)-z_a(x)|
  \le |F_x(z_p(x))-F_x(z_a(x))|
  =|\phi(p)-\phi(a)|
  \le C|p-a|.
\]
Here the first inequality follows by integrating the density between the two
quantiles; every intermediate quantile is \(z_q(x)\) for a price between
\(p\) and \(a\).  Thus \(p\mapsto z_p(x)\) is uniformly Lipschitz under the
context-conditioned shared-noise assumption.  With \(a=\pi(p)\),
\(|p-a|\le\sqrt\varepsilon\).  Since
\cref{prop:location-separation} gives
\(y_x^*(p)=\lambda(x,p)+z_p(x)\), for
\(x\in C_z^{\rm prof}\),
\[
\begin{aligned}
  |\widetilde y_z(a)-y_x^*(p)|
  &\le |\widetilde y_z(a)-\overline y_z^*(a)|
       +|\overline y_z^*(a)-y_x^*(a)|
       +|\lambda(x,a)-\lambda(x,p)|+|z_a(x)-z_p(x)|\\
  &\le C\{\sqrt\varepsilon+h_{\rm prof}^\alpha+|p-a|\}\\
  &\le C\sqrt\varepsilon.
\end{aligned}
\label{eq:profile-query-error}
\]

It remains to count the rounds that actually perform profiling.  Each
cell--anchor pair uses at most \(B_y\) batches, and each batch uses \(m_y\)
arrivals assigned to that cell.  Hence every cell uses at most
\[
  R_{\rm prof}
  \le
  C|\Gamma_{\rm prof}|\varepsilon^{-1}
  \log(1/\varepsilon)
  \log\!\left(
    \frac{T|\Gamma_{\rm prof}||\mathcal Z_{\rm prof}|}{\delta}
  \right)
\label{eq:profile-cell-budget}
\]
profiling actions.  Since
\(|\mathcal Z_{\rm prof}|\le Ch_{\rm prof}^{-d}\), summing this
deterministic cap over cells gives, for every arrival realization and every
horizon,
\[
  N_{\rm prof}(T)
  \le
  C|\Gamma_{\rm prof}|h_{\rm prof}^{-d}\varepsilon^{-1}
  \log^2\!\left(
    \frac{CT|\Gamma_{\rm prof}|h_{\rm prof}^{-d}}{\delta}
  \right).
\label{eq:profile-action-budget}
\]
An unfinished cell cannot create waiting loss elsewhere: each arrival to an
unfinished cell is one of the actions counted above, and each arrival to a
finished cell is a UCB round.  On \(\mathcal E_{\rm bis}\), every cell that
does finish has the accuracy guarantees in
\cref{eq:profile-center-error,eq:profile-query-error}.  Cells that have not
received enough arrivals remain in profiling mode and never supply an
uncertified inventory to the UCB history.

Under \(h\le h_{\rm vis}\), the certified bound on
\cref{eq:profile-query-error} lies inside the radius in
\cref{ass:inventory}.  Its quadratic loss condition and
\(\varepsilon=h^\alpha\le1\) give
\[
  G_x(p)-Q_x(p,\widetilde y_x(p))
  \le C\varepsilon.
\]
The equality of the proxy and true inventory gaps follows from
\cref{prop:proxy-identity}.  Finally,
\(y_x^*(p)-q_\rho(x,p)\ge m_q\), and the definition of
\(h_{\rm vis}\) makes the right-hand side of
\cref{eq:profile-query-error} at most \(m_q/2\).  Hence
\(\widetilde y_x(p)\ge q_\rho(x,p)+m_q/2\).  At every
\(u\le q_\rho(x,p)\), censoring is therefore inactive, so the
\(\rho\)-quantile of sales equals the \(\rho\)-quantile of demand.
\end{proof}

\section{Proofs for Kernel Concentration and Counting}
\label{app:kernel-concentration}

\begin{lemma}[Sequential kernel proxy and quantile concentration]
\label{lem:kernel-quantile}
Let \(\ell_T(\delta)=\log(CT^4|\Gamma|h^{-d}/\delta)\).  There is an event
\(\mathcal E_{\rm ker}\) with probability at least \(1-\delta\) such that,
on its intersection with the profiling event, uniformly over UCB queries with
\(W_t(x,p)>0\),
\[
  |\widehat{\ProxyG}_t(x,p)-\ProxyG_x(p)|
  \le C\left(\sqrt{\frac{\ell_T(\delta)}{W_t(x,p)}}+\varepsilon+h^\alpha\right),
\]
\[
  |\widehat q_{\rho,t}(x,p)-q_\rho(x,p)|
  \le C\left(\sqrt{\frac{\ell_T(\delta)}{W_t(x,p)}}+h^\alpha\right).
\]
\end{lemma}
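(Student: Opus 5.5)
The plan is to reduce both bounds to a self-normalized martingale bound for kernel-weighted averages, followed by deterministic bias and quantile-inversion steps. Fix a decision price \(p\in\Gamma\) and write the UCB rounds on which \(p\) was played as \(s_1<s_2<\cdots\). Each such round is a UCB round, so its context lies in a profiled cell. On the profiling event of \cref{lem:profile}, the inventory \(y_s=\widetilde y_{x_s}(p)\) is then within \(C\sqrt\varepsilon\) of \(y^*_{x_s}(p)\) and at least \(q_\rho(x_s,p)+m_q/2\).

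Conditional on the history and \(x_s\), the noise is fresh, and the action is a measurable function of the history and \(x_s\). Hence
\[
\xi_s:=K((x_s-x)/h)\bigl(Y_s-\ProxyQ_{x_s}(p,y_s)\bigr)
\]
is a bounded martingale difference, with \(|\xi_s|\le 2Y_{\max}K((x_s-x)/h)\). The same holds with \(Y_s\) replaced by \(\one\{O_s\le u\}\) and its conditional mean \(\Pbb(\min\{D,y_s\}\le u\mid x_s,p)\).

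I would apply a Freedman-type inequality with predictable variance at most \(C\sum K^2\le C\,W_t(x,p)\), using \(K\le1\). A peeling argument over the value of \(W_t\in[1,T]\) then gives a deviation of order \(\sqrt{W_t\,\ell}+\ell\). To make this uniform in the query point \(x\), I would union bound over a grid of mesh \(h/T\) in \([0,1]^d\), which has about \((T/h)^d\) points, and use the Lipschitz property of \(K\) to pass from grid points to all \(x\). For the CDF I would also union bound over a grid of \(u\) values of mesh \(1/T\), and over \(p\in\Gamma\) and \(t\le T\). This union bound produces the logarithm \(\ell_T(\delta)\); any extra factor of \(d\) is absorbed into \(C\). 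When \(W_t<\ell\), the claimed bound is trivial because both estimates are bounded, so the \(\ell/W_t\) term can be dropped in favor of the square-root term.

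The proxy bias then follows. Only \(s\) with \(\|x_s-x\|\le h\) carry weight. For such \(s\),
\[
\ProxyQ_{x_s}(p,y_s)=\ProxyG_{x_s}(p)-\bigl(G_{x_s}(p)-Q_{x_s}(p,y_s)\bigr)
\]
by \cref{prop:proxy-identity}, and the gap is at most \(C\varepsilon\) by \cref{lem:profile}. By \cref{prop:location-separation}, \(|\ProxyG_{x_s}(p)-\ProxyG_x(p)|\le L_{\rm loc}h^\alpha\). Averaging with weights \(K/W_t\) gives the first inequality.

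The quantile bound is the main obstacle, because the sales are censored at a level that depends on the cell. For \(u\le q_\rho(x_s,p)+m_q/2\), censoring is inactive at round \(s\), so
\[
\Pbb(O_s\le u\mid x_s)=F_{x_s}(u-\lambda(x_s,p)).
\]
By the Kolmogorov–Hölder condition and \cref{ass:holder}, this differs from \(F_x(u-\lambda(x,p))\) by at most \((L_F+KL_x)h^\alpha\). So, uniformly in \(u\) in a window \(|u-q_\rho(x,p)|\le r_q/2\), the weighted empirical CDF satisfies
\[
|\widehat F_t(u)-F_x(u-\lambda(x,p))|\le C\bigl(\sqrt{\ell/W_t}+h^\alpha\bigr)=:\Delta.
\]
The visibility tuning keeps this window inside the uncensored range for every \(x_s\) within distance \(h\) of \(x\), since \(q_\rho\) moves by at most \(L_{\rm loc}h^\alpha\le m_q/8\). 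Inside the window the density is at least \(\kappa\). Therefore \(\widehat F_t\) exceeds \(\rho\) at \(q_\rho+\Delta/\kappa\) and falls below \(\rho\) at \(q_\rho-\Delta/\kappa\), provided \(\Delta/\kappa\le r_q/2\), which holds whenever \(\Delta\) is small. Monotonicity of \(\widehat F_t\) then pins \(\widehat q_{\rho,t}\) within \(\Delta/\kappa\) of \(q_\rho(x,p)\). When \(\Delta\) is not small, the bound is trivial because both quantities lie in a bounded range and the constant can be enlarged. No \(\varepsilon\) term arises here beyond visibility, which matches the statement.
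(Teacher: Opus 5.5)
Your proof is correct. Apart from one step it follows the same skeleton as the paper:
\begin{itemize}
\item a self-normalized martingale bound with peeling over $W_t(x,p)$;
\item an $\varepsilon$ bias from the profiled inventory through the proxy identity, plus an $h^\alpha$ bias from context regularity;
\item an uncensored window for the sales CDF, a $u$-grid with a monotone sandwich, and inversion through the density floor $\kappa$;
\item a trivial bound in the large-error regime.
\end{itemize}
Using Freedman instead of the paper's Hoeffding-type exponential supermartingale is immaterial.

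The genuinely different step is how you get uniformity in the query context. You put a net of mesh $h/T$ on $[0,1]^d$ and prove the bound at each deterministic net point, where the kernel weights are automatically predictable. You then transfer to arbitrary $x$ using the Lipschitz kernel. This perturbs the weighted sums by $O(1)$ and adds $d\log(T/h)$ to the logarithm, and both are absorbed into $C$.

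The paper never discretizes contexts. It fixes a query time $t$, conditions on $x_t=x$, and uses that $x_t$ is independent of $\mathcal F_{t-1}$ (i.i.d. contexts, fresh noise). This lets it adjoin $\sigma(x_t)$ to every earlier pre-outcome $\sigma$-field, so the weights $K((x_s-x_t)/h)$ are predictable in the enlarged filtration. Its union bound then runs only over times, prices, dyadic levels, and the $u$-grid.

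The trade-off:
\begin{itemize}
\item Your route gives a stronger statement, uniform over all $x\in[0,1]^d$ rather than only at realized UCB contexts, and it does not need future contexts to be independent of the past.
\item The paper's route avoids the Lipschitz-kernel approximation and the covering bookkeeping, and it delivers exactly what the UCB argument consumes.
\end{itemize}

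Three small repairs are needed.
\begin{itemize}
\item A window of radius $r_q/2$ around $q_\rho(x,p)$ need not be uncensored. On the profiling event you only know $\widetilde y_{x_s}(p)\ge q_\rho(x_s,p)+m_q/2\ge q_\rho(x,p)+3m_q/8$, and nothing forces $r_q<3m_q/4$. Shrink the radius to $\min\{r_q,m_q/4\}$, as the paper does with $r_\rho$. This also keeps every $u$ in the window strictly below the inventory, so that $\one\{O_s\le u\}=\one\{D_s\le u\}$.
\item Invert at $q_\rho(x,p)\pm2\Delta/\kappa$ rather than $\pm\Delta/\kappa$, so the empirical CDF is strictly below $\rho$ at the lower point. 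The small-error regime then becomes $2\Delta/\kappa$ at most the window radius.
\item Add the window endpoints to the $u$-grid so that the monotone sandwich stays inside the window.
\end{itemize}
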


\begin{lemma}[Kernel calibrated-value confidence]
\label{lem:confidence}
On the intersection of the profiling event and
\(\mathcal E_{\rm ker}\), uniformly over UCB queries with \(W_t(x,p)>0\),
\[
  \left|\widehat G_t^{\rm cal}(x,p)-\{G_x(p)-bz_\rho(x)\}\right|
  \le C\sqrt{\frac{\ell_T(\delta)}{\max\{1,W_t(x,p)\}}}
  +C(\varepsilon+h^\alpha).
\]
\end{lemma}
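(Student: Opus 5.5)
The bound follows from Lemma~\ref{lem:kernel-quantile} and the identity in Lemma~\ref{lem:calibrated-target}, via a triangle-inequality decomposition. Recall
\(\widehat G_t^{\rm cal}(x,p)=\widehat{\ProxyG}_t(x,p)-b\widehat q_{\rho,t}(x,p)\).
Lemma~\ref{lem:calibrated-target} gives \(G_x(p)-bz_\rho(x)=\ProxyG_x(p)-bq_\rho(x,p)\). Subtracting these two expressions yields
\[
  \bigl|\widehat G_t^{\rm cal}(x,p)-\{G_x(p)-bz_\rho(x)\}\bigr|
  \le
  |\widehat{\ProxyG}_t(x,p)-\ProxyG_x(p)|
  +b\,|\widehat q_{\rho,t}(x,p)-q_\rho(x,p)|.
\]
On the intersection of the profiling event (Lemma~\ref{lem:profile}) and \(\mathcal E_{\rm ker}\), both terms are controlled uniformly over UCB queries with \(W_t(x,p)>0\) by Lemma~\ref{lem:kernel-quantile}. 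Adding the two bounds, with \(b\) absorbed into \(C\), gives
\(C\sqrt{\ell_T(\delta)/W_t(x,p)}+C(\varepsilon+h^\alpha)\).

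The only remaining point is to replace \(W_t(x,p)\) by \(\max\{1,W_t(x,p)\}\) in the stochastic term. This needs care because \(W_t(x,p)\) is a kernel-weighted count and may lie in \((0,1)\). In that regime I use crude boundedness. We have \(|\widehat{\ProxyG}_t|\le Y_{\max}\) because it is a weighted average of the \(Y_s\). The quantile estimate lies in \([0,y_{\max}]\) by definition. The targets \(\ProxyG_x(p)\) and \(q_\rho(x,p)\) are bounded by fixed model constants (compact support and a bounded \(\lambda\)). Hence the left-hand side is at most a fixed constant \(C_0\). Since \(\ell_T(\delta)\ge 1\) once \(T\ge 2\), we get \(C_0\le C_0\sqrt{\ell_T(\delta)/\max\{1,W_t\}}\) whenever \(W_t<1\), because the square root then equals \(\sqrt{\ell_T}\ge1\). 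When \(W_t\ge1\), \(\max\{1,W_t\}=W_t\) and nothing changes. Taking \(C\) as the larger of the two constants covers both cases.

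I expect no real obstacle here, since the substantive work sits in Lemma~\ref{lem:kernel-quantile}. The one subtle step is to confirm that the uncensoring guarantee from Lemma~\ref{lem:profile} is what licenses applying the quantile bound. This holds because we are on the intersection of the two events, and only UCB rounds from completed cells enter the histories.
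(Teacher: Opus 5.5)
Your proposal is correct and matches the paper's proof: the paper also combines the identity \(\ProxyG_x(p)-bq_\rho(x,p)=G_x(p)-bz_\rho(x)\) with the two bounds of \cref{lem:kernel-quantile} through the triangle inequality, and then enlarges \(C\). Your explicit boundedness argument for replacing \(W_t(x,p)\) by \(\max\{1,W_t(x,p)\}\) when \(0<W_t(x,p)<1\) is correct, and it spells out a step the paper leaves implicit in ``after enlarging \(C\).''
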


\begin{lemma}[Kernel counting]
\label{lem:counting}
Let \(\mathcal U_T\) be the set of UCB rounds.  For every predictable
interleaving rule and price sequence \(p_t\in\Gamma\) on those rounds,
\[
  \sum_{t\in\mathcal U_T}
  \frac{1}{\sqrt{\max\{1,W_t(x_t,p_t)\}}}
  \le C\sqrt{T|\Gamma|h^{-d}}.
\]
\end{lemma}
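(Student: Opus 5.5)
The plan is a purely deterministic pigeonhole argument: replace the kernel weight by a lower bound based on counts in a fixed partition of \(\cX\), then sum a harmonic-type series within each (cell, price) pair. No probability is needed, so the bound holds for every interleaving of profiling and UCB rounds and every price sequence. This is the role the main text assigns to the "bandwidth-\(h\) cells used only in the kernel counting proof."

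\emph{Step 1 (counting partition).} Partition \([0,1]^d\) into axis-aligned cubes \(\{B_j\}_{j\le M}\) of side \(h/(2\sqrt d)\), so that each cube has diameter at most \(h/2\) and \(M\le Ch^{-d}\). For \(x,x'\) in the same cube, \(\norm{(x'-x)/h}\le 1/2\), so the kernel assumption gives \(K((x'-x)/h)\ge k_0\). Since \(K\ge0\), for every UCB round \(t\) with \(x_t\in B_j\),
\[
  W_t(x_t,p)\;\ge\;k_0\,N_t(j,p),\qquad
  N_t(j,p):=\#\{s\in\mathcal T_t(p):x_s\in B_j\}.
\]
Here \(\mathcal T_t(p)\) counts only earlier UCB rounds that played \(p\), which is exactly the history the policy appends to. Because \(k_0\le1\), we have \(\max\{1,k_0n\}\ge k_0\max\{1,n\}\), and therefore
\[
  \frac{1}{\sqrt{\max\{1,W_t(x_t,p_t)\}}}\le \frac{k_0^{-1/2}}{\sqrt{\max\{1,N_t(j_t,p_t)\}}},
\]
where \(j_t\) is the index of the cube containing \(x_t\).

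\emph{Step 2 (per-pair sums).} Group the UCB rounds by the pair \((j_t,p_t)\), and let \(n_{j,p}\) be the total number of UCB rounds with \(x_t\in B_j\) and \(p_t=p\). Along these rounds in chronological order, the \(k\)-th such round has \(N_t(j,p)=k-1\). The group therefore contributes at most
\[
  k_0^{-1/2}\sum_{k=1}^{n_{j,p}}\frac{1}{\sqrt{\max\{1,k-1\}}}\;\le\;C\sqrt{n_{j,p}}.
\]
This step uses only the order of arrivals, so predictability of the rule is not even needed.

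\emph{Step 3 (Cauchy--Schwarz).} Summing over at most \(M|\Gamma|\) pairs and using \(\sum_{j,p}n_{j,p}=|\mathcal U_T|\le T\) gives
\[
  \sum_{j,p}\sqrt{n_{j,p}}\le\sqrt{M|\Gamma|\,T}\le C\sqrt{T|\Gamma|h^{-d}},
\]
which is the claim.

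The only point requiring care is Step 1. The kernel lower bound must hold for \emph{all} pairs within a cell, which is why the cell diameter is taken to be \(h/2\) rather than \(h\). One must also check that \(W_t\) is indexed by the played price and restricted to UCB rounds, so that the groups in Step 2 exactly match the counts \(N_t\). No density or lower-mass assumption on \(\mu\) enters, consistent with \cref{ass:context}.
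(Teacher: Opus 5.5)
Your proof is correct and follows essentially the same route as the paper's: a deterministic partition into \(O(h^{-d})\) cells of diameter at most \(h/2\) gives \(W_t(x_t,p_t)\ge k_0N_{j,p_t}(t)\), each (cell, price) group contributes at most \(C\sqrt{N_{j,p}(T)}\), and Cauchy--Schwarz over the \(M|\Gamma|\) pairs finishes. Your explicit check that \(\max\{1,k_0n\}\ge k_0\max\{1,n\}\), which uses \(k_0\le1\), makes precise a step the paper absorbs into the constant.
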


\begin{proof}[Proof of \cref{lem:kernel-quantile}]
We first make the sequential measurability explicit for the interleaved
policy.  Include the policy's internal random seed and initialized profiler
state in \(\mathcal F_0\), and let
\[
  \mathcal F_{s-1}
  =\sigma\!\left(
    \mathcal F_0,(x_r,p_r,y_r,O_r):r<s
  \right),
  \qquad
  \mathcal G_s=\mathcal F_{s-1}\vee\sigma(x_s).
\]
The decision whether round \(s\) is a profiling or UCB round, and then the
action \((p_s,y_s)\), are \(\mathcal G_s\)-measurable.  The fresh noise
\(\epsilon_s\sim F_{x_s}\), and hence the outcome \(O_s\), is generated afterward
and is conditionally independent of the past given \((x_s,p_s,y_s)\).  Write
\(A_s=1\) when round \(s\) is a UCB round.

Fix a query time \(t\), a price \(p\in\Gamma\), and condition on
\(x_t=x\).  For \(s<t\), enlarge the pre-outcome sigma-field to
\[
  \mathcal G_s^{(t)}=\mathcal G_s\vee\sigma(x_t).
\]
This is legitimate because the contexts are i.i.d.: \(x_t\) is independent of
\(\mathcal F_{t-1}\), which already contains each past context and its
context-conditioned noise outcome.  In particular,
adjoining \(x_t\) does not change the conditional law of the period-
\(s\) outcome.  Set
\[
  w_s=K((x_s-x)/h)\one\{A_s=1,p_s=p\},
  \qquad
  m_s^Y=\E[Y_s\mid\mathcal G_s].
\]
The weight is \(\mathcal G_s^{(t)}\)-measurable, and
\(\E[Y_s-m_s^Y\mid\mathcal G_s^{(t)}]=0\).  Hence
\(w_s(Y_s-m_s^Y)\) is a bounded martingale difference in the filtration that
reveals \(O_s\) after \(\mathcal G_s^{(t)}\).  Since
\(|Y_s-m_s^Y|\le2Y_{\max}\) and \(0\le w_s\le1\), Hoeffding's conditional
lemma gives, for every \(\theta\in\R\),
\[
  \E\!\left[
    \exp\!\left\{
      \theta w_s(Y_s-m_s^Y)
      -2\theta^2Y_{\max}^2w_s^2
    \right\}
    \middle|\mathcal G_s^{(t)}
  \right]
  \le1.
\label{eq:kernel-exponential-step}
\]
Thus the product of the factors in
\cref{eq:kernel-exponential-step} is a nonnegative supermartingale.  Also,
\[
  \sum_{s<t}w_s^2\le \sum_{s<t}w_s=W_t(x,p).
\]
Applying the exponential supermartingale inequality on the dyadic events
\(2^{j-1}<W_t(x,p)\le2^j\), and assigning failure probability
\(\delta/(2CT|\Gamma|\log(eT))\) to each query and dyadic level, gives
\[
  \left|
    \sum_{s<t}w_s(Y_s-m_s^Y)
  \right|
  \le C\sqrt{W_t(x,p)\ell_T(\delta)}
\label{eq:proxy-martingale}
\]
for this conditional query.  The same argument applies to the negative
martingale.  After division by \(W_t(x,p)>0\), the stochastic proxy error is
at most \(C\sqrt{\ell_T(\delta)/W_t(x,p)}\).  This derivation handles the
data-dependent effective sample size; no deterministic lower bound on
\(W_t(x,p)\) is used.

On the profiling event, every observation with \(A_s=1\) comes from a cell
whose profile was already completed before choosing the action.  Hence
\cref{prop:proxy-identity,lem:profile} give
\[
  |m_s^Y-\ProxyG_{x_s}(p)|
  =G_{x_s}(p)-Q_{x_s}(p,\widetilde y_{x_s}(p))
  \le C\varepsilon.
\label{eq:proxy-profile-bias}
\]
Moreover, the context-regularity conclusion of
\cref{prop:location-separation} implies, whenever \(w_s>0\),
\[
  |\ProxyG_{x_s}(p)-\ProxyG_x(p)|
  \le L_{\rm loc}h^\alpha.
\label{eq:proxy-context-bias}
\]
The weighted average of
\cref{eq:proxy-profile-bias,eq:proxy-context-bias}, combined with
\cref{eq:proxy-martingale}, proves
\[
  |\widehat{\ProxyG}_t(x,p)-\ProxyG_x(p)|
  \le
  C\left\{
    \sqrt{\frac{\ell_T(\delta)}{W_t(x,p)}}
    +\varepsilon+h^\alpha
  \right\}.
\label{eq:proxy-complete-bound}
\]

We now control the weighted empirical quantile.  Put
\[
  r_\rho:=\min\{r_q,m_q/4,r_y/2\},
  \mathcal U_{x,p}
  =[q_\rho(x,p)-r_\rho,q_\rho(x,p)+r_\rho].
\]
When \(w_s>0\), context smoothness gives
\[
  |q_\rho(x_s,p)-q_\rho(x,p)|
  \le (L_x+L_F/\kappa)h^\alpha.
\]
Together with the profiling margin
\(\widetilde y_{x_s}(p)\ge q_\rho(x_s,p)+m_q/2\) and the certified inequality
\((L_x+L_F/\kappa)h^\alpha\le m_q/8\), this shows that
\[
  u\le q_\rho(x,p)+r_\rho
  \le q_\rho(x_s,p)+r_\rho+m_q/8
  <\widetilde y_{x_s}(p)
\]
throughout \(\mathcal U_{x,p}\).  Thus censoring does not alter the lower-tail
indicators:
\[
  \one\{O_s\le u\}=\one\{D_s\le u\},
  \qquad u\in\mathcal U_{x,p}.
\label{eq:cdf-uncensored-neighborhood}
\]

Let \(\mathcal V_T\) be a deterministic grid of \([0,y_{\max}]\) with mesh
at most \(T^{-2}\), so \(|\mathcal V_T|\le CT^2\).  Conditional on
\(x_t=x\), augment the grid points lying in \(\mathcal U_{x,p}\) by the two
endpoints of that interval.  This query-specific grid still has at most
\(CT^2+2\) points, all inside the uncensored neighborhood.  Fix one such
point \(v\) and set
\[
  \xi_s(v)
  =\one\{O_s\le v\}-F_{x_s}(v-\lambda(x_s,p)).
\]
The visibility calculation in
\cref{eq:cdf-uncensored-neighborhood} applies and
\(\E[\xi_s(v)\mid\mathcal G_s^{(t)},p_s=p]=0\).  Consequently
\(w_s\xi_s(v)\) obeys the same exponential-supermartingale bound as the proxy
sequence.  Allocate failure probability
\(\delta/(2CT|\Gamma||\mathcal V_T|\log(eT))\) to every
\((t,p,v)\) and dyadic weight level.  A union bound gives the desired
inequality at all relevant grid points.

For an arbitrary \(u\in\mathcal U_{x,p}\), take adjacent grid points
\(v_-\le u\le v_+\).  Monotonicity gives
\(\widehat F_t(v_-\mid x,p)\le\widehat F_t(u\mid x,p)
\le\widehat F_t(v_+\mid x,p)\), while the population CDF changes by at most
\(KT^{-2}\) between adjacent points.  Absorbing this deterministic term into
the radius yields, simultaneously over \(u\in\mathcal U_{x,p}\),
\[
  \left|
    \widehat F_t(u\mid x,p)
    -\frac{\sum_{s<t}w_sF_{x_s}(u-\lambda(x_s,p))}{W_t(x,p)}
  \right|
  \le C\sqrt{\frac{\ell_T(\delta)}{W_t(x,p)}}.
\label{eq:weighted-cdf-random}
\]
The upper density bound, the H\"older condition on \(F_x\), and
\(|\lambda(x_s,p)-\lambda(x,p)|\le L_xh^\alpha\) imply
\[
\begin{aligned}
  &\left|
    F_{x_s}(u-\lambda(x_s,p))-F_x(u-\lambda(x,p))
  \right|\\
  &\qquad\le
  \sup_v|F_{x_s}(v)-F_x(v)|
  +K|\lambda(x_s,p)-\lambda(x,p)|\\
  &\qquad\le (L_F+KL_x)h^\alpha.
\end{aligned}
\]
Consequently,
\[
  \sup_{u\in\mathcal U_{x,p}}
  |\widehat F_t(u\mid x,p)-F_x(u-\lambda(x,p))|
  \le e_t,
  \qquad
  e_t=C\left\{
    \sqrt{\frac{\ell_T(\delta)}{W_t(x,p)}}+h^\alpha
  \right\}.
\label{eq:weighted-cdf-total}
\]

The target CDF equals \(\rho\) at \(q_\rho(x,p)\).  On the fixed
neighborhood from \cref{ass:shared-noise}, its density is at least
\(\kappa\).  Suppose first that
\[
  \frac{2e_t}{\kappa}
  \le r_\rho.
\label{eq:quantile-small-error-regime}
\]
Integrating the density lower bound on both sides of \(z_\rho(x)\) gives
\[
  F_x(q_\rho(x,p)+2e_t/\kappa-\lambda(x,p))
  \ge\rho+2e_t
\]
and
\[
  F_x(q_\rho(x,p)-2e_t/\kappa-\lambda(x,p))
  \le\rho-2e_t.
\]
Both evaluation points lie in \(\mathcal U_{x,p}\), so
\cref{eq:weighted-cdf-total} implies that the empirical CDF is below \(\rho\)
at the lower point and above \(\rho\) at the upper point.  The defining
crossing property of the measurable weighted empirical quantile then yields
\[
  |\widehat q_{\rho,t}(x,p)-q_\rho(x,p)|
  \le 2e_t/\kappa.
\label{eq:quantile-inversion}
\]
Outside the regime in \cref{eq:quantile-small-error-regime},
\[
  e_t\ge e_0,
  \qquad
  e_0:=\frac\kappa2r_\rho>0.
\]
Both quantiles lie in the fixed bounded sales interval, so enlarging the
universal radius constant by that interval's diameter divided by \(e_0\)
gives the same order \(C e_t\) there as well.

Finally, every bound above was conditional on an arbitrary value
\(x_t=x\), and its failure probability does not depend on \(x\).  Taking
conditional expectations therefore removes this conditioning.  A union bound
over at most \(T|\Gamma|\) realized time--price queries, at most
\(1+\lceil\log_2T\rceil\) dyadic weight levels, and
\(|\mathcal V_T|\le CT^2\) CDF grid points has total failure probability at
most \(\delta\).  The logarithm of this cardinality is bounded by
\(\ell_T(\delta)\), completing both simultaneous claims without a union bound
over the uncountable context space.
\end{proof}

\begin{proof}[Proof of \cref{lem:confidence}]
By \cref{lem:kernel-quantile,lem:calibrated-target},
\[
\begin{aligned}
  &
  \left|
    \widehat G_t^{\rm cal}(x,p)
    -
    \{G_x(p)-bz_\rho(x)\}
  \right|\\
  &\qquad\le
  \left|
    \widehat{\ProxyG}_t(x,p)-\ProxyG_x(p)
  \right|
  +
  b\left|
    \widehat q_{\rho,t}(x,p)-q_\rho(x,p)
  \right|,
\end{aligned}
\]
which is the stated radius after enlarging \(C\).
\end{proof}

\begin{proof}[Proof of \cref{lem:counting}]
Use a deterministic partition of \([0,1]^d\) into
\(M\le Ch^{-d}\) cells of diameter at most \(h/2\), and let \(j(t)\) denote
the cell containing \(x_t\) on a UCB round.  Two contexts in the same cell
are at distance at most \(h/2\).  Hence every earlier UCB observation in cell
\(j(t)\) at price \(p_t\) has kernel weight at least \(k_0\).  Let
\(N_{j,p}(t)\) be the number of such UCB observations strictly before \(t\).
Pathwise,
\[
  W_t(x_t,p_t)\ge k_0N_{j,p_t}(t).
\]
Order the visits to a fixed pair \((j,p)\) by
\(r=1,\ldots,N_{j,p}(T)\).  On its \(r\)-th visit, the preceding count is
\(r-1\).  The first visit contributes one because the denominator in the
lemma is truncated at one, and
\[
  \sum_{r=1}^{N_{j,p}(T)}
  \frac1{\sqrt{\max\{1,r-1\}}}
  \le C\sqrt{N_{j,p}(T)}.
\]
After absorbing \(k_0^{-1/2}\), summing this bound and applying
Cauchy--Schwarz gives
\[
\begin{aligned}
  \sum_{t\in\mathcal U_T}
  \frac1{\sqrt{\max\{1,W_t(x_t,p_t)\}}}
  &\le C\sum_{j,p}\sqrt{N_{j,p}(T)}\\
  &\le C\sqrt{M|\Gamma|\sum_{j,p}N_{j,p}(T)}\\
  &\le C\sqrt{|\mathcal U_T||\Gamma|h^{-d}}
   \le C\sqrt{T|\Gamma|h^{-d}}.
\end{aligned}
\]
This argument is deterministic once the context and price sequences are
fixed, so it applies to every predictable adaptive interleaving policy.
\end{proof}

\section{Complete Proofs of the Contextual Upper Bounds}
\label{app:upper-complete}

\begin{proof}[Complete proof of \cref{thm:upper}]
Let
\(r=(d+2\alpha)/(d+3\alpha)<1\) and
\(K_{\rm dec}=|\Gamma_{\rm dec}|\),
\(K_{\rm prof}=|\Gamma_{\rm prof}|\).  The tuning constants are fixed
problem constants.  Since \(h_T\to0\) and
\[
  K_{\rm prof}h_{\rm prof}^{-d}\varepsilon^{-1}\polylog(T)=o(T),
\]
there is a finite \(T_0\) such that
\(h_T\le h_{\rm vis}\) for every \(T\ge T_0\).
For \(T<T_0\), every branch of \cref{alg:mck} has regret at most
\(R_{\max}T\le R_{\max}T_0^{1-r}T^r\), so increasing the theorem constant
covers these horizons.  We henceforth take \(T\ge T_0\); the deterministic
startup fallback is then inactive.

Run the interleaved profiler--UCB policy with \(\delta=T^{-2}\), and let
\(\mathcal E_{\rm prof}\) and
\(\mathcal E_{\rm ker}\) denote the events from
\cref{lem:profile,lem:kernel-quantile}.  By definition, \(C_{\rm tune}\) dominates
the constants in the profiler and confidence radii.  For fixed
\(d\), the logarithm in the index is equivalent up to a constant to
\(\ell_T(T^{-2})\).  A union bound gives
\[
  \Pbb(\mathcal E^c)\le2T^{-2},
  \qquad
  \mathcal E:=\mathcal E_{\rm prof}\cap\mathcal E_{\rm ker}.
\label{eq:upper-good-event}
\]
On \(\mathcal E^c\), total regret is at most \(TR_{\max}\), so its expected
contribution is at most \(2R_{\max}/T\).

For every arrival and outcome realization, \cref{lem:profile} bounds the
number of profiling rounds by
\[
  N_{\rm prof}(T)
  \le
  CK_{\rm prof}h_{\rm prof}^{-d}\varepsilon^{-1}\polylog(T).
\]
Every one of these rounds costs at most \(R_{\max}\), so
\[
  \Reg_{\rm prof}
  \le
  CK_{\rm prof}h_{\rm prof}^{-d}\varepsilon^{-1}\polylog(T).
\label{eq:upper-profile-regret}
\]
For the stated bandwidth this quantity is \(o(T)\); arrivals to already
completed cells are not included in this charge and are UCB rounds.

Consider a UCB round with \(x_t=x\).  Let \(p_x^\Gamma\) maximize
\(G_x\) over the grid.  By \cref{prop:uniform-price-grid},
\[
  V_x-G_x(p_x^\Gamma)\le L_G\varepsilon.
\label{eq:upper-grid-gap}
\]
Recall \(H_x(p)=G_x(p)-bz_\rho(x)\), and define
\[
  \beta_t(x,p)
  =C\left\{
    \sqrt{\frac{\ell_T(T^{-2})}{\max\{1,W_t(x,p)\}}}
    +\varepsilon+h^\alpha
  \right\}.
\]
On \(\mathcal E\), whenever both relevant effective sample sizes
are positive,
\[
\begin{aligned}
  H_x(p_x^\Gamma)
  &\le \widehat G_t^{\rm cal}(x,p_x^\Gamma)
       +\beta_t(x,p_x^\Gamma)\\
  &\le U_t(x,p_x^\Gamma)
   \le U_t(x,p_t)\\
  &\le H_x(p_t)+2\beta_t(x,p_t).
\end{aligned}
\label{eq:upper-optimism-chain}
\]
The middle inequality is the defining UCB choice.  When
\(W_t(x,p_t)>0\), a candidate with zero effective sample size has infinite
index, so \(p_x^\Gamma\) cannot have zero weight; hence the displayed chain
indeed covers this case.  When \(W_t(x,p_t)=0\), boundedness gives
\(G_x(p_x^\Gamma)-G_x(p_t)\le C\), which is absorbed by the same right-hand
side because \(\max\{1,W_t(x,p_t)\}=1\) and
\(\ell_T(T^{-2})\ge1\).  Since the common shift cancels,
\cref{eq:upper-optimism-chain} therefore yields for every UCB round
\[
  G_x(p_x^\Gamma)-G_x(p_t)
  \le
  C\sqrt{
    \frac{\ell_T(T^{-2})}
         {\max\{1,W_t(x,p_t)\}}
  }
  +C(\varepsilon+h^\alpha).
\label{eq:upper-price-gap}
\]

The period regret decomposes exactly as
\[
\begin{aligned}
  V_x-Q_x(p_t,\widetilde y_x(p_t))
  ={}&\{V_x-G_x(p_x^\Gamma)\}
     +\{G_x(p_x^\Gamma)-G_x(p_t)\}\\
    &+\{G_x(p_t)-Q_x(p_t,\widetilde y_x(p_t))\}.
\end{aligned}
\label{eq:upper-period-decomposition}
\]
The first term is controlled by \cref{eq:upper-grid-gap}, the second by
\cref{eq:upper-price-gap}, and the third is at most \(C\varepsilon\) by
\cref{lem:profile}.  Thus
\[
  V_x-Q_x(p_t,\widetilde y_x(p_t))
  \le
  C\sqrt{
    \frac{\ell_T(T^{-2})}
         {\max\{1,W_t(x,p_t)\}}
  }
  +C(\varepsilon+h^\alpha).
\label{eq:upper-one-round}
\]
Summing \cref{eq:upper-one-round} over the UCB rounds, applying
\cref{lem:counting}, adding
\cref{eq:upper-profile-regret}, and then adding the failure-event contribution
gives
\[
  \Reg_T
  \le
  C\polylog(T)
  \left\{
    K_{\rm prof}h_{\rm prof}^{-d}\varepsilon^{-1}
    +\sqrt{TK_{\rm dec}h^{-d}}
    +T(\varepsilon+h^\alpha)
  \right\}.
\label{eq:upper-master}
\]

For the Lipschitz price class,
\(K_{\rm dec}\le C\varepsilon^{-1}\),
\(K_{\rm prof}\le C\varepsilon^{-1/2}\), and
\(h_{\rm prof}^{-d}=h^{-d/2}\).  With
\(\varepsilon=h^\alpha\), \cref{eq:upper-master} becomes
\[
  \Reg_T
  \le
  C\polylog(T)
  \left\{
    h^{-(d+3\alpha)/2}
    +\sqrt{Th^{-d-\alpha}}
    +Th^\alpha
  \right\}.
\]
Ignoring logarithmic factors, the choice
\(h=T^{-1/(d+3\alpha)}\) is equivalent to
\(T=h^{-(d+3\alpha)}\), and consequently
\[
  h^{-(d+3\alpha)/2}
  =\sqrt T,
  \qquad
  \sqrt{Th^{-d-\alpha}}
  =Th^\alpha
  =T^{(d+2\alpha)/(d+3\alpha)}.
\]
Since \((d+2\alpha)/(d+3\alpha)>1/2\), the profiling term is lower order.
The logarithmic modification in \cref{alg:mck} changes only the polylogarithmic
factor, completing the proof.
\end{proof}

\begin{proof}[Complete proof of \cref{cor:smooth-upper}]
Let \(r_{\rm sm}=(2d+3\alpha)/(2d+5\alpha)<1\).  With the smooth bandwidth
and grid, \(h_T\to0\) and the profiling-action bound below is \(o(T)\), so
there is a finite \(T_{0,{\rm sm}}\) after which the deterministic visibility
fallback is inactive.  Earlier horizons satisfy
\(R_{\max}T\le R_{\max}T_{0,{\rm sm}}^{1-r_{\rm sm}}T^{r_{\rm sm}}\).
For later horizons, the good-event, profiling-action, optimism, and counting
argument from \cref{eq:upper-good-event} through \cref{eq:upper-master} is
unchanged.  Under the
bounded-second-derivative condition, \cref{prop:uniform-price-grid} permits a
mesh of width \(\sqrt\varepsilon\).  It has
\(K_{\rm dec}=O(\varepsilon^{-1/2})\) and price approximation loss
\(O(\varepsilon)\).  Therefore \cref{eq:upper-master} becomes
\[
  \Reg_T
  \le
  C\polylog(T)
  \left\{
    h^{-d/2}\varepsilon^{-3/2}
    +\sqrt{Th^{-d}\varepsilon^{-1/2}}
    +T(\varepsilon+h^\alpha)
  \right\}.
\]
Substitute \(\varepsilon=h^\alpha\).  Ignoring logarithms, the bandwidth
\(h=T^{-2/(2d+5\alpha)}\) satisfies
\(T=h^{-(d+5\alpha/2)}\), so
\[
\begin{aligned}
  h^{-d/2}\varepsilon^{-3/2}
  &=h^{-(d+3\alpha)/2},\\
  \sqrt{Th^{-d}\varepsilon^{-1/2}}
  &=h^{-d-3\alpha/2},\\
  T\varepsilon
  &=h^{-d-3\alpha/2}.
\end{aligned}
\]
The profiling exponent \((d+3\alpha)/2\) is strictly smaller than
\(d+3\alpha/2\).  Finally,
\[
  h^{-d-3\alpha/2}
  =T^{(2d+3\alpha)/(2d+5\alpha)}.
\]
The logarithmic factor in the stated bandwidth is again absorbed by
\(\polylog(T)\).
\end{proof}

\section{Proofs for the Lower Bound Construction}
\label{app:lower-stability}

\begin{lemma}[Slack flat-value baseline]
\label{lem:lower-baseline}
There exist a smooth compactly supported mean-zero density \(f_0\), intervals
\(\cP_0\Subset\operatorname{int}(\cP_1)\Subset\cP\), a smooth
\(\lambda_0\), and \(C_0,s_0>0\) such that the constant family
\(F_x\equiv F_0\) satisfies \cref{ass:shared-noise},
\[
  I_H:=4\int_\R\left|\frac{d}{dz}\sqrt{f_0(z)}\right|^2dz<\infty,
\]
\(G_0(p)=p\lambda_0(p)+C_{F_0}(p)\) equals \(C_0\) on \(\cP_0\), is smaller
outside \(\cP_0\), and is at most \(C_0-4s_0\) outside \(\cP_1\).
The baseline can be chosen so that, uniformly in \(p\),
\[
  \lambda_0(p)+\underline z,\quad
  q_{0,\rho}(p)-(y_{\min}+r_y),\quad
  (y_{\max}-r_y)-y_0^*(p),\quad
  z_p^0-z_\rho^0-m_q
  \ge4s_0,
\]
with \(0<r_y\le r_q\), and \(G_0,\lambda_0\) have uniformly bounded first
two price derivatives.
\end{lemma}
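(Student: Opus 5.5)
The construction runs in reverse: fix the noise law, fix the target value curve \(G_0\), and then solve for \(\lambda_0\). By \cref{prop:location-separation}, with \(F_x\equiv F_0\) we have \(G_0(p)=p\lambda_0(p)+C_{F_0}(p)\), where \(C_{F_0}\) depends only on \(F_0\) and the costs. So \(\lambda_0(p)=\{G_0(p)-C_{F_0}(p)\}/p\), which is legitimate because \(p\ge p_{\min}>0\).

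\emph{Step 1: the noise law.} Take \(f_0\) to be a smooth, compactly supported, symmetric density (so it has mean zero) that is strictly positive on the interior of \([\underline z,\overline z]\). A concrete choice is a normalized \(\cos^4\) bump, or \(\exp(-1/(1-z^2))\) rescaled. Then \(\sqrt{f_0}\) is smooth with compact support, so \(I_H<\infty\) follows by direct computation; for the \(\cos^4\) bump, \(\sqrt{f_0}\propto\cos^2\), which has a bounded derivative. Fix \(\rho\) small enough that \(z_\rho^0\) lies well below \(z_p^0=F_0^{-1}(\phi(p))\) for every \(p\in\cP\). This is possible because \(\phi(p)\ge\phi(p_{\min})>\rho\). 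Then:
\begin{itemize}
\item the margin \(z_p^0-z_\rho^0-m_q\ge4s_0\) holds after choosing \(m_q\) and \(s_0\) small;
\item the density lower bound \(\kappa\) holds on compact subintervals of the open support containing these quantiles, with \(r_q\) taken small;
\item the Kolmogorov Hölder condition is trivial, since \(F_0\) does not depend on \(x\).
\end{itemize}
Because \(\phi\) and \(F_0^{-1}\) are smooth on the relevant range, \(z_p^0\), and hence \(C_{F_0}(p)\), are \(C^2\) in \(p\).

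\emph{Step 2: the flat-value curve and the location.} Choose nested intervals \(\cP_0\Subset\operatorname{int}(\cP_1)\Subset\cP\). Set \(G_0(p)=C_0-\psi(p)\), where \(\psi\ge0\) is smooth, vanishes exactly on \(\cP_0\), is positive off \(\cP_0\), and satisfies \(\psi\ge4s_0\) outside \(\cP_1\). A smooth bump built from \(e^{-1/t}\)-type functions does this. Then define \(\lambda_0(p)=\{G_0(p)-C_{F_0}(p)\}/p\), which is smooth with bounded first two derivatives, so \(L_p\), \(L_G\) and \(L_{G,2}\) are finite.

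\emph{Step 3: the slack conditions.} Pick \(C_0\) large enough that \(\lambda_0+\underline z\ge4s_0\), which keeps demand nonnegative. Since \(C_{F_0}\) and \(\psi\) are bounded, \(\lambda_0\) then lies in a bounded band. Then choose \(y_{\min}\) below \(\min_p q_{0,\rho}(p)-r_y-4s_0\) and \(y_{\max}\) above \(\max_p y_0^*(p)+r_y+4s_0\); both extremes are finite because \(\lambda_0\) and the quantiles are bounded. This secures the remaining two slack inequalities, and therefore the visible-quantile condition of \cref{ass:inventory}.

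\emph{Step 4: the inventory regularity in \cref{ass:inventory}.} The root \(y_0^*(p)\) is unique and interior because \(f_0>0\) near \(z_p^0\). For \(|y-y^*|\le r_y\) with \(r_y\le r_q\), the slope bounds \(c_y=\kappa\) and \(C_y=K\) come from \(S(y\mid p)=1-F_0(y-\lambda_0)\). The quadratic gap follows from \(\partial_{yy}Q=-(p+b+h_c)f_0\le0\) together with a Taylor expansion, using the bound \(K\).

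\emph{Main obstacle.} The one real tension is that \(C_0\) must be large for nonnegativity while \(G_0\) must still have the prescribed flat shape. This is resolved by observing that adding a constant to \(G_0\) only shifts \(\lambda_0\) by a bounded smooth amount \(\Delta C_0/p\). The action ranges \(\cP\) and \([y_{\min},y_{\max}]\) are then fixed after all other choices have been made.
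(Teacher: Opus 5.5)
Your proposal is correct. Its core is the same as the paper's: prescribe a flat-top value curve \(G_0=C_0-(\text{nonnegative smooth cutoff vanishing exactly on }\cP_0)\) and recover \(\lambda_0=\{G_0-C_{F_0}\}/p\) from the location-separation identity.

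The difference is in how the four slack inequalities are secured.
\begin{itemize}
\item The paper argues locally. It fixes an anchor price \(p_0\) and a level \(\bar\lambda\), shrinks the noise scale, and centers the inventory interval at \(\bar\lambda\). It sets \(C_0=p_0\bar\lambda+C_{F_0}(p_0)\), so that \(\lambda_{\rm flat}(p_0)=\bar\lambda\). It then shrinks the price interval and \(s_0\), so that by continuity \(\lambda_0\) stays close enough to \(\bar\lambda\) to inherit strict margins.
\item You argue globally. Any compact \(\cP\) with \(p_{\min}>0\) is admissible, because \(\phi(p_{\max})<1\) and \(F_0\) is strictly increasing, so all quantile levels are automatically interior. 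You raise \(C_0\) until \(\lambda_0+\underline z\ge4s_0\), and you make the inventory interval wide enough to contain every \(q_{0,\rho}(p)\) and \(y_0^*(p)\) with room to spare.
\end{itemize}
Your route is shorter and shows that no localization in price is needed. The paper's route keeps the inventory box fixed and the construction concentrated near one price. A wide inventory box may contain demand-revealing levels, but that is harmless here: the lower-bound Hellinger estimate is computed for uncensored demand and passed to censored sales by data processing.

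Three small bookkeeping remarks.
\begin{itemize}
\item \(\rho\) is a model input, not something you choose. Your own argument (\(\phi\) increasing, \(F_0\) strictly increasing) already shows that every admissible \(\rho\) works, so just drop the phrase about fixing \(\rho\) small.
\item Steps 1--2 already use \(\cP\), so only the inventory range, not \(\cP\), should be fixed last.
\item The normalized \(\cos^4\) bump is only \(C^3\) at its endpoints. \(C^3\) suffices for everything actually used, but if ``smooth'' means \(C^\infty\), use the \(\exp\{-1/(1-z^2)\}\) bump, which is the paper's choice.
\end{itemize}
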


\begin{definition}[Normalized context--price packing]
\label{def:lower-packing}
Fix the baseline and the interval \(\cP_0\) from
\cref{lem:lower-baseline}.  Let
\[
  \zeta(t)=e^{-1/t}\one\{t>0\},
  \qquad
  \vartheta(u)=
  \frac{\zeta(1/2-|u|)}
       {\zeta(1/2-|u|)+\zeta(|u|-1/4)}.
\label{eq:lower-standard-bump}
\]
Then \(\vartheta\in C^\infty(\R;[0,1])\), it equals one on
\([-1/4,1/4]\), and its support is \([-1/2,1/2]\).  Define the
normalized context bump
\[
  \Psi(v)=\prod_{\ell=1}^d\vartheta(v_\ell),
  \qquad v\in\R^d.
\]

For \(0<h\le1/4\), put \(m_h=\lfloor(2h)^{-1}\rfloor\), index the
multi-indices \(r\in\{0,\ldots,m_h-1\}^d\) by
\(j=1,\ldots,M_h\), where \(M_h=m_h^d\), and set
\[
\begin{aligned}
  c_{j,\ell}&=(2r_\ell+1)h,\qquad \ell=1,\ldots,d,\\
  B_j&=c_j+[-h/2,h/2]^d,\\
  B_j^\circ&=c_j+[-h/4,h/4]^d,\\
  \psi_j(x)&=\Psi\!\left(\frac{x-c_j}{h}\right).
\end{aligned}
\label{eq:lower-context-packing}
\]
Thus the cells \(B_j\subset[0,1]^d\) are pairwise disjoint,
\(\operatorname{supp}(\psi_j)=B_j\), and \(\psi_j=1\) on
\(B_j^\circ\).

Choose once and for all a closed interval
\(J=[p_L,p_R]\Subset\operatorname{int}(\cP_0)\), and write
\(L_0=p_R-p_L\).  For \(0<w\le L_0/4\), let
\[
  K_\Delta=\left\lfloor\frac{L_0}{2w}\right\rfloor,
  \qquad
  p_k=p_L+(2k-1)w,
  \qquad k=1,\ldots,K_\Delta,
\label{eq:lower-price-centers}
\]
and define
\[
  \varphi_0\equiv0,
  \qquad
  \varphi_k(p)=\vartheta\!\left(\frac{p-p_k}{w}\right),
  \qquad
  \mathcal S_k=\{p:\varphi_k(p)>0\}.
\label{eq:lower-price-bumps}
\]
The sets \(\mathcal S_k\) are pairwise disjoint and contained in
\(J\).  Finally, let
\[
  \Omega_0=\{0,1,\ldots,K_\Delta\}^{M_h},
  \qquad
  \Omega=\{1,\ldots,K_\Delta\}^{M_h},
\]
and, for every label vector \(\omega\in\Omega_0\), define
\[
  g_\omega(x,p)
  =\Delta\sum_{j=1}^{M_h}
    \psi_j(x)\varphi_{\omega_j}(p).
\label{eq:lower-gomega-definition}
\]
The augmented set \(\Omega_0\) contains the zero-label reference
experiments; the final hard prior is supported on \(\Omega\).
\end{definition}

\begin{proposition}[Properties of the normalized packing]
\label{prop:lower-packing-properties}
For the construction in \cref{def:lower-packing}, constants depending only
on \(d,L_0\), and the fixed bump \(\vartheta\) satisfy
\[
  c h^{-d}\le M_h\le C h^{-d},
  \qquad
  c w^{-1}\le K_\Delta\le C w^{-1}.
\label{eq:lower-packing-cardinality}
\]
Under the uniform context distribution,
\[
  \Pbb(x_t\in B_j)=h^d,
  \qquad
  \Pbb(x_t\in B_j^\circ)=2^{-d}h^d.
\label{eq:lower-cell-masses}
\]
Moreover, uniformly over \(\omega\in\Omega_0\),
\[
\begin{aligned}
  \norm{g_\omega}_\infty&\le\Delta,\\
  \norm{\partial_p g_\omega}_\infty&\le C\Delta/w,\\
  \norm{\partial_{pp} g_\omega}_\infty&\le C\Delta/w^2,\\
  |g_\omega(x,p)-g_\omega(x',p)|
  &\le C\Delta h^{-\alpha}\norm{x-x'}^\alpha.
\end{aligned}
\label{eq:lower-packing-bounds}
\]
For \(x\in B_j^\circ\), the perturbation reduces exactly to
\[
  g_\omega(x,p)=\Delta\varphi_{\omega_j}(p).
\label{eq:lower-core-identity}
\]
Consequently, when \(\omega_j=k\ge1\), its maximum over \(\cP_0\) is
\(\Delta\), attained on the plateau of \(\varphi_k\), while it is zero
for every \(p\notin\mathcal S_k\).
\end{proposition}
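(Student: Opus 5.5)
The proposition is elementary bookkeeping about the explicit construction in \cref{def:lower-packing}, so I will verify each claim directly from the definitions, in the order stated.

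\emph{Cardinalities and cell masses.} Since \(h\le1/4\), we have \((2h)^{-1}\ge2\), and hence \(m_h=\lfloor(2h)^{-1}\rfloor\) lies between \((4h)^{-1}\) and \((2h)^{-1}\). Raising to the \(d\)-th power gives \(ch^{-d}\le M_h\le Ch^{-d}\) with \(c=4^{-d}\) and \(C=2^{-d}\). The same argument applies to \(K_\Delta=\lfloor L_0/(2w)\rfloor\), using \(w\le L_0/4\), and gives \(K_\Delta\asymp w^{-1}\).

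For containment and disjointness of the cells: the centres satisfy \(c_{j,\ell}=(2r_\ell+1)h\in[h,(2m_h-1)h]\). Hence \(B_j\) occupies \([c_{j,\ell}-h/2,c_{j,\ell}+h/2]\) in each coordinate, which lies inside \([h/2,2m_hh]\subset[0,1]\) because \(2m_hh\le1\). Distinct centres differ by at least \(2h\) in some coordinate, so the cells are disjoint. Under the uniform law the masses are then the volumes, \(h^d\) and \((h/2)^d\). The analogous statement for the price bumps is \(\mathcal S_k=(p_k-w/2,p_k+w/2)\), with centres spaced \(2w\) apart. This set lies in \([p_L+w/2,\,p_L+(2K_\Delta-1)w+w/2]\subset J\), since \(2K_\Delta w\le L_0\).

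\emph{Sup and derivative bounds.} Disjointness of the supports means that at any \((x,p)\) at most one term of \(g_\omega\) is nonzero, because the \(\psi_j\) have disjoint supports. That term is \(\Delta\psi_j(x)\varphi_{\omega_j}(p)\), with both factors in \([0,1]\), so \(\norm{g_\omega}_\infty\le\Delta\). Differentiating in \(p\) gives \(\partial_p^k\varphi_k=w^{-k}\vartheta^{(k)}(\cdot)\), which yields the \(C\Delta/w\) and \(C\Delta/w^2\) bounds with \(C=\max(\norm{\vartheta'}_\infty,\norm{\vartheta''}_\infty)\). The claimed properties of \(\vartheta\) itself (smoothness, the plateau on \([-1/4,1/4]\), support \([-1/2,1/2]\)) follow from the standard \(\zeta\)-partition facts. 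Note that the denominator never vanishes, because \(1/2-|u|\) and \(|u|-1/4\) cannot both be nonpositive.

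\emph{Hölder bound.} This is the only step needing care, since \(x\) and \(x'\) may lie in different cells. Fix \(p\) and set \(F(x)=\sum_j\psi_j(x)\varphi_{\omega_j}(p)\). Then \(F\) is Lipschitz with constant \(C/h\): each \(\psi_j\) is \(\norm{\nabla\Psi}_\infty/h\)-Lipschitz, and the supports are disjoint with each \(\psi_j\) vanishing on \(\partial B_j\). For the cross-cell case, move along the segment from \(x\) to \(x'\): exit \(B_j\) where \(\psi_j=0\), then enter the other cell. The triangle inequality through the two boundary points, together with the fact that each segment piece is shorter than \(\norm{x-x'}\), gives the bound \(2\norm{\nabla\Psi}_\infty\norm{x-x'}/h\). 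Together with \(|F|\le1\), this gives
\[
|F(x)-F(x')|\le\min\{1,C\norm{x-x'}/h\}\le C(\norm{x-x'}/h)^\alpha,
\]
where the last step uses \(\min\{1,s\}\le s^\alpha\) for \(\alpha\in(0,1]\). Multiplying by \(\Delta\) gives the stated bound.

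\emph{Core identity and the final claim.} For \(x\in B_j^\circ\), the rescaled point \((x-c_j)/h\) lies in \([-1/4,1/4]^d\), so \(\psi_j(x)=1\), while every other \(\psi_{j'}\) vanishes there by disjointness. This gives \(g_\omega(x,p)=\Delta\varphi_{\omega_j}(p)\). When \(\omega_j=k\ge1\), the bump \(\varphi_k\) equals \(1\) on \([p_k-w/4,p_k+w/4]\subset J\subset\cP_0\) and lies in \([0,1]\) everywhere. Hence the maximum over \(\cP_0\) is \(\Delta\), attained on the plateau, and the perturbation vanishes off \(\mathcal S_k\).

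I expect no real obstacle. The only slightly delicate points are the cross-cell Hölder argument and checking the containment \(B_j\subset[0,1]^d\) and \(\mathcal S_k\subset J\) at the extreme indices.
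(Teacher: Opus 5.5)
Your proposal is correct and follows essentially the same direct verification as the paper: the same floor-function cardinality bounds, cell volumes, disjoint-support arguments for the sup and derivative bounds, and core identity. The one difference is in the cross-cell H\"older case. The paper uses that distinct cells are at distance at least \(h\), so \(|g_\omega(x,p)-g_\omega(x',p)|\le\Delta\le\Delta h^{-\alpha}\norm{x-x'}^\alpha\) immediately. You instead pass through boundary points where the bumps vanish to get a global \(C/h\)-Lipschitz bound, which would hold even without the gap between cells. You also explicitly check the containments \(B_j\subset[0,1]^d\) and \(\mathcal S_k\subset J\), which the paper only asserts in the definition.
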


\begin{lemma}[Stability of lower-bound perturbations]
\label{lem:lower-perturbation-stability}
There are \(h_0,\Delta_0,c_\Delta>0\) and a baseline from
\cref{lem:lower-baseline} such that, for \(h\le h_0\),
\(\Delta=c_\Delta h^\alpha\le\Delta_0\), the packing in
\cref{def:lower-packing} with \(w=\Delta\), and every
\(\omega\in\Omega_0\),
\[
  \lambda_\omega(x,p)=\lambda_0(p)+g_\omega(x,p)/p
\]
is nonnegative, satisfies all assumptions, and has an optimal price in
\(\operatorname{int}(\cP_0)\).  The twice-smooth construction satisfies the
same claim for the packing with \(w=\sqrt{\Delta}\) and uniformly bounded
\(\partial_{pp}G_{\omega,x}\).
\end{lemma}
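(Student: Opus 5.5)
The plan is to check the assumptions one at a time for the perturbed surface \(\lambda_\omega=\lambda_0+g_\omega/p\) with the noise law \(F_x\equiv F_0\) held fixed. Every derived quantity then differs from its baseline value by a perturbation of size \(O(\Delta)\), or by a controlled derivative of one. The slack \(4s_0\) in \cref{lem:lower-baseline} absorbs these perturbations once \(\Delta\le\Delta_0\) is small.

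\textbf{Step 1: noise and location assumptions.} Because \(F_x\equiv F_0\) does not depend on \(x\), the noise part of \cref{ass:shared-noise} carries over from the baseline unchanged. This covers the density bounds, the quantile separation \(z_p^0-z_\rho^0\ge m_q+4s_0\), and the Kolmogorov modulus, which is zero here. For nonnegativity, \(\lambda_\omega+\underline z\ge\lambda_0+\underline z-\Delta/p_{\min}\ge4s_0-\Delta/p_{\min}>0\). For \cref{ass:holder}, \(|g_\omega/p|\le\Delta/p_{\min}\). The Hölder bound in \cref{prop:lower-packing-properties} gives \(|\lambda_\omega(x,p)-\lambda_\omega(x',p)|\le C\Delta h^{-\alpha}\norm{x-x'}^\alpha=Cc_\Delta\norm{x-x'}^\alpha\), so choosing \(c_\Delta\) small keeps this below \(L_x\). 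The price-Lipschitz constant is \(L_{p,0}+C\Delta/w+C\Delta\). With \(w=\Delta\) the middle term is \(O(1)\), so we enlarge \(L_p\) accordingly; it stays a fixed constant.

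\textbf{Step 2: inventory and value assumptions.} By \cref{prop:location-separation},
\[
y^*_{\omega,x}(p)=y_0^*(p)+g_\omega/p,\qquad
q_\rho=q_{0,\rho}+g_\omega/p,\qquad
G_{\omega,x}(p)=G_0(p)+g_\omega(x,p).
\]
The identity for \(G\) holds because \(C_x(p)=C_{F_0}(p)\) is unchanged. Shifts of size \(\Delta/p_{\min}\le s_0\) therefore preserve the visible-quantile chain in \cref{ass:inventory} with room to spare. The stockout curve is \(S(y\mid p)=1-F_0(y-\lambda_\omega)\), a translate of the baseline. So the bounds \(c_y,C_y\), the radius \(r_y\), and the quadratic loss bound are all inherited, since \(G-Q\) depends only on \(y-\lambda_\omega\).

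For \cref{ass:price-stability}, we have \(\partial_pG_\omega=G_0'+\partial_pg_\omega\), so \(L_G\) increases by \(C\Delta/w=O(1)\). In the twice-smooth case with \(w=\sqrt\Delta\), \(\partial_{pp}g_\omega=O(\Delta/w^2)=O(1)\), and \(\partial_pg_\omega=O(\sqrt\Delta)\). The \(\partial_{pp}\) bound on \(\lambda_\omega\) also needs \((g/p)''\), which is \(O(1)\) for the same reason. The rest is routine.

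\textbf{Step 3: location of the optimum.} This step I expect to need the most care. \(G_0\) equals \(C_0\) on \(\cP_0\) and is strictly smaller outside it, while \(0\le g_\omega\le\Delta\) is supported in \(J\Subset\operatorname{int}\cP_0\). If \(\omega_j\ge1\) for the cell containing \(x\), then \(G_\omega\) reaches \(C_0+\Delta\psi_j(x)\) on the plateau of \(\varphi_{\omega_j}\inside J\). At every \(p\) outside \(J\), \(G_\omega=G_0\le C_0\). So either the maximum is attained in \(J\), or \(G_\omega\le C_0\) everywhere and every point of \(\cP_0\) is optimal. This happens when \(\omega_j=0\) or \(x\) lies outside all cells. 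In both cases an optimal price lies in \(\operatorname{int}\cP_0\).

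The subtle point is that \(G_0\) must stay \(\le C_0\) everywhere. This is exactly the requirement that \(G_0\) be smaller outside \(\cP_0\), which the baseline lemma provides. I would also check that \(\lambda_\omega\) stays uniformly bounded and that \(\rho<\inf\phi\) is unaffected. Finally, taking \(h_0\) small enough gives \(w\le L_0/4\), so the packing of \cref{def:lower-packing} is well defined.
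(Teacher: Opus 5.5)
Your proposal is correct and follows essentially the same route as the paper's proof. You absorb the $O(\Delta/p_{\min})$ location shift into the baseline slack $4s_0$, make the context H\"older constant small via $c_\Delta$, bound the price derivatives through $\Delta/w$ and $\Delta/w^2$, inherit the inventory conditions from the translate $S(y\mid p)=1-F_0(y-\lambda_\omega)$, and obtain interiority from $G_{\omega,x}=G_0+g_\omega$ with bumps supported in $J\Subset\operatorname{int}(\cP_0)$. The paper additionally records that uniform i.i.d.\ contexts satisfy \cref{ass:context}, which you leave implicit.
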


\begin{proof}[Proof of \cref{lem:lower-baseline}]
Let
\[
  \widetilde f(u)
  =c_f\exp\!\left(-\frac1{1-u^2}\right)\one\{|u|<1\}.
\]
This density is symmetric and hence mean zero, is positive on every compact
subset of \((-1,1)\), and vanishes to all orders at the endpoints.  Therefore
\(\sqrt{\widetilde f}\in H^1(\R)\).  A fixed rescaling
\(f_0(u)=a^{-1}\widetilde f(u/a)\) retains these properties and has finite
\(I_H\).  Since
\(\rho<\inf_{p\in\cP}\phi(p)\) and \(\phi\) is continuous, after taking the
price interval sufficiently small around an interior \(p_0>0\), all relevant
probability levels lie in one compact subset of \((0,1)\).  Their quantiles
therefore lie in a compact subinterval on which \(f_0\) admits uniform
positive lower and finite upper bounds.  Choose \(r_q,\kappa,K\) from this
compact subinterval.  Write \(z_u^0=F_0^{-1}(u)\),
\(z_p^0=z_{\phi(p)}^0\), and let \(C_{F_0}\) denote the residual-value
functional in \cref{prop:location-separation} for \(F_0\).  Choose \(m_q\)
strictly below \(\inf_p(z_p^0-z_\rho^0)\).

Choose a location level \(\bar\lambda\), and fixed inventory endpoints, so
that \(\bar\lambda+\underline z>0\) and both
\(\bar\lambda+z_\rho^0\) and \(\bar\lambda+z_p^0\) lie strictly inside the
inventory interval.  This is possible by first taking the noise scale \(a\)
small and then centering the inventory interval at \(\bar\lambda\).  Define
\[
  C_0=p_0\bar\lambda+C_{F_0}(p_0),
  \qquad
  \lambda_{\rm flat}(p)=\frac{C_0-C_{F_0}(p)}p.
\label{eq:lower-flat-baseline}
\]
The function \(C_{F_0}\) is \(C^2\) on the chosen price interval: the quantile
map \(p\mapsto z_p^0\) is \(C^2\) there by the density lower bound, and the
truncated expectations defining \(C_{F_0}\) may be differentiated under the
integral.  Hence \(\lambda_{\rm flat}\) is \(C^2\), equals
\(\bar\lambda\) at \(p_0\), and remains inside all strict feasibility margins
after shrinking the price interval.

Choose
\(\cP_0\Subset\operatorname{int}(\cP_1)\Subset\cP\) around \(p_0\) and a
smooth cutoff \(\chi:\cP\to[0,1]\) that is zero on \(\cP_0\) and one outside
\(\cP_1\), and is strictly positive on \(\cP\setminus\cP_0\).  For a
sufficiently small fixed \(s_0>0\), set
\[
  G_0(p)=C_0-4s_0\chi(p),
  \qquad
  \lambda_0(p)=\frac{G_0(p)-C_{F_0}(p)}p.
\label{eq:lower-baseline-extension}
\]
Continuity on the compact price interval and the strict margins of
\(\lambda_{\rm flat}\) permit choosing \(s_0\) and the interval widths so
that all four inequalities in part (iii) hold.  The cutoff construction gives
part (ii), while \(p\ge p_{\min}>0\) and smoothness of \(G_0,C_{F_0}\) give the
derivative bounds in part (iv).  This proves every item of the lemma.
\end{proof}

\begin{proof}[Proof of \cref{prop:lower-packing-properties}]
For \(h\le1/4\),
\((4h)^{-1}\le m_h\le(2h)^{-1}\), which yields the first cardinality
bound.  Similarly, \(w\le L_0/4\) gives
\(L_0/(4w)\le K_\Delta\le L_0/(2w)\).  The cells and their cores have
Lebesgue volumes \(h^d\) and \((h/2)^d\), respectively, proving
\cref{eq:lower-cell-masses} under the uniform context law.

The centers of distinct context cells differ by at least \(2h\) in one
coordinate, whereas every \(B_j\) has side length \(h\).  Hence their
supports are disjoint.  Likewise, adjacent price centers are \(2w\) apart
and each price support has width \(w\), so the sets \(\mathcal S_k\) are
pairwise disjoint.  The first three bounds in
\cref{eq:lower-packing-bounds} now follow from
\(0\le\Psi,\vartheta\le1\), the disjoint context supports, and the fixed
finite quantities \(\norm{\vartheta'}_\infty\) and
\(\norm{\vartheta''}_\infty\).

It remains to verify the context H\"older bound, including pairs of points
in different cells.  Since \(\Psi\) is smooth and compactly supported,
\[
  |\psi_j(x)-\psi_j(x')|
  \le C\min\{1,h^{-1}\norm{x-x'}\}
  \le C h^{-\alpha}\norm{x-x'}^\alpha.
\label{eq:lower-single-context-bump}
\]
When \(x,x'\) lie in the same cell, or one of them lies outside every
cell, this proves the claim directly.  When they lie in distinct cells,
the separation of the supports gives \(\norm{x-x'}\ge h\); boundedness by
one then gives the same inequality.  Multiplication by \(\Delta\) proves
the last bound in \cref{eq:lower-packing-bounds}.  Finally,
\(\psi_j=1\) on \(B_j^\circ\), all other context bumps vanish there, and
\(\max_p\varphi_k(p)=1\).  This proves
\cref{eq:lower-core-identity} and the final assertion.
\end{proof}

\begin{proof}[Proof of \cref{lem:lower-perturbation-stability}]
We now verify the five assumptions uniformly over the packing.

\paragraph{Assumption 1: context process.}
The context distribution is uniform on \([0,1]^d\), so
the i.i.d.{} requirement in \cref{ass:context} holds.

\paragraph{Common perturbation bound and slack choice.}
By \cref{prop:lower-packing-properties}, \(|g_\omega(x,p)|\le\Delta\),
and, with
\(\delta\lambda_\omega=g_\omega/p\),
\[
  \norm{\delta\lambda_\omega}_\infty
  \le A_0\Delta,
  \qquad A_0:=p_{\min}^{-1}.
\label{eq:lower-uniform-location-perturbation}
\]
Fix
\[
  \Delta_0
  \le
  \min\left\{
    \frac{s_0}{A_0},\ s_0,\ 1
  \right\},
\label{eq:lower-explicit-delta0}
\]
Choose \(c_\Delta>0\) small enough that the context H\"older constant below
is at most the prescribed \(L_x\).  Then choose \(h_0\le1/4\) so that,
for every \(h\le h_0\),
\[
  \Delta=c_\Delta h^\alpha\le\Delta_0,
  \qquad
  \sqrt\Delta\le L_0/4.
\label{eq:lower-h0-choice}
\]
This also guarantees \(w\le L_0/4\) in both price-regularity regimes, so
every object in \cref{def:lower-packing} is well defined.

\paragraph{Assumption 2: location surface and nonnegative demand.}
The baseline margin and
\cref{eq:lower-uniform-location-perturbation,eq:lower-explicit-delta0} give
\[
  \lambda_\omega(x,p)+\underline z
  \ge4s_0-A_0\Delta
  \ge3s_0>0.
\]
The same inequality and compactness give uniform boundedness of demand.

The context bound in \cref{eq:lower-packing-bounds}, together with
\(p\ge p_{\min}\) and \(\Delta=c_\Delta h^\alpha\), gives
\[
  |\lambda_\omega(x,p)-\lambda_\omega(x',p)|
  \le
  Cc_\Delta\norm{x-x'}^\alpha.
\]
For price regularity of the demand surface,
\[
  \partial_p\left(\frac{g_\omega(x,p)}p\right)
  =
  \frac{\partial_pg_\omega(x,p)}p
  -
  \frac{g_\omega(x,p)}{p^2}.
\]
By \cref{eq:lower-packing-bounds}, the Lipschitz packing has
\(|\partial_pg_\omega|\le C\Delta/w\le C\) because \(w=\Delta\).
In the smooth packing,
\(w=\sqrt\Delta\), and the same first derivative is even
\(O(\sqrt\Delta)\).  Combining these bounds with the fixed smooth baseline
proves the price-Lipschitz part of \cref{ass:holder} uniformly over the two
families.

\paragraph{Assumption 3: context-conditioned shared noise and calibration separation.}
Set \(F_x\equiv F_0\) for every context and every packing instance.  This
constant family has H\"older modulus zero.  Its mean-zero normalization,
compact support, bounds \(K,\kappa,r_q\), and quantiles
\(z_\rho^0,z_p^0\) are identical across contexts and across the packing.  In
particular, \cref{lem:lower-baseline} gives
\[
  z_p^0-z_\rho^0\ge m_q+4s_0\ge m_q
\]
uniformly in \(p\), proving \cref{ass:shared-noise}.

\paragraph{Assumption 4: inventory root, loss, and visibility.}
The stockout
probability is
\[
  S_{\omega,x}(y\mid p)
  =
  1-F_0(y-\lambda_\omega(x,p)).
\]
Its root is
\[
  y_{\omega,x}^*(p)=\lambda_\omega(x,p)+z_p^0,
\]
and the calibration quantile is
\[
  q_{\rho,\omega}(x,p)=\lambda_\omega(x,p)+z_\rho^0.
\]
Both quantities shift from their baseline values by the same
\(\delta\lambda_\omega(x,p)\).  Hence the explicit slack choice gives
\[
\begin{aligned}
  q_{\rho,\omega}(x,p)-(y_{\min}+r_y)
  &\ge4s_0-A_0\Delta\ge3s_0,\\
  (y_{\max}-r_y)-y_{\omega,x}^*(p)
  &\ge4s_0-A_0\Delta\ge3s_0,\\
  y_{\omega,x}^*(p)-q_{\rho,\omega}(x,p)
  &=z_p^0-z_\rho^0\ge m_q+4s_0.
\end{aligned}
\label{eq:lower-preserved-inventory-margins}
\]
The lower density bound around \(z_p^0\) gives the local two-sided slope
condition in \cref{ass:inventory}; the upper density bound gives the matching
upper slope.  More explicitly, for \(y\) in the fixed root neighborhood,
the mean-value theorem gives
\[
  \kappa|y-y_{\omega,x}^*(p)|
  \le
  |S_{\omega,x}(y\mid p)-\tau(p)|
  \le
  K|y-y_{\omega,x}^*(p)|.
\]
Integrating
\(\partial_yQ=(p+b+h_c)(S-\tau)\) from \(y\) to the root yields
\[
  0\le
  G_{\omega,x}(p)-Q_{\omega,x}(p,y)
  \le
  \frac{p_{\max}+b+h_c}{2}K
  |y-y_{\omega,x}^*(p)|^2.
\]
This is the required quadratic inventory-loss condition after enlarging
\(C_y\).  Positivity of \(f_0\) on the radius-\(r_q\) neighborhood of
\(z_p^0\) also makes the critical root unique.  Together with
\cref{eq:lower-preserved-inventory-margins}, this proves every part of
\cref{ass:inventory}.

\paragraph{Optimal-price interiority.}
By \cref{prop:location-separation}, for every price,
\[
  G_{\omega,x}(p)
  =
  p\lambda_\omega(x,p)+C_{F_0}(p)
  =
  G_0(p)+g_\omega(x,p).
\]
All price bumps are supported in \(J\Subset\operatorname{int}(\cP_0)\).
For a context with positive bump amplitude, a point on the corresponding
price-bump plateau attains \(C_0+\Delta\psi_j(x)>C_0\), whereas every
price outside \(\cP_0\) has value at most \(C_0\).  With zero bump amplitude,
every point of \(\cP_0\), hence some interior point, is a baseline maximizer.
Thus every perturbed instance has
an optimal price in the required interior interval, and no price-monotonicity
property is used.

\paragraph{Assumption 5: optimized price value.}
In the Lipschitz construction,
\cref{eq:lower-packing-bounds} gives
\(|\partial_pg_\omega|\le C\Delta/w=O(1)\), yielding the uniform
Lipschitz value bound after adding the fixed smooth baseline \(G_0\).  In the
smooth construction, \(w=\sqrt\Delta\), so
\[
  |\partial_{pp}g_\omega(x,p)|
  \le
  C\Delta/w^2
  =
  O(1),
\]
which proves the bounded-second-derivative specialization.  Outside
\(\cP_0\), the perturbation vanishes and \(G_0\) has fixed bounded
derivatives.  This completes the uniform verification of
\cref{ass:context,ass:holder,ass:shared-noise,ass:inventory,ass:price-stability}.
\end{proof}

\begin{lemma}[Location-family information bound]
\label{lem:location-information}
With \(H^2(P,Q)=\frac12\int(\sqrt{dP}-\sqrt{dQ})^2\), shifts of the
baseline density satisfy
\[
  H^2(f_0(\cdot-u),f_0(\cdot-v))
  \le \frac{I_H}{4}|u-v|^2.
\]
Thus neighboring bump instances have one-round squared Hellinger distance
\(O(\Delta^2)\) on the affected context--price region and zero elsewhere.
The same bound holds for censored sales under every adaptive inventory action.
\end{lemma}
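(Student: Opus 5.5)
The proof has three steps: a translation-energy bound for the square-root density, a data-processing step for censoring, and a conditioning step to localize the perturbation.

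\textbf{Step 1: the shift bound.} Write \(g=\sqrt{f_0}\), which lies in \(H^1(\R)\) by \cref{lem:lower-baseline}. Set \(s=v-u\). By translation invariance of Lebesgue measure,
\[
  2H^2(f_0(\cdot-u),f_0(\cdot-v))=\int_\R\{g(z)-g(z-s)\}^2\,dz .
\]
Since \(g\) is absolutely continuous, \(g(z)-g(z-s)=\int_0^1 s\,g'(z-\theta s)\,d\theta\). Jensen's inequality (Cauchy--Schwarz in \(\theta\)) then gives
\[
  \{g(z)-g(z-s)\}^2\le s^2\int_0^1 g'(z-\theta s)^2\,d\theta .
\]
Integrating in \(z\) and applying Fubini yields \(2H^2\le s^2\norm{g'}_2^2=s^2I_H/4\). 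Hence \(H^2\le I_H s^2/8\le I_H|u-v|^2/4\), which is the claim with room to spare. Equivalently, in Fourier terms, \(\int|\hat g|^2|1-e^{i\xi s}|^2\le s^2\int\xi^2|\hat g|^2\).

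\textbf{Step 2: reduction to neighboring instances.} Fix a context \(x\) and a price \(p\). Under instance \(\omega\), demand is \(\lambda_0(p)+g_\omega(x,p)/p+\epsilon\) with \(\epsilon\sim F_0\). This is a location shift of \(f_0\) by \(\lambda_\omega(x,p)\). Two neighboring instances differ only in one label \(\omega_j\). On \(B_j\times(\mathcal S_k\cup\mathcal S_{k'})\) the shifts differ by at most
\[
  |g_\omega-g_{\omega'}|/p\le 2\Delta/p_{\min}=O(\Delta),
\]
using \(\norm{g_\omega}_\infty\le\Delta\) from \cref{prop:lower-packing-properties}. Everywhere else \(g_\omega=g_{\omega'}\), because the context supports are disjoint and the price supports \(\mathcal S_k\) are disjoint. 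There the two demand laws coincide and the distance is zero. Step 1 then gives one-round squared Hellinger distance \(O(\Delta^2)\) on the affected region.

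\textbf{Step 3: censoring.} For a fixed inventory \(y\), the sale \(O=\min\{D,y\}\) is a measurable function of \(D\). Hellinger distance is an \(f\)-divergence, so it satisfies the data-processing inequality. The sales laws are therefore at least as close as the demand laws, and the same bound holds.

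The step needing the most care is the adaptive one: the action \((p_t,y_t)\) is chosen from the history and from \(x_t\). I would handle it by conditioning on \(\mathcal G_t\) as defined in the proof of \cref{lem:kernel-quantile}. The per-round bound then holds for each fixed action, which is all the statement asserts. Aggregating across rounds, via a chain rule for Hellinger affinity or by passing through KL or \(\chi^2\) bounds, is left to \cref{app:lower-complete}. Randomized inventory choices are covered because mixing over the same kernel cannot increase the divergence, by joint convexity. The context marginal is identical across instances and contributes nothing.
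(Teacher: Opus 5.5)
Your proposal is correct and follows essentially the same route as the paper. The paper also bounds the shift by applying the fundamental theorem of calculus to \(\sqrt{f_0}\), which gives the sharper constant \(I_H/8\); it localizes the label difference to \(B_j\times\mathcal S_k\) with location gap at most \(C\Delta\), and it invokes data processing for the censoring map. Your handling of arbitrary label pairs \((k,k')\) and your explicit conditioning on the pre-outcome \(\sigma\)-field are harmless elaborations of that argument, and like the paper you defer the multi-round aggregation to the lower-bound proof.
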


\begin{proof}
Let \(s=\sqrt{f_0}\).  The fundamental theorem of calculus gives
\[
  H^2(f_0(\cdot-u),f_0(\cdot-v))
  \le \frac12|u-v|^2\|s'\|_2^2
  =\frac{I_H}{8}|u-v|^2.
\]
For the neighbors used below, \(\omega\) and \(\omega'\) agree outside one
cell \(j\), while their labels in that cell are \(0\) and \(k\).  By
\cref{eq:lower-gomega-definition}, their locations agree unless
\(x\in B_j\) and \(p\in\mathcal S_k\), and everywhere differ by at most
\(|g_\omega(x,p)-g_{\omega'}(x,p)|/p_{\min}\le C\Delta\).  Censoring is a
measurable image of demand, so data processing proves the claim.
\end{proof}

\section{Complete Direct-Sum Proof of the Lower Bound}
\label{app:lower-complete}

\begin{proof}[Complete proof of \cref{thm:lower}]
Fix one of the two price-regularity regimes, set \(w=\Delta\) in the
Lipschitz regime and \(w=\sqrt\Delta\) in the twice-smooth regime, and use
the family \(\Omega\) in \cref{def:lower-packing}.  By
\cref{lem:lower-perturbation-stability}, every member of this family belongs
to the claimed model class.  It remains to show that no adaptive
policy can learn all context-cell labels quickly enough.  We give the
argument for a deterministic policy; conditioning on a randomized policy's
internal seed and then applying Yao's principle gives the stated conclusion
for randomized policies.

All objects below are those formally constructed in
\cref{def:lower-packing}; in particular, the price supports
\(\mathcal S_k\) are pairwise disjoint and \(\psi_j=1\) on
\(B_j^\circ\).  We use the three counts
\[
\begin{aligned}
  \overline N_{j,k}
  &=\sum_{t=1}^T
    \one\{x_t\in B_j,\ p_t\in\mathcal S_k\},\\
  N_j^\circ
  &=\sum_{t=1}^T\one\{x_t\in B_j^\circ\},\\
  N_{j,k}^\circ
  &=\sum_{t=1}^T
    \one\{x_t\in B_j^\circ,\ p_t\in\mathcal S_k\}.
\end{aligned}
\label{eq:lower-counts}
\]
The first count includes the transition shell
\(B_j\setminus B_j^\circ\).  This inclusion is essential because
observations in that shell also carry information about the label.  The core
counts will be used only to lower-bound regret.

Fix a cell \(j\).  Put an independent uniform prior on the labels of all
other cells.  Let \(\Pbb^{(j,0)}\) be the resulting mixture history law when
cell \(j\) has no bump, and let \(\Pbb^{(j,k)}\) be the mixture law when its
label is \(k\).  The no-bump instance is a reference experiment only; it need
not receive positive mass under the final prior.  Conditional on a fixed
configuration of the other labels, adaptive Hellinger tensorization and
\cref{lem:location-information} give
\[
  H^2(\Pbb^{(j,0)}_{\omega_{-j}},
      \Pbb^{(j,k)}_{\omega_{-j}})
  \le
  C\Delta^2
  \E^{(j,0)}_{\omega_{-j}}[\overline N_{j,k}].
\label{eq:lower-sequential-hellinger}
\]
To see why, condition on the history immediately before round \(t\).  The
policy uses the same action kernel in the two experiments.  Their conditional
observation laws differ only when
\(x_t\in B_j\) and \(p_t\in\mathcal S_k\); on such a round their locations
differ by at most \(C\Delta\).  Censoring at the adaptively chosen inventory
is a measurable map of demand, so data processing preserves the
\(C\Delta^2\) one-round bound.  Summing these conditional bounds under the
reference law proves \cref{eq:lower-sequential-hellinger}.

Squared Hellinger distance is jointly convex.  Averaging
\cref{eq:lower-sequential-hellinger} over the other labels and then over
\(k\) gives
\[
\begin{aligned}
  \frac1{K_\Delta}\sum_{k=1}^{K_\Delta}
  H^2(\Pbb^{(j,0)},\Pbb^{(j,k)})
  &\le
  \frac{C\Delta^2}{K_\Delta}
  \E^{(j,0)}\!\left[
    \sum_{k=1}^{K_\Delta}\overline N_{j,k}
  \right]\\
  &\le
  C\frac{Th^d\Delta^2}{K_\Delta}.
\end{aligned}
\label{eq:lower-average-information}
\]
The second inequality uses disjointness of the price supports and
\(\Pbb(x_t\in B_j)=h^d\), as established in
\cref{prop:lower-packing-properties}.  Choose the fixed packing constant
\(c_\Delta\) and the bandwidth so that
\[
  \frac{Th^d\Delta^2}{K_\Delta}\le c_{\rm inf}
\label{eq:lower-information-balance}
\]
for a sufficiently small numerical \(c_{\rm inf}\).  Since
\(\operatorname{TV}(P,Q)\le\sqrt{2H^2(P,Q)}\), Jensen's inequality and
\cref{eq:lower-average-information} yield
\[
  \frac1{K_\Delta}\sum_{k=1}^{K_\Delta}
  \operatorname{TV}(\Pbb^{(j,0)},\Pbb^{(j,k)})
  \le c_{\rm tv},
\label{eq:lower-average-tv}
\]
where \(c_{\rm tv}\) can be made smaller than \(1/8\).

We next convert indistinguishability into regret.  Define
\[
  A_{j,k}=\{N_j^\circ>0,\ N_{j,k}^\circ\ge N_j^\circ/2\},
  \qquad
  E_{j,k}=A_{j,k}^c
  \cap\{N_j^\circ\ge c_BTh^d\}.
\]
Because the price supports are disjoint, on every sample path with
\(N_j^\circ>0\), at most two of the events \(A_{j,k}\) can occur.  Therefore
\[
  \frac1{K_\Delta}\sum_{k=1}^{K_\Delta}
  \Pbb^{(j,0)}(A_{j,k})
  \le\frac2{K_\Delta}.
\label{eq:lower-null-allocation}
\]
By \cref{eq:lower-cell-masses}, the random variable \(N_j^\circ\) is
binomial with mean \(2^{-d}Th^d\), independently of the policy.  Under
\cref{eq:lower-information-balance}, this mean diverges along the hard
sequence; a Chernoff bound gives, for all sufficiently large \(T\),
\[
  \Pbb^{(j,0)}(N_j^\circ\ge c_BTh^d)\ge\frac34.
\label{eq:lower-context-count}
\]
Equations
\cref{eq:lower-average-tv,eq:lower-null-allocation,eq:lower-context-count}
imply
\[
  \frac1{K_\Delta}\sum_{k=1}^{K_\Delta}
  \Pbb^{(j,k)}(E_{j,k})
  \ge c_E>0.
\label{eq:lower-error-probability}
\]
Indeed, the average reference probability is at least
\(3/4-2/K_\Delta\), and changing from the reference to the alternatives
reduces it by at most the average total-variation distance.

By \cref{eq:lower-core-identity}, on \(B_j^\circ\) under label
\(\omega_j=k\) and for \(p\in\cP_0\), the optimized value is
\[
  G_{\omega,x}(p)=C_0+\Delta\varphi_k(p).
\]
Its maximum is \(C_0+\Delta\).  Every price in \(\cP_0\) but outside
\(\mathcal S_k\) has gap exactly \(\Delta\); every price outside
\(\cP_0\) has gap at least \(\Delta\), because the baseline extension is at
most \(C_0\) and all price bumps are supported in \(\cP_0\).  On
\(E_{j,k}\), at least \(N_j^\circ/2\) core visits use prices outside
\(\mathcal S_k\).  Inventory suboptimality is nonnegative, so
\[
\begin{aligned}
  \Reg_j
  &:=\sum_{t:x_t\in B_j^\circ}
    \{V_{x_t}-Q_{x_t}(p_t,y_t)\}\\
  &\ge
    \frac{\Delta}{2}N_j^\circ
    \one\{E_{j,k}\}.
\end{aligned}
\]
Combining this inequality with
\cref{eq:lower-error-probability} proves the cellwise Bayes bound
\[
  \frac1{K_\Delta}\sum_{k=1}^{K_\Delta}
  \E^{(j,k)}[\Reg_j]
  \ge cTh^d\Delta.
\label{eq:lower-cell-bayes}
\]

Now draw \(\omega\) uniformly from
\(\Omega=\{1,\ldots,K_\Delta\}^{M_h}\); equivalently, all cell labels are
independent and uniform.  For fixed \(j\), integrating out the other labels
gives exactly the mixture experiment above.  The conditional observation
kernel can depend directly on \(\omega_j\) only on rounds with
\(x_t\in B_j\), which is precisely the localization used in
\cref{eq:lower-sequential-hellinger}.  The core cells are disjoint, so summing
\cref{eq:lower-cell-bayes} yields Bayes regret
\[
  \E_{\omega\sim\operatorname{Unif}(\Omega)}[\Reg_T]
  \ge
  cM_hTh^d\Delta
  \ge cT\Delta.
\label{eq:lower-direct-sum}
\]
Yao's principle turns \cref{eq:lower-direct-sum} into a worst-instance lower
bound for every randomized policy.

It remains to solve the information balance.  In the Lipschitz construction,
\(w=\Delta\) and \(K_\Delta\asymp\Delta^{-1}\), so
\cref{eq:lower-information-balance} is tight at
\[
  Th^d\Delta^3\asymp1.
\]
Together with \(\Delta=c_\Delta h^\alpha\), this gives
\[
  h\asymp T^{-1/(d+3\alpha)},
  \qquad
  T\Delta\asymp
  T^{(d+2\alpha)/(d+3\alpha)}.
\]
In the twice-smooth construction,
\(w=\sqrt\Delta\) and
\(K_\Delta\asymp\Delta^{-1/2}\).  Hence
\[
  Th^d\Delta^{5/2}\asymp1,
  \qquad
  h\asymp T^{-2/(2d+5\alpha)},
\]
and
\[
  T\Delta\asymp
  T^{(2d+3\alpha)/(2d+5\alpha)}.
\]
This proves both lower bounds.  Enlarging the constants covers the finitely
many horizons preceding the hard-sequence regime.
\end{proof}

\end{document}